\documentclass[sigconf]{acmart} 
\AtBeginDocument{%
  }

\usepackage{booktabs}      
\usepackage{tabularx}      
\usepackage{xspace} 

\usepackage{graphicx}
\usepackage{textcomp}
\usepackage{xcolor}
\usepackage[T1]{fontenc}

\usepackage{enumerate}
\usepackage{adjustbox}
\usepackage{hyperref}
 \usepackage{url}
\usepackage[noend]{algpseudocode}
\usepackage{amsmath} 
\usepackage{amsthm}
\usepackage{tabularx}
\usepackage{tikz}
\usepackage{multirow}
\usepackage{threeparttable}
\usetikzlibrary{calc, positioning, arrows}
\usepackage[most]{tcolorbox}
\usepackage{subcaption}
\usepackage{array}

\usepackage{algorithmicx}
\usepackage[ruled,vlined]{algorithm2e}
\usepackage{float}
\usepackage{pgf-umlsd}
\usepackage{autobreak}

\makeatletter
\def\old@comma{,}
\catcode`\,=13
\def,{%
  \ifmmode%
    \old@comma\discretionary{}{}{}%
  \else%
    \old@comma%
  \fi%
}
\makeatother

\setcopyright{none}
\renewcommand\footnotetextcopyrightpermission[1]{}
\newcommand{\saga}{SAGA\xspace}
\newcommand{\sagaplus}{MAGIQ\xspace}
\newcommand{\provider}{\texttt{Provider}\xspace}

\newcommand{\masterw}{{\em orchestrator}\xspace}
\newcommand{\tmsg}{{\em task-msg}\xspace}
\newcommand{\Tmsg}{{\em Task-msg}\xspace}

\newcommand{\remove}[1]{}

\newcommand{\ases}{{$A$-session}}
\newcommand{\cses}{{$C$-session}}
\newcommand{\caler}{{$A_I$}}
\newcommand{\caled}{{$A_R$}}
\newcommand{\master}{{$A_I^*$}}

\newcommand{\Nmaster}
{$Q^*_{tot}$}
\newcommand{\Tmaster}{$\Delta^*_{tot}$}

\newcommand{\Fases}{$\mathcal{F}_{\mathsf{Ases}}$} 
\newcommand{\Fcses}{$\mathcal{F}_{\mathsf{Cses}}$} 

\begin{document}


\title{\sagaplus: A Post-Quantum Multi-Agentic AI Governance System with Provable Security} 




\author{Sepideh Avizheh}
\affiliation{%
  \institution{University of Calgary}
  \city{Calgary}
  \state{Alberta}
  \country{Canada}
}

\author{Tushin Mallick}
\affiliation{%
  \institution{Northeastern University}
  \city{Boston}
  \state{MA}
  \country{USA}
}

\author{Alina Oprea}
\affiliation{%
  \institution{Northeastern University}
  \city{Boston}
  \state{MA}
  \country{USA}
}

\author{Cristina Nita-Rotaru}
\affiliation{%
  \institution{Northeastern University}
  \city{Boston}
  \state{MA}
  \country{USA}
}

\author{Reihaneh Safavi -Naini}
\affiliation{%
  \institution{University of Calgary}
  \city{Calgary}
  \state{Alberta}
  \country{Canada}
}

\renewcommand{\shortauthors}{}


\begin{abstract} 
Our computing ecosystem is being transformed by two emerging paradigms, the increased deployment of agentic AI systems and the advancements in quantum computing. With respect to agentic AI systems, one of the most critical problems is creating secure governing architectures that ensure agents follow their owner’s communication and interaction policies, and can be held accountable for the messages they exchange with other agents. With respect to quantum computing, existing systems must be retrofitted and new cryptographic-based mechanisms must be designed to ensure long-term security and quantum-resistance. In fact, NIST recommends that standard public-key cryptographic algorithms, including RSA, Diffie–Hellman (DH), and elliptic-curve constructions (ECC), be deprecated starting in 2030 and disallowed after 2035.  

In this paper we create \sagaplus, a framework for policy definition and enforcement of multi-agentic AI systems using novel highly efficient quantum-resistant cryptographic protocols with proved security guarantees. \sagaplus (i) allows users to define rich communication and access control policy budgets for agent-to-agent sessions and tasks,
with global budgets for one-to-many agent sessions; (ii) enforces such policies using post-quantum cryptographic primitives;
 (iii) supports session-based enforcement of policies for agent-to-agent and one-to-many agent sessions; and (iv) provides accountability of agents to their users in the form of message attribution. We formally model and prove correctness and security of  the system using the Universal Composability (UC) framework. We evaluate the computation and communication overhead of our framework and compare it with the state of the art agentic AI framework \saga. \sagaplus is the first step towards post-quantum secure solutions 
for agentic AI systems.

\end{abstract}

\maketitle

\section{Introduction}

Our society is being transformed by the emergence of agentic AI systems. Agents are
now deployed across domains such as finance, healthcare, customer support, and software engineering,  to autonomously plan and execute complex, multi-step tasks with minimal human intervention.
One of the most critical problems in agentic AI deployment \cite{shavit2023practices} is creating secure governing architectures that ensure defining
unique identification  of AI agents, authenticating the agents in their interactions,  ensuring agents follow their owner’s communication and interaction policies, and can be held accountable for
their interactions with other agents.

Previous work on agentic governance has focused on defining 
requirements for agent identities~\cite{chan_ids_2024}, 
capability taxonomies~\cite{agntcy_agentdirectory_2025}, 
introducing attribution~\cite{chan_infrastructure_2025}, authorization with delegation capabilities~\cite{south_authenticated_2025}, or indexing~\cite{nanda_index_2025}. These proposals remain theoretical without implementation, empirical evaluation, or provable guarantees.
Protocols like Agent2Agent (A2A) ~\cite{a2a_2025} and Model Context Protocol~(MCP)~\cite{mcp_registry_2025} promote agent interoperability
but neither provides policy enforcement mechanisms or runtime mediation of agent interactions, 
leaving them vulnerable to documented attacks~\cite{a2a_attacks_2025,survey_mcpsattack_2025}. The only 
practical system focused on ensuring agents follow policies created by their users is \saga~\cite{saga_ndss2026}.  \saga enforces access control policies based on an access control token generated by the agent allowing access for the agent requesting access, with the help of a centralized \provider managing a registry of users and agents. \saga's policies 
are for agent-to-agent interactions, and consist of user-defined authorized communication budgets of incoming requests per agent.
While \saga  provides formal (ProVerif) proofs for the basic security properties of the communicated messages, it does not provide a security model for policy definition and enforcement. In addition, 
\saga protocols use classical public key cryptographic primitives.

Recent advancements in quantum computing coupled with the Shor's~\cite{shor_quantum_1994} and Grover's~\cite{Grover1996} algorithms
raise serious concerns about the use of public key cryptographic primitives in systems and weakening of symmetric-primitives. 
Draft NIST IR 8547~\cite{moody2024nistir8547} (Transition to Post-Quantum Cryptography Standards) recommends deprecating quantum-vulnerable algorithms (RSA, ECDSA, EdDSA, DH, ECDH) by 2030 and fully retiring them by 2035.
To help with the transition, NIST released several PQC standards---FIPS~203~\cite{NIST-pqc-standard-fips203}, FIPS~204~\cite{NIST-pqc-standard-fips204}, and FIPS~205~\cite{NIST-pqc-standard-fips205} in August 2024. Two more drafts are under development for a digital signature based on FALCON~\cite{fouque2020falcon}, and a key encapsulation algorithm based on 
the Hamming Quasi-Cyclic (HQC), both expected to have drafts in 2026.

Given the 
growing deployment and reliance 
on agentic AI systems, and rapid development of quantum technologies  driven by global investment in the field, 
it is essential
to design post-quantum secure governance systems that remain secure in the presence of an adversary with access to a quantum computer. 

One solution to creating a post-quantum secure governance system is to replace \saga's underlying cryptographic primitives with post-quantum counterparts. However, this approach would incur significant communication and computation overhead and, more importantly, would neither address the limitations in policy definition nor provide formal security guarantees for policy enforcement.

\textbf{Our work.}
In this paper we introduce 
\sagaplus, a framework for policy definition and enforcement in multi-agentic AI systems using novel highly efficient quantum-resistant cryptographic protocols with proved security guarantees. 
The goal of \sagaplus is to define and enforce policies that are meaningful with respect to agent-semantics, i.e. tasks they need to solve. 
While agents can communicate securely over PQ-TLS, these transport level messages
are decoupled from task semantics and too low-level to capture agent-level interactions. 

We introduce
two application-level communication abstractions which we refer to as \emph{A-session} and \emph{C-session}. An \ases \ captures 
one-to-one communication between two agents, whereas a \cses\ captures
one-to-many 
communication between a coordinating agent and a set of other agents. We define the unit of communication  on these sessions as \tmsg. A \tmsg consists of a structured message that combines intent, data (payload), context, and agent identity, so the receiving agent can understand, verify, and act on the request.  
These two types of sessions are used to define and enforce policies.

MAGIQ cryptographically enforces four types of policies for an agent: 
\textcircled{1} the access control list with respect to agents receiving and initiating contact,
\textcircled{2} the (maximum) number of {\em A-sessions} that an agent can initiate, or respond to, 
\textcircled{3} the (maximum) number of \tmsg an agent can send in an {\em A-session}, taking the role of initiating or receiving party, and 
\textcircled{4} the duration of time an agent can be active for a task. These policies prevent access from unauthorized agents, mitigate attacks from malicious agents, and provide protections against different forms of denial-of-service attacks.
In addition, because each \tmsg is individually counted and cryptographically linked to a one-time signature/token, our policies provide attribution with respect to each interaction between agents. 

Cryptographic policy enforcement in an \ases\ relies on two mechanisms.
A digitally signed session-authorization token 
enforces the user-defined policy on authorized contactable agents and the allowed number of \ases. Two hash chains, one constructed by the 
agent that initiates the session, \caler\, and signed by its owner, 
and one constructed by the agent that responds to the request,
\caled, enforce the agents' respective \tmsg\ policies in the \ases. 
Policy enforcement for a \cses \ relies on the policy enforcement of its \ases, together with an efficiently computed membership token  for the session.
To summarize our contributions:
\begin{itemize}
\item We propose \sagaplus, a system that (i) allows users to define rich communication and access control policy budgets, (ii) enforces such policies using cryptographic primitives that are post-quantum resistant, 
and (iii) provides accountability of agents in the form of \tmsg attribution to their users.
\item We provide a UC-based formal model for policy-defined and policy-enforced secure agent communication, and prove that the \sagaplus\ protocols securely realize it.
\item We evaluate the overhead of \sagaplus and show that it is practical when compared to the state-of-the art  solution \saga that uses classical cryptography. 
\end{itemize}

The rest of the paper is organized as follows. Section \ref{sec:bk} provides background on governance for agentic AI systems and post-quantum primitives. Section \ref{sec:attackmodel} describes the attacker model we assume. Section \ref{sec:sagaplus} describes the design of the system and Section \ref{sec:model} its security. Section \ref{sec:results} presents evaluation results. Section \ref{sec:relwork} presents related work and Section \ref{sec:conclusion} concludes our work.

\section{Background and Problem Statement}
\label{sec:bk}

\subsection{Governance for AI Agentic Systems}
\begin{figure} [t]
    \centering
    \includegraphics[width=\columnwidth]{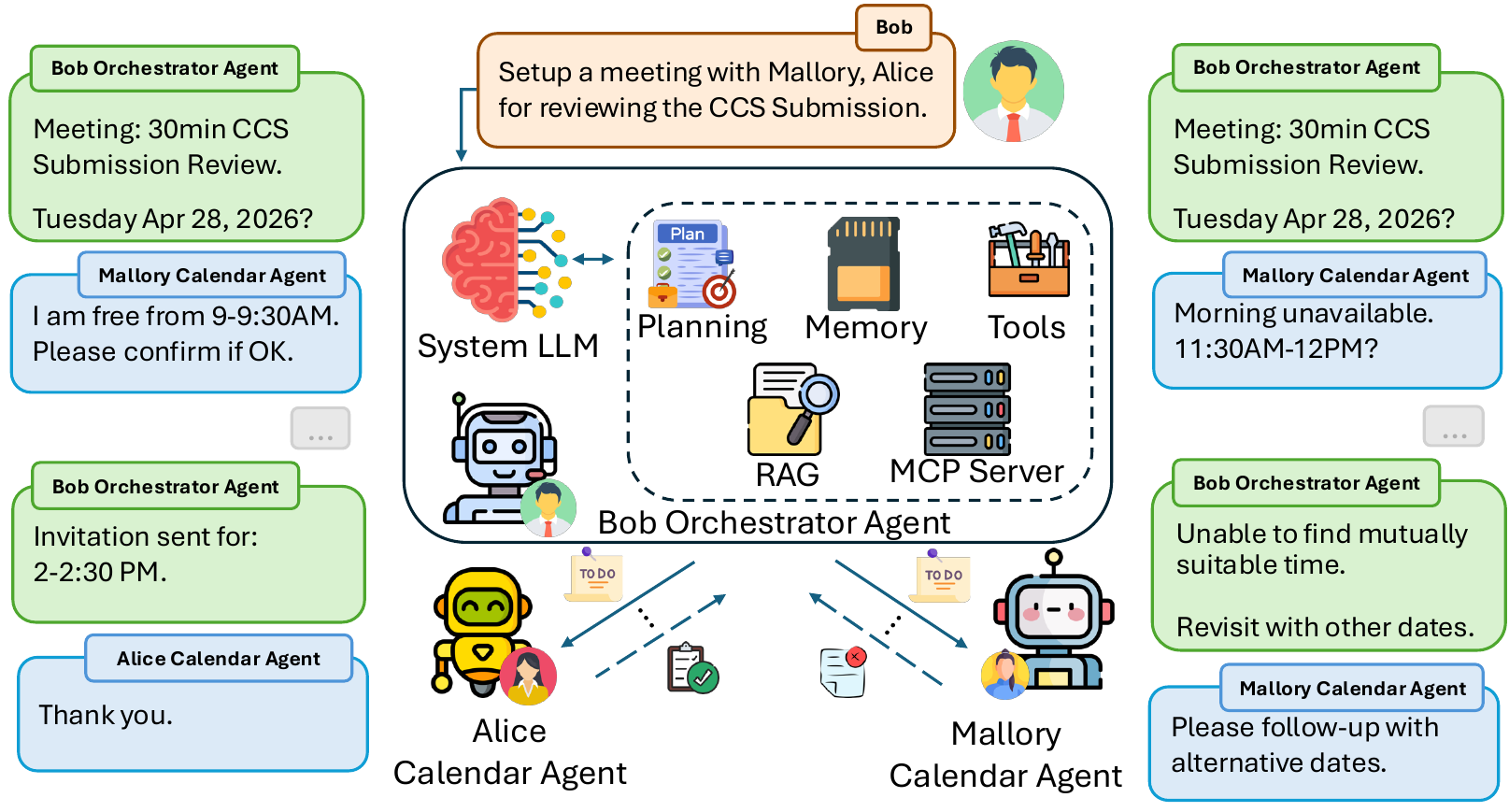}
    \caption{Example of a multi-agent coordination.}
    \label{fig:multi_agent_interaction}
    \Description{To avoid warning.}
\end{figure}

An agentic AI system is a system composed of one or more autonomous agents that perceive, reason, and act—individually or collaboratively—to achieve tasks with minimal human intervention (see Figure \ref{fig:multi_agent_interaction}). 
To accomplish their tasks, agents leverage multiple capabilities: tools,  external information accessed through web browsing, and coordination with other AI agents. Multi-agent frameworks (AutoGen~\cite{autogen}, MetaGPT~\cite{metagpt}) provide support for  multiple agents to coordinate  with the help of an orchestrator that determines what capabilities are needed for a task and then partitions the task for each individual agent that has the corresponding capability.

A critical aspect of an agentic AI system is how agents discover each other. This requires identities for agents and some form of agent registry. Such registry and discovery can be centralized, like in \saga\cite{saga_ndss2026}, or distributed like in AGNTCY \cite{agntcy_agentdirectory_2025}. The enhanced functionality of AI agents broadens the attack surface of agentic systems, introducing multiple security risks such as the use of malicious tools, access to untrusted data, and interaction with adversarial agents. While solutions against malicious tools and untrusted data have been proposed~\cite{ace_ndss2026,CaMeL}, very few solutions provide resilience against malicious agents~\cite{saga_ndss2026}. 


One of the primary concerns with autonomous agents is that they may go rogue and act against their users' interests. Several works \cite{shavit2023practices} therefore argue for continuous user control and oversight throughout the agent lifecycle. This implies that verifiable agent identities~\cite{chan_ids_2024} alone are not sufficient — these identities must be cryptographically bound to their users, and users must retain  authority to control their agents' behavior through flexible policies.

Another important question is what should the users control? Users should control who can find and access their agents (as in \saga), to mitigate certain forms of denial of service and limit the number of interactions or duration of interaction between agents. This provides protection against abuses and support for agent revocation if needed, but is not sufficient. Agent interactions should be attributable to the agents and the users who own them. Consider the scenario in which a malicious agent asks a benign agent to execute an action that is part of  writing some attack code. The request from the malicious agent must be attributable to it, such that the benign agent proves its innocence and the malicious interaction (task request) is bound to the malicious user and its agent that can be legally responsible for the action.  Secure communication protocols like TLS are not sufficient to provide these services as they focus on transport-level semantics and not task-level semantics. Thus, there is a need for policies that can capture agent-level semantics and that are under the control of the user owning the agents.


\subsection{Post-Quantum Primitives}

Post quantum primitives are cryptographic constructions that maintain their security even in the presence of an attacker who has access to a quantum computer. Below we describe the post-quantum primitives we use in this work.

\noindent\textbf{Cryptographic hash functions}. 
A hash function is a deterministic map $H: \{0,1\}^*\rightarrow \{0,1\}^{\ell(\lambda)}$,
where the domain consists of bitstrings of \emph{arbitrary length}, and the range is the set of $\ell(\lambda)$ bitstrings whose $\lambda $ is the security parameter. In our construction, depending on the component, the hash function must satisfy one of the standard properties, {\em pre-image resistance}, {\em second pre-image resistance,} or {\em collision resistance}. An example is SHA-256. 

\noindent\textbf{Hash chain.} 
Hash chains were first used in secure micropayment systems~\cite{rivest1996payword}.
A hash chain is constructed by iteratively applying a cryptographic hash
function $H$ to an initial value $s_0$, obtaining $
    s_i = H(s_{i-1}) \quad \text{for } 1 \leq i \leq n .
$
The terminal value $s_n$ serves as a commitment to the chain. A payment unit is
spent by revealing a preimage of the current committed value.

\noindent\textbf{HMAC.} HMAC is a {\em message authentication code} that relies on a keyed hash function and provides message integrity and authenticity. HMAC is constructed as: $H(k\oplus opad, (H(k \oplus ipad), m))$, where $m$ is the message, $k$ is the key, and opad and ipad are outer padding and inner paddings used to construct two different keys from $k$ for the outer and inner hash, that have a large hamming distance from each other. HMAC-SHA256 uses SHA-256 as the hash function.

\noindent\textbf{Pseudorandom function (PRF).}  A PRF is a family of efficiently computable functions
$F : K \times D \to R$, where $K=\{0,1\}^{\lambda}$ and
$D,R$ are efficiently samplable domains depending on the security parameter
$\lambda$, such that $F_k(\cdot)$ for random $k \leftarrow K$ is
computationally indistinguishable from a random function from $D$ to $R$.

\noindent\textbf{Merkle tree.} 
A Merkle tree is a binary tree built using a \textit{collision-resistant}
hash function over leaves $D=\{s_1,\ldots,s_n\}$. Each internal node is the
hash of the concatenation of its two children, and the root is denoted by
$Mroot$.
{\em To prove that $s_i$ is a leaf of the tree }rooted at $Mroot$, a
\textit{Merkle proof} is given, consisting of the sibling hash values along
the path from $H(s_i)$ to $Mroot$. The proof size is logarithmic in the
number of leaves.

\noindent\textbf{Post-quantum signatures (PQ signature).} PQ-signature schemes remain secure against adversaries with quantum computing capabilities. Hash-based signatures are PQ signatures whose security relies only on properties of hash functions.  

A hash-based signature scheme is a tuple
$HS=(HS.Gen,HS.Sign,HS.Verify)$, where
$HS.Gen(1^\lambda)$ outputs $(sk,pk)$,
$HS.Sign_{sk}(m)$ outputs a signature $\sigma$ for
$m\in\mathcal{M}$, and
$HS.Verify_{pk}(m,\sigma)$ outputs $1$ iff $\sigma$ is valid on $m$
under $pk$. Security is defined via existential unforgeability under
chosen-message attacks (EUF-CMA), as in classical signature schemes.

XMSS~\cite{buchmann2011xmss} is a \emph{stateful hash-based signature}
scheme in which the signer tracks used one-time signing keys and ensures each key is used at most once. XMSS is secure if the underlying hash functions satisfies standard properties such as second-preimage resistance and pseudorandomness, and is standardized in RFC~8391 \cite{rfc8391}.

\noindent\textbf{PQ-TLS.}  
TLS is a standardized secure communication
protocol that allows applications to communicate over a network while providing
authentication, confidentiality, and integrity for transmitted messages
\cite{gajek2008universally}. PQ-TLS replaces or augments
the core public-key primitives used in TLS, such as key exchange mechanisms and
signature schemes, with post-quantum secure ones, e.g., ML-KEM for key
establishment and ML-DSA for authentication, to provide security against
adversaries with quantum capabilities~\cite{ietf-draft-uta-pqc-app-00}. 

\subsection{Problem Statement}
We consider an agentic AI system consisting of a \textit{trusted~\provider } (P) and \textit{Users} ($U$) who own \textit{Agents} ($A$) and instruct them to solve tasks that  require interaction with other agents. 
Agents are assumed to have trusted direct access to an LLM engine.
We assume that the agents can collaborate in one-to-one and 
one-to-many settings to solve a task. We refer to the agent 
initiating the communication as \caler, and to the agent responding, as \caled. In the case of the one-to-many  interactions, we refer to the initiating agent as an \masterw. 

Our goal is to design a system that provides verifiable agent identities bound to their users, enforces user-defined  policies for agent control, and ensures attributability of agent interactions. For an agent $A$, a policy specifies which agents it may contact for service, which agents may request service from it, the allowed number of such contacts, and \emph{budgets} governing the communication.

Specifically, a user’s agent $A$ receives a task from its owner together with an interaction policy and associated budgets. Acting as \caler, the agent communicates with other agents to perform the task. The system guarantees that such communication takes place only within authorized, mutually authenticated interactions that respect the assigned budgets of the participating agents. The exchanged messages are uniquely attributable to the sender. 

Identity verification, policy enforcement, and attributability are achieved using cryptographic mechanisms.  
As the deadline to retire traditional cryptographic primitives is approaching, our solution takes into account attackers with quantum capabilities. All security guarantees are required to be formally provable: all guarantees in our framework are established cryptographically, except those involving time, which rely on an external trusted network clock.

\section{Assumptions and Attacker Model}
\label{sec:attackmodel}
In this section we describe our assumptions and attacker model.

\subsection{Assumptions} 
\textbf{Cryptographic assumptions.}
Entities have public and private keys that are used for authentication, and \tmsg attribution. Users and the \provider have their public key certified by a trusted {\em certificate authority (CA),} while agents' public keys are provided and  certified by their owner.
All cryptographic primitives provably satisfy their stated security properties. Entities communicate using mutually authenticated channels that are implemented using post-quantum secure TLS (PQ-TLS). The CA correctly authenticates entities (including distinguishing humans from bots) and binds public keys to them. Honest entities protect their secret keys and local state. Entities have secure randomness when required. Entities access a trusted, sufficiently synchronized network clock; time-based enforcement relies on it. User-side policy checks and
signing operations are performed by user-authorized software, protected by
trusted hardware.

\noindent\textbf{Trust assumptions.}
$CA$ and {\em Users} are trusted. \provider\ is {\em honest-but-curious} and remains available for the protocol steps assigned to it.  It follows the protocol  but can infer information about users
and agents' interactions. While we minimize this information when possible, we do no explicitly model and analyze the leakage to the \provider. {\em Agents} may be corrupted before communicating with other agents, in which case they may behave arbitrarily.

\subsection{Attacker Model} 
We consider that all processing and communication in the system are classical. The adversary, however, is quantum polynomial-time (QPT) and has access to a quantum computer. Its goal is to violate the security guarantees of the system. Specifically, we  consider an  adversary with the following capabilities:
\begin{itemize}
\item 
It fully controls the network: it can eavesdrop on, intercept, replay, delay, drop, and inject messages, except as prevented by the security of the employed cryptographic primitives.
\item 
It is {\em monolithic}: it coordinates all corrupted agents, which may share secret keys and internal states.
\item
It may corrupt agents statically. 
Upon corruption, it obtains the corrupted agent's secret keys and internal state, and corrupted agents may thereafter deviate arbitrarily from the protocol. 
\item
It may clone or replicate a corrupted agent on the same or on different devices. Replicated instances may share the secret keys and internal state of the original agent. If such replicas can appear as distinct identities, this enables Sybil-style attacks.
\end{itemize}

Network-layer attacks such as DoS are out of scope.\\

\section{\sagaplus~Design}
\label{sec:sagaplus}

In this section we describe \sagaplus. First we provide an overview of the system then we describe the system in details.

\subsection{Overview} 

\sagaplus~is an agentic AI governance system which allows agents to collaborate with each other in a one-to-one or one-to-many  settings to  perform a task. Users define agents’ policies in terms of budgets that govern different aspects of their interactions with other agents. 
There are three key aspects of \sagaplus's system design: \textcircled{1} the communication abstractions, \textcircled{2} the policies that users define over these abstractions, and \textcircled{3} the cryptographic mechanisms used to enforce them. 
An underlying
design principle, including the choice of cryptographic primitives, is
efficiency, so that security does not become a bottleneck. 

\textbf{(1) Communication abstractions.}
The goal of \sagaplus is to define and enforce rich policies that reflect agent-level semantics, capturing constraints on tasks that they are
expected to perform. While agents communicate over PQ-TLS, these transport level messages are too low-level and decoupled from task semantics. We introduce
two application-level communication abstractions which we refer to as \emph{A-session} and \emph{C-session}. An \ases \ models 
one-to-one communication between two agents, whereas a \cses\ models
one-to-many communication between an \masterw 
agent and a set of other agents. These two types of sessions are used to define and enforce policies. We define the unit of communication  on these sessions as \tmsg to distinguish it from transport-level messages. A \tmsg consists of a structured message that combines intent, data (payload), context, and header information, including cryptographic constructs, enabling  
the receiving agent 
to understand, verify, and act on the request. An \emph{A-session} may last over one or more PQ-TLS sessions. A \emph{C-session} is a {\em composed session}, consisting of multiple \ases\ composed sequentially or concurrently, and treated as a single policy-enforced unit.
%


{\bf (2) Policies.} 
The number of \tmsg\ and the session duration are natural user-defined control
parameters: they bound the cost incurred by agents and provide a measure of
protection against denial-of-service attacks, both for initiating and responding
agents. We therefore allow users to define policies over these parameters. 
For one-to-many setting, the policy controls interaction and duration across multiple orchestrated sessions. Specifically, 
users define policies that govern {\em the number of} \tmsg sent by each agent (\tmsg budget), and the {\em time duration of their activation } in the session  (time budget), for an {\em A-session} or {\em C-session}. 
An \ases\ that is initiated by an \caler\ may terminate due to exhaustion of the budgets of either of the two agents, or explicit termination by the \caler. In an one-to-many setting, the {\em \masterw} initiates a
one-to-many \cses \ that has an overall associated \tmsg and time budget.

\sagaplus enforces four types of policies for an agent $A$:
\begin{enumerate}
\item \emph{Agent Authorized Contact List ($ACL_A$)}
consists of two lists: an \emph{Authorized \caled\  list}, determining the  agents that can be contacted by $A$, and an \emph{Authorized \caler\ list}, which identifies the agents for which $A$ can be a responder.

\item  {\em \ases \ Participation Budget} specifies the (maximum) number of \ases\ that an agent can initiate, or respond to. 

\item {\em \tmsg Budget of an agent $A$, denoted by $Q_A$, } determines the (maximum) number of \tmsg that an agent can send in an \ases, taking  the role of \caler\ or \caled.

\item {\em Time budget}. An agent $A$ is activated by the user or a \tmsg.
The time budget $\Delta_A$ of an agent $A$ 
specifies the time duration that $A$ can remain activated. The budget is enforced by (uncorrupted) agents.
\end{enumerate}
 
  {\em Effective policy of an \ases: }
 Both  \caler\ and \caled \ in an \ases \ can be corrupted.  
 An \ases\ policy is defined and enforced {\em if at least one of the two agents is honest.} There are two cases:
 \begin{itemize}
\item {\em \caler \ and \caled\ are honest}:  the effective message budget for the \ases \ is  $Q_{I,R}= min \{Q_I, Q_R\}$, where $Q_I$ and $Q_R$ denote message budgets of $A_I$ and $A_R$, respectively. The {\em effective time budget of the session} is $\Delta_{I,R}=min\{\Delta_I, \Delta_R\}$, where $\Delta_I$ and $\Delta_R$ are the time budget of initiator and the responding agent respectively.

\item {\em \caler \ or \caled\ is honest:} the enforced message budget is $Q_{I,R}= min \{Q_I, Q_R\}$.  The effective time budget of the \ases\ is upperbounded by the time budget of the honest agent, that is, $\Delta_{I,R}=\Delta_h$, where $\Delta_h$ is the time budget of the honest agent $h \in \{I,R\}$.
 \end{itemize}
  The enforced policy of a \cses\ is obtained by aggregating the policies of its constituent \ases \, subject to the restriction that the total message budget across all component \ases \ is bounded by the message budget of the \cses, and the time budgets of all component \ases \ lie within the time budget of the \cses.

%

{\bf (3) Policy enforcement.} 
The $ACL_A$ policy is enforced by the \provider during agent discovery, i.e. when an agent contacts the \provider to obtain information and credentials about another agent. The time budget is enforced by the receiving agent by keeping a clock and comparing current time with timestamps attached to the communication. The number of sessions is enforced by counting the ongoing sessions. 
Finally, \tmsg policies of the \caler\ and the \caled\ are enforced using digitally signed \emph{hash chains} \cite{rivest1996payword}, which enable secure cryptographic counting of \tmsg within an \ases.
Session transcript integrity, ensuring the integrity of the messages and their ordering in an \ases\ that may span multiple PQ-TLS sessions, is achieved using HMACs that effectively chain the communicated messages.

{\em Using hash chains for efficient \Tmsg counting.}  
Inspired by the 
use of hash chains for secure micro-payment systems \cite{rivest1996payword},
we use hash chains to enforce \tmsg \ 
count budgets, as outlined below. Starting from a random seed $s_0$, an agent  computes a chain $s_i = H(s_{i-1})$ for $1 \leq i \leq n$, and publishes the terminal value $s_n$ as a commitment to the chain ( the terminal value will be signed by the  user software). Each message consumes one chain element, by revealing the pre-image of the current committed value. 
Correctness is verified by checking that applying $H$ to the revealed value yields the previously committed chain value.
Each agent in an \ases\ constructs a hash chain corresponding to its own \tmsg budget, allowing the other agent to verify cryptographically that it remains within that budget.
There are subtleties in using hash chains for secure \tmsg counting: a hash chain must be authorized by the user and bound to a single \ases, so that a corrupted \caler\ cannot reuse the chain in other \ases. 
(See Appendix \ref{Appendix:policyEnforce} for details of specializing a hash chain to a specific \ases.)

{\em C-session policy enforcement.} 
A \cses\ is initiated by an \master\ who receives a task, with a total message budget \Nmaster\ and time budget \Tmaster\ for \cses.
The budgets can be distributed among multiple \caled\ in their associated \ases.
For simplicity, we focus on {\em static work plans} where the \master\ plans the task and identifies the set of \caled\ that must be contacted before starting the session.
The construction, however, can be extended to the dynamic case, where the next \caled \ agent is  determined based on the previous \ases\ (See Appendix \ref{Appendix:policyEnforce} for details).
\master\ initiates an \ases\ with each planned \caled, assigning to each a \tmsg budget and a time budget, such that the total \tmsg budget and the time budgets across all \ases \ remain within the specified bounds, the total message budget \Nmaster \ and total time budget \Tmaster, respectively. 

A direct approach to message counting token generation for \cses~is to divide the \Nmaster \ among the $m$ planned sessions and execute the \ases\ token generation for each. This  requires $m$ signature computation. 
We use a Merkle tree to aggregate the terminal values of  $m$ hash-chains $s_n^1, s_n^2\cdots s_n^m$, into a single value, $Mroot$, that will be  appended by the chain length and digitally signed by the user. Here the hash chain length $n$, satisfies \Nmaster$=m\times n$. 
The initial token of the $j$th \ases \ (with $A_j$ as the responder) of the \cses \ will include this aggregate token, the terminal value of the j$th$ hash-chain and a {\em Merkle proof} that the terminal value is an aggregated leaf in the Merkle tree, whose root is signed.

\noindent\textbf{\sagaplus protocols.}
\sagaplus consists of the following protocols: user and agent registration, and inter-agent communication which in turn, consists of agent discovery and \ases \ (or \cses) protocols. Below we describe each of the protocols.
(For notations see Table \ref{tab:notation} in Appendix \ref{App:notations}.)

\subsection{User Registration}
We illustrate user registration in Figure~\ref{fig:UserReg}. Users first register with the \provider and subsequently
register their agents.
Each user generates a public–private key pair for a hash-based signature
scheme and obtains a certificate for the public key from a Certificate
Authority (CA). We assume that the \provider verifies the user’s real-world
identity using an external identity service, such as OpenID Connect.

\begin{enumerate}

    \item \textbf{User account setup.} The user selects a public identifier 
    $\mathit{uid}_U$ (e.g., \texttt{alice@domain.com}) and a secret password 
    $\mathit{Pwd}$ for authentication with the provider.

    \item \textbf{User key generation.} The user generates two key pairs: an 
    identity key pair $(sk_U, pk_U)$ used to sign agent information, and a 
    PQ-TLS key pair $(sk^{\mathit{tls}}_U, pk^{\mathit{tls}}_U)$. The user 
    then obtains from the $\mathcal{CA}$ the certificates:
    \[
        \mathit{Cert}^{\mathit{ID}}_U = \mathsf{GenCert}_{sk_{\mathcal{CA}}}(\mathit{uid}_U, pk_U)
    \]
    \[
        \mathit{Cert}^{\mathit{tls}}_U = \mathsf{GenCert}_{sk_{\mathcal{CA}}}(\mathit{uid}_U, pk^{\mathit{tls}}_U)
    \]

    \item \textbf{Connection establishment.} The user retrieves from 
    $\mathcal{CA}$ the provider's public keys $(pk^{\mathit{tls}}_{\mathit{TA}},\, 
    pk_{\mathit{TA}})$ along with their certificates 
    $(\mathit{Cert}^{\mathit{ID}}_{\mathit{TA}},\, 
    \mathit{Cert}^{\mathit{tls}}_{\mathit{TA}})$, and verifies them. 
    Here $pk^{\mathit{tls}}_{\mathit{TA}}$ is the provider's PQ-TLS key and 
    $pk_{\mathit{TA}}$ is its identity key used to sign user and agent 
    information. A PQ-TLS session is then established between the user and 
    provider using $pk^{\mathit{tls}}_{\mathit{TA}}$ and 
    $pk^{\mathit{tls}}_U$.

    \item \textbf{Sending user information.} Over the established session, the 
    user submits $(\mathit{uid}_U, \mathit{Pwd})$ and 
    $\mathit{Cert}^{\mathit{ID}}_U$ to the provider.

    \item \textbf{User identity verification.} The provider verifies the 
    user's identity via an external service $\mathcal{S}$ (e.g., OpenID 
    Connect). Upon successful verification, if no account exists for 
    $\mathit{uid}_U$, the provider finalizes the registration.

    \item \textbf{Account storage and confirmation.} The provider records the 
    user's entry in its registry:
    \[
        D_U[\mathit{uid}_U] \leftarrow \langle\, H(\mathit{Pwd}),\ 
        \mathit{Cert}^{\mathit{ID}}_U\, \rangle
    \]
    and sends a confirmation to the user.

\end{enumerate}
After the user registration is completed successfully, the user can proceed to register its agents with the \provider.

\remove{

\begin{figure}
    \centering
    \includegraphics[width=0.6\columnwidth]{Figures/UserReg.jpg}
    \caption{User registration flow}
    \label{fig:UserReg}
\end{figure}
}

\begin{figure}
\centering
\Description[User Registration]{User Registration with the provider.}
\resizebox{\columnwidth}{!}{
\begin{sequencediagram}
    \newthread{U}{:User ($U$)}
    \newinst[2]{CA}{:CA}
    \newinst[2]{TA}{:Provider ($TA$)}
    \postlevel
    \begin{call}{U}{\shortstack{ID $KeyGen()$\\ PQ-TLS $KeyGen()$}}{U}{\shortstack{$(sk_U,pk_U)$\\ $(sk^{tls}_U, Pk^{tls}_U)$}}
    \end{call}
    \postlevel
    \begin{call}{U}{$GenCert(uid_U,pk_U, pk^{tls}_U)$}{CA}{$Cert^{ID}_U, Cert^{tls}_U$}
    \end{call}
    \begin{call}{U}{$(uid_U,Pwd,Cert^{ID}_U)$}{TA}{Registered}
    \end{call}
\end{sequencediagram}
}
\caption{User registration.}
\label{fig:UserReg}
\end{figure}

\subsection{Agent Registration}
The agent registration ensures that each agent is cryptographically bound to its user and a specific user's device (see Figure \ref{fig:AgentReg}).

\begin{enumerate}

    \item \textbf{Generating agent information.} The user selects an identifier 
    $\mathit{name}_A$ for the agent, forming a unique agent ID in combination 
    with their username:
    \[
        \mathit{aid}_A = \mathit{uid}_U : \mathit{name}_A
    \]
    The user specifies the agent's device name $\mathit{device}_A$, IP address 
    $\mathit{IP}_A$, and port $\mathit{port}_A$, forming the agent's endpoint 
    descriptor:
    \[
        \mathit{ED}_A = \langle\, \mathit{device}_A,\ \mathit{IP}_A,\ 
        \mathit{port}_A\, \rangle
    \]

    \item \textbf{Generating cryptographic keys.} The user generates the 
    following keys for the agent:
    \begin{itemize}
        \item \emph{PQ-TLS credentials} $(sk^{\mathit{tls}}_A, 
        pk^{\mathit{tls}}_A)$ for establishing secure channels with other 
        agents, with a CA-signed certificate:
        \[
            \mathit{Cert}^{\mathit{tls}}_A = 
            \mathsf{GenCert}_{sk_{\mathcal{CA}}}(\mathit{aid}_A,\, 
            pk^{\mathit{tls}}_A)
        \]
        \item \emph{An identity key pair} $(sk_A, pk_A)$ for authentication 
        and authorization, signed by the user:
        \[
            \sigma^{U}_{\mathit{ID}} = 
            \mathsf{HS.Sign}_{sk_U}(\mathit{aid}_A,\, pk_A)
        \]
    \end{itemize}
    The user additionally signs the agent's endpoint and key material, 
    including the provider's public keys to bind the agent to the specified 
    provider:
    \[
        \sigma^{U}_{A} = \mathsf{HS.Sign}_{sk_U}\!\left(\mathit{aid}_A,\; 
        \mathit{ED}_A,\; pk^{\mathit{tls}}_A,\; pk^{\mathit{tls}}_{\mathit{TA}},\; 
        pk_{\mathit{TA}}\right)
    \]

    \item \textbf{Specifying the contact policy.} The user specifies the 
    agent's contact policies $\mathit{CP}_A$, $\mathit{RCP}_A$, and 
    $\mathit{ICP}_A$.

    \item \textbf{User authentication to provider.} The user establishes a 
    PQ-TLS connection with the provider and authenticates by submitting 
    $\langle \mathit{uid}_U, \mathit{Pwd} \rangle$. The provider verifies the 
    credentials and proceeds if successful.

    \item \textbf{Registration submission.} The user submits to the provider:
    \begin{itemize}
        \item agent information $(\mathit{aid}_A,\, \mathit{ED}_A,\, 
        \mathit{CP}_A)$,
        \item PQ-TLS certificate $\mathit{Cert}^{\mathit{tls}}_A$ and identity 
        key $pk_A$,
        \item signatures $\sigma^{U}_{\mathit{ID}}$ and $\sigma^{U}_{A}$.
    \end{itemize}
    The agent stores locally the private keys $(sk^{\mathit{tls}}_A, sk_A)$ 
    corresponding to the public keys submitted to the provider.

    \item \textbf{Provider verification.} The provider processes the 
    registration by checking that $\mathit{aid}_A$ and $\mathit{ED}_A$ are 
    globally unique, verifying $\mathit{Cert}^{\mathit{tls}}_A$, and checking 
    the signatures:
    \[
        \mathsf{HS.Verify}_{pk_U}\!\left(
            \langle\, \mathit{aid}_A,\; \mathit{ED}_A,\; pk^{\mathit{tls}}_A,\; 
            pk^{\mathit{tls}}_{\mathit{TA}},\; pk_{\mathit{TA}}\, \rangle,\; 
            \sigma^{U}_{A}
        \right)
    \]
    \[
        \mathsf{HS.Verify}_{pk_U}\!\left(
            \langle\, \mathit{aid}_A,\; pk_A\, \rangle,\; 
            \sigma^{U}_{\mathit{ID}}
        \right)
    \]

    \item \textbf{Completion.} If verification passes, the provider stores the 
    following in the agent registry, where the agent's metadata is:
    \[
        M_A = \{\, \mathit{ED}_A,\ \mathit{Cert}^{\mathit{tls}}_A,\ 
        pk^{\mathit{tls}}_A,\ pk_A\,\}
    \]
    and the registry entry is:
    \[
        D_A[\mathit{aid}_A] \leftarrow \langle\, \mathit{uid}_U,\ M_A,\ 
        \mathit{CP}_A,\ \sigma^{U}_{\mathit{ID}},\ \sigma^{U}_{A}\, \rangle
    \]
    associating agent $A$ with user $U$. The provider then signs the agent's 
    information:
    \[
        \sigma^{\mathit{TA}}_{A} = \mathsf{HS.Sign}_{sk_{\mathit{TA}}}\!\left(
            \mathit{aid}_A,\; \mathit{Cert}^{\mathit{tls}}_A,\; \mathit{ED}_A,\; 
            pk_A,\; \sigma^{U}_A
        \right)
    \]
    and returns it as confirmation to the user. The user stores 
    $\sigma^{\mathit{TA}}_{A}$ in the agent's local registry for use when 
    initiating agent communication, completing the registration.

\end{enumerate}

\remove{
\begin{figure}[t]
    \centering
    \includegraphics[width=0.9\columnwidth]{Figures/AgentReg.jpg}
    \caption{Agent registration flow}
    \label{fig:AgentReg}
\end{figure}
}

\begin{figure}
\centering
\resizebox{\columnwidth}{!}{
\begin{sequencediagram}
\usetikzlibrary{fit}
    \newthread{U}{:User ($U$)}
    \newinst[2]{CA}{:CA}
    \newinst[2]{TA}{:Provider ($TA$)}
    \newinst[2]{A}{:Agent ($A$)}
    \postlevel
    \callself{U}{\shortstack{ID $KeyGen()$\\ PQ-TLS $KeyGen()$}}{U}
    \postlevel
    \begin{call}{U}{$GenCert(aid_A,pk^{tls}_A)$}{CA}{$Cert^{tls}_A$}
    \end{call}
    \begin{call}{U}{$(uid_U,Pwd)$}{TA}{Authenticated}
    \postlevel
    \end{call}
    \prelevel \prelevel
    \callself{TA}{Verify $(uid_U,Pwd)$}{TA}
    \postlevel
    \postlevel
    \begin{call}{U}{$(aid_A,ED_A, CP_A, Cert^{tls}_A, pk^{tls}_A, pk_A, \sigma^{U}_{ID}, \sigma^{U}_{A})$}{TA}{Pass, $\sigma^{TA}_{A}$}
    \postlevel
    \end{call}
    \prelevel \prelevel
    \callself{TA}{$HS.Sign()$}{TA}
    \postlevel
    \postlevel
    \begin{call}{U}{$(M_A,sk^{tls}_A,sk_A,CP_A, \sigma^{U}_{ID}, \sigma^{U}_{A}, uid_U, Cert^{ID}_{TA}, cert^{ID}_{U})$}{A}{Complete}
    \postlevel
    \end{call}
    \prelevel\prelevel
    \callself{A}{$Store()$}{A}
    \postlevel
\end{sequencediagram}
}
\caption{Agent registration}
\label{fig:AgentReg}
\Description{To avoid warning.}
\end{figure}

\subsection{Agent-to-Agent Intercommunication}
\label{sec:one-to-one}

We use the {\em following notations} 
to describe the agent's policies: (i) Contact Policy (CP)  specifies, for each agent $A_i$, its authorized \ases\ peers and the maximum
number of \ases\ it may establish with them. 
(ii) Initiator Contact Policy (ICP) declares the \caler \ 
\tmsg budget and time budget for interacting with an \caled
(in the case of two-agent setting, ICP is defined for an \ases, and in the one-to-many setting ICP is defined as the total budget for the \cses), and (iii) Responder Contact Policy (RCP) defines the \caled\ \tmsg budget and time budget for interacting with an \caler \ in a \ases.
Below we assume that all communication is over PQ-TLS and omit the details of secure channel establishment. 

\subsubsection{Agent discovery.} Before establishing an {\em A-session}, an \caler \ must first obtain information about the responding agent \caled \  by contacting the \provider. The goal of this protocol is to allow  \caler \ to obtain authenticated information about the identity and the policies of \caled. The \provider\ checks whether communication between \caler\ and \caled\ is
authorized under $CP_{A_I} \cap CP_{A_R}$ and, if eligible, returns signed
information for \caled \
(see Figure \ref{fig:AgentProv}). The specific steps are:
    
\textbf{Step 1: Retrieval.}
Agent $A_I$ requests permission to contact agent $A_R$ by sending 
both identities, $\mathit{aid}_{A_I}$ and $\mathit{aid}_{A_R}$, to the Provider.
The Provider verifies mutual authorization between the two agents and 
checks that $\mathit{Counter}[\mathit{aid}_{A_R}][\mathit{aid}_{A_I}] > 0$.
If so, the Provider returns $A_R$'s access information together with a 
signature $\sigma^{\mathsf{TA}}_{\mathsf{ac}}$ over both agents' data.

The returned access information includes: user $U_R$'s identity certificate 
$\mathit{Cert}^{ID}_{U_R}$; agent $A_R$'s device and network information 
$\mathit{ED}_{A_R}$; $A_R$'s transport-layer key and certificate 
$(\mathit{Cert}^{\mathsf{tls}}_{A_R}, pk^{\mathsf{tls}}_{A_R})$; 
$U_R$'s signature on $A_R$'s identity key $\sigma^{U_R}_{ID}$; 
$U_R$'s signature on $A_R$'s full information $\sigma^{U_R}_{A_R}$; 
and the initiating agent's identity $\mathit{aid}_{A_I}$ and key $pk_{A_I}$.

$A_R$'s metadata is defined as:
\[
    M_{A_R} = \bigl\{\,
        \mathit{ED}_{A_R},\;
        \mathit{Cert}^{\mathsf{tls}}_{A_R},\;
        pk^{\mathsf{tls}}_{A_R},\;
        pk_{A_R}
    \,\bigr\}
\]

The Provider's signature is computed as:
\[
    \sigma^{\mathsf{TA}}_{\mathsf{ac}} = \mathsf{HS.Sign}_{sk_{\mathsf{TA}}}\!\bigl(
        \nu,\;
        \mathit{Cert}^{ID}_{U_R},\;
        \mathit{aid}_{A_R},\;
        M_{A_R},\;
        \sigma^{U_R}_{ID},\;
        \sigma^{U_R}_{A_R},\;
        \mathit{aid}_{A_I},\;
        pk_{A_I}
    \bigr)
\]

Finally, the Provider decrements $\mathit{Counter}[\mathit{aid}_{A_R}][\mathit{aid}_{A_I}]$ by one.

\textbf{Step 2: Verification.}
Agent $A_I$ verifies that $A_R$'s user certificate $\mathit{Cert}^{ID}_{U_R}$, 
including the public key $pk_{U_R}$, is valid. $A_I$ then verifies the 
following three signatures:
\[
    \mathsf{HS.Verify}_{pk_{U_R}}\!\bigl(
        \langle \mathit{aid}_{A_R},\, \mathit{ED}_{A_R},\, pk^{\mathsf{tls}}_{A_R},\,
        pk^{\mathsf{tls}}_{\mathsf{TA}},\, pk_{\mathsf{TA}} \rangle,\;
        \sigma^{U_R}_{A_R}
    \bigr)
\]
\[
    \mathsf{HS.Verify}_{pk_{U_R}}\!\bigl(
        \langle \mathit{aid}_{A_R},\, pk_{A_R} \rangle,\;
        \sigma^{U_R}_{ID}
    \bigr)
\]
\[
    \mathsf{HS.Verify}_{pk_{\mathsf{TA}}}\!\bigl(
        \langle \mathit{Cert}^{ID}_{U_R},\, \mathit{aid}_{A_R},\, M_{A_R},\,
        \sigma^{U_R}_{ID},\, \sigma^{U_R}_{A_R},\,
        \mathit{aid}_{A_I},\, pk_{A_I} \rangle,\;
        \sigma^{\mathsf{TA}}_{\mathsf{ac}}
    \bigr)
\]

\begin{figure}
\centering
\resizebox{\columnwidth}{!}{
\begin{sequencediagram}
    \newthread{AI}{:Agent $A_I$}
    \newinst[8]{TA}{:Provider $TA$}
    \begin{call}[2]{AI}{$(aid_{A_I}, aid_{A_R})$}{TA}{$(info_{A_R}, \sigma^{TA}_{ac}), info_{A_R}=[aid_{A_R},Cert^{ID}_{U_R},M_{A_R},\sigma^{U_R}_{ID}, \sigma^{U_R}_{A_R}]$}
    \end{call}
    \prelevel\prelevel
    \callself{TA}{$HS.Sign()$}{TA}
    \postlevel
    \postlevel
    \callself{AI}{\shortstack{$HS.Verify()$\\ $Verify(info_{A_R})$}}{AI}
    \postlevel
\end{sequencediagram}
}
\caption{Agent discovery}
\label{fig:AgentProv}
\Description{To avoid warning.}
\end{figure}

\subsubsection{A-session protocol.}
Once $A_I$ obtained $A_R$'s information can establish an {\em A-session}.
Let the \tmsg \ budgets of the \caler \ and the \caled \ be denoted by $n$ and $n'$, respectively.
Establishing an A-session consists of two phases: {\em handshake} and {\em transmission.} (see Figure \ref{fig:Asession}). 

\begin{figure*}
\centering
\resizebox{0.7\textwidth}{!}{
\begin{sequencediagram}
\usetikzlibrary{fit}
    \newthread{AI}{:Agent $A_I$}
    \newinst[10]{AR}{:Agent $A_R$}
    \begin{sdblock}{Handshake}{}
    \callself{AI}{$HS.Sign()$}{AI}
    \begin{call}[2]{AI}{$(m_0, \sigma^{TA}_{ac}, \sigma^{A_I}_{init}, \sigma^{U_I}_{ICP}), m_0=[r_1,info_{A_I}, Q_{I,R}, \Delta_{I,R},NextTok^{ICP}_0]$}{AR}{$(m_1, tag_1, \sigma^{U_R}_{RCP}), m_1=[r_2,Q_{R,I},\Delta_{R,I},NextTok^{RCP}_{1}, NextTok^{ICP}_{1}]$}
    \end{call}
    \prelevel\prelevel
    \callself{AR}{\shortstack{Run $HS.Verify()$\\Verify token $H()$\\ Run $HMAC()$}}{AR}
    \postlevel
    \end{sdblock}
    \begin{sdblock}{Message Transmission ($i > 1$)}{}
    \postlevel
    \callself{AI}{\shortstack{Verify token $H()$\\Run $HMAC()$}}{AI}
    \begin{call}[2]{AI}{$(m_i,tag_i), m_i=[req_{i},NextTok^{ICP}_{i}, NextTok^{RCP}_{i}]$}{AR}{$(m_{i+1}, tag_{i+1}), m_{i+1}=[res_{i+1},NextTok^{RCP}_{i+1}, NextTok^{ICP}_{i+1}]$}
    \end{call}
    \prelevel \prelevel
    \callself{AR}{\shortstack{Verify token $H()$\\Run $HMAC()$}}{AR}
    \postlevel
    \end{sdblock}
\end{sequencediagram}
}
\caption{A-session establishment}
\label{fig:Asession}
\Description{To avoid warning.}
\end{figure*}

 
{\em Handshake.}
The goal of the handshake phase is to establish an authorized, budgeted, and
cryptographically bound session between \caler\ and \caled. In this phase,
\caler\ presents the \provider-signed authorization token and its signed
commitment to the initiating hash chain, while \caled\ verifies these values,
sets its local message and time limits, commits to its responding hash chain,
and returns the corresponding signed policy information. The agents then derive
a shared session key from their exchanged randomness, which is used to protect
subsequent \tmsg\ in the session. The details steps during the handshake are:
    
\textbf{Step 1: Initiating the A-session.}
Agent $A_I$ generates a symmetric key $r_1$ and creates a personalized hash 
chain of length $Q_{I,R} = n'$ using the seed 
\[
    \rho_0 = \mathsf{PRF}(\mathit{sid},\, \mathit{aid}_{A_I},\, \mathit{aid}_{A_R},\, \mathit{addr})
\]
where $\mathit{addr} = 0$ for the first chain used in this A-session. The chain is:
\begin{align*}
    \rho_0, \quad
    \rho_1 &= H(\rho_0,\, 1,\, \mathit{sid},\, \mathit{aid}_{A_R}), \\
    \rho_2 &= H(\rho_1,\, 2,\, \mathit{sid},\, \mathit{aid}_{A_R}), \\
           &\;\vdots \\
    \rho_{n'} &= H(\rho_{n'-1},\, n',\, \mathit{sid},\, \mathit{aid}_{A_R})
\end{align*}
$A_I$ also obtains the user's signature on the root of the chain:
\[
    \sigma^{U}_{ICP} = \mathsf{HS.Sign}_{sk_{U}}(\rho_{n'},\, Q_{I,R})
\]

$A_I$ then initializes two counters: $\mathit{ctr}'_{ICP}[\mathit{aid}_{A_I}][\mathit{aid}_{A_R}]$, 
set to $n'$, and $\mathit{ctr}'_{RCP}[\mathit{aid}_{A_R}][\mathit{aid}_{A_I}]$. 
It decrements $\mathit{ctr}'_{ICP}[\mathit{aid}_{A_I}][\mathit{aid}_{A_R}]$ by one.

Next, $A_I$ sets $\mathit{NextTok}^{ICP}_0 = (\rho_{n'},\, \rho_{n'-1})$ and constructs:
\[
    m_0 = (r_1,\, \mathit{info}_{A_I},\, Q_{I,R},\, \Delta_{I,R},\, \mathit{NextTok}^{ICP}_0)
\]
where $\mathit{info}_{A_I} = (\mathit{Cert}^{ID}_{U_I},\, \mathit{aid}_{A_I},\, M_{A_I},\, 
\sigma^{U_I}_{ID},\, \sigma^{U_I}_{A_I})$ is $A_I$'s own information, 
$Q_{I,R}$ is the interaction budget, and $\Delta_{I,R}$ is the time budget.

$A_I$ then signs $m_0$ using a hash-based signature scheme:
\[
    \sigma^{A_I}_{init} = \mathsf{HS.Sign}_{sk_{A_I}}(m_0)
\]
and sends $(m_0,\, \sigma^{\mathsf{TA}}_{ac},\, \sigma^{A_I}_{init},\, \sigma^{U_I}_{ICP})$ 
to $A_R$, where $\sigma^{U_I}_{ICP}$ is the user's signature on the policies defined in 
the ICP. Finally, $A_I$ computes the session expiry time as 
$t'_{exp} = \tau_{now} + \Delta_{I,R}$, where $\tau_{now}$ is the current time 
and $\Delta_{I,R}$ is the maximum allowed session duration.
    
 \textbf{Step 2: Verifying the first message.}
Agent $A_R$ verifies $\sigma^{\mathsf{TA}}_{ac}$ and $A_I$'s signature using:
\[
    \mathsf{HS.Verify}_{pk_{\mathsf{TA}}}\!\bigl(
        \langle \nu,\, \mathit{Cert}^{ID}_{U_R},\, \mathit{aid}_{A_R},\, M_{A_R},\,
        \sigma^{U_R}_{ID},\, \sigma^{U_R}_{A_R} \rangle,\;
        \sigma^{\mathsf{TA}}_{ac}
    \bigr)
\]
\[
    \mathsf{HS.Verify}_{pk_{A_R}}\!\bigl(m_0,\; \sigma^{A_I}_{init}\bigr)
\]
If verification passes, $A_R$ verifies the user's signature on the ICP policy:
\[
    \mathsf{HS.Verify}_{pk_{U_I}}\!\bigl(
        \langle \rho_{n'},\, n' \rangle,\; \sigma^{U_I}_{ICP}
    \bigr) = 1
\]
It then checks that $\mathit{NextTok}^{ICP}_0 = (\rho_{n'},\, \rho_{n'-1})$ is 
correct by verifying $\rho_{n'} = H(\rho_{n'-1},\, n',\, \mathit{sid},\, \mathit{aid}_{A_R})$.

$A_R$ initializes two counters, $\mathit{ctr}_{ICP}\allowbreak[\mathit{aid}_{A_R}]\allowbreak[\mathit{aid}_{A_I}]$ 
and $\mathit{ctr}_{RCP}\allowbreak[\mathit{aid}_{A_R}]\allowbreak[\mathit{aid}_{A_I}]$, to track the number 
of requests, setting each according to the policies of $A_I$ and $A_R$. It then 
decrements $\mathit{ctr}_{ICP}\allowbreak[\mathit{aid}_{A_R}]\allowbreak[\mathit{aid}_{A_I}]$ by one. 
It also checks the maximum time budget $\Delta_{R,I}$ it is allowed to allocate 
to $A_I$ as determined in the RCP, and computes the session expiry time as 
$t_{exp} = \tau_{now} + \Delta_{R,I}$. It stores $t_{exp}$ and 
$\mathit{ctr}_{RCP}\allowbreak[\mathit{aid}_{A_R}]\allowbreak[\mathit{aid}_{A_I}]$ as part of the session state.

$A_R$ then generates a seed 
$s_0 = \mathsf{PRF}(\mathit{sid},\, \mathit{aid}_{A_R},\, \mathit{aid}_{A_I},\, \mathit{addr})$
(where $\mathit{addr} = 0$ for the first chain) and builds a personalized hash 
chain of length $Q_{R,I} = n$:
\begin{align*}
    s_0, \quad s_1 &= H(s_0,\, 1,\, \mathit{sid},\, \mathit{aid}_{A_I}), \\
    s_2 &= H(s_1,\, 2,\, \mathit{sid},\, \mathit{aid}_{A_I}), \\
        &\;\vdots \\
    s_{n} &= H(s_{n-1},\, n,\, \mathit{sid},\, \mathit{aid}_{A_I})
\end{align*}
It also obtains the user's signature on the root of the chain:
\[
    \sigma^{U_R}_{RCP} = \mathsf{HS.Sign}_{sk_{U_R}}(s_{n},\, Q_{R,I})
\]

Next, $A_R$ generates a random value $r_2$ and computes a session key 
$k_{ses} = H(r_1,\, r_2)$, used throughout all communications with $A_I$ 
during the A-session. It constructs the response message:
\[
    m_1 = (r_2,\, Q_{R,I},\, \Delta_{R,I},\, \mathit{NextTok}^{RCP}_1,\, \mathit{NextTok}^{ICP}_1)
\]
where $\mathit{NextTok}^{ICP}_1 = \rho_{n'-1}$ is the token received from $A_I$, 
and $\mathit{NextTok}^{RCP}_1 = (s_{n},\, s_{n-1})$ is the next token it has generated.
It computes an authentication tag $\mathit{tag}_1 = \mathsf{HMAC}_{k_{ses}}(m_1)$ 
and sends $(m_1,\, \sigma^{U_R}_{RCP},\, \mathit{tag}_1)$ back to $A_I$.
Finally, $A_R$ decrements $\mathit{ctr}_{RCP}[\mathit{aid}_{A_R}][\mathit{aid}_{A_I}]$ 
by one and removes $s_{n}$ from storage.
    
    {\em Transmission.} 
After the handshake, \caler\ and \caled\ exchange \tmsg\ protected under the
session key. Each message carries the next hash-chain token, allowing the
receiver to verify message authenticity, token progression, and compliance with
the remaining \tmsg \ and time budgets. The exchange continues until the task
ends, a budget is exhausted, or the session expires, and may span multiple
PQ-TLS sessions.
    The steps during the transmission phase are:  
    
\textbf{Step $i$: Making request $i \geq 2$.}
Agent $A_I$ verifies the response received in the previous round by checking:
\[
    \mathit{tag}_{i-1} == \mathsf{HMAC}_{k_{ses}}(m_{i-1})
\]
where $k_{ses} = H(r_1, r_2)$ and:
\[
    m_1 = (r_2,\, \mathit{res}_1,\, n,\, \Delta_{R,I},\, \mathit{NextTok}^{ICP}_1,\,
    \mathit{NextTok}^{RCP}_1,\, \sigma^{U_R}_{RCP})
\]

If $i = 2$, i.e. this is the first time $A_I$ receives a response from $A_R$,
it additionally verifies that $\mathit{NextTok}^{ICP}_1 = \rho_{n'-1}$, checks
the user's signature on the RCP:
\[
    \mathsf{HS.Verify}_{pk_{U_R}}\!\bigl(
        \langle s_n,\, Q_{R,I} \rangle,\; \sigma^{U_R}_{RCP}
    \bigr) = 1
\]
and verifies the chain link:
\[
    s_{n} = H(s_{n-1},\, n,\, \mathit{sid},\, \mathit{aid}_{A_I})
\]
It then checks $t_{exp} \leq \tau'_{now} + \Delta_{R,I}$, initializes counter
$\mathit{ctr}'_{RCP}[\mathit{aid}_{A_I}]\allowbreak[\mathit{aid}_{A_R}] = n$,
and decrements it by one.

If $i > 2$, $A_I$ instead verifies that $\mathit{NextTok}^{RCP}_i$ is correct
by checking:
\[
    s_{n-i+1} = H(s_{n-i},\, n-i+1,\, \mathit{sid},\, \mathit{aid}_{A_I})
\]
and that $\mathit{ctr}'_{RCP}[\mathit{aid}_{A_I}]\allowbreak[\mathit{aid}_{A_R}]
= n-i+1$, then decrements it by one.

In both cases, $A_I$ sets the next initiator token $\mathit{NextTok}^{ICP}_{i}
= \rho_{n'-i}$, removes $\rho_{n'-i+1}$, decrements
$\mathit{ctr}'_{ICP}[\mathit{aid}_{A_I}]\allowbreak[\mathit{aid}_{A_R}]$ by one,
and constructs and sends the following to $A_R$:
\[
    m_i = (\mathit{req}_i,\, \mathit{NextTok}^{ICP}_i,\, \mathit{NextTok}^{RCP}_i),
\]
\[
    \mathit{tag}_{i} = \mathsf{HMAC}_{k_{ses}}(m_i)
\]
Note that if the PQ-TLS session was closed, $A_I$ first re-establishes a PQ-TLS
channel with $A_R$ before sending. If
$\mathit{ctr}'_{RCP}[\mathit{aid}_{A_I}]\allowbreak[\mathit{aid}_{A_R}] = 0$,
$A_I$ verifies the chain seed:
\[
    s_0 = \mathsf{PRF}(\mathit{sid},\, \mathit{aid}_{A_R},\,
    \mathit{aid}_{A_I},\, \mathit{addr})
\]

\textbf{Step $i+1$: Responding to request $i$.}
Agent $A_R$ first checks that
$\mathit{ctr}_{RCP}[\mathit{aid}_{A_R}]\allowbreak[\mathit{aid}_{A_I}] > 0$.
If the quota $Q_{R,I} = n$ has been reached, it returns a quota-exhausted message
and terminates the connection. It also checks that the session expiry time
$t_{exp}$ has not been reached; if it has, it returns a session-expired message
and terminates the connection.

Otherwise, $A_R$ verifies the tag and checks the next responder token:
\[
    \mathit{tag}_i == \mathsf{HMAC}_{k_{ses}}(m_i), \qquad
    \mathit{NextTok}^{RCP}_i = s_{n-i+1}
\]
If verification passes, $A_R$ executes the request and constructs the response:
\[
    m_{i+1} = (\mathit{res}_{i+1},\, \mathit{NextTok}^{ICP}_{i+1},\, \mathit{NextTok}^{RCP}_{i+1})
\]
\[
    \mathit{tag}_{i+1} = \mathsf{HMAC}_{k_{ses}}(m_{i+1})
\]
where $\mathit{NextTok}^{ICP}_{i+1} = \rho_{n'-i}$ and
$\mathit{NextTok}^{RCP}_{i+1} = s_{n-i}$, and sends
$(m_{i+1}, \mathit{tag}_{i+1})$ to $A_I$. It then decrements both
$\mathit{ctr}_{ICP}[\mathit{aid}_{A_I}]\allowbreak[\mathit{aid}_{A_R}]$ and
$\mathit{ctr}_{RCP}[\mathit{aid}_{A_R}]\allowbreak[\mathit{aid}_{A_I}]$ by one,
and removes $s_{n-i+1}$. If
$\mathit{ctr}_{ICP}[\mathit{aid}_{A_I}]\allowbreak[\mathit{aid}_{A_R}] = 0$,
$A_R$ verifies the chain seed:
\[
    \rho_0 = \mathsf{PRF}(\mathit{sid},\, \mathit{aid}_{A_I},\,
    \mathit{aid}_{A_R},\, \mathit{addr})
\]
 \subsection{Multi-Agent Intercommunication}
 \label{sec:one-to-many}
For  tasks that involve multiple agents, the \masterw\ agent receives the task and, under a static plan,
identifies the agents to contact. It generates the hash chains for the
collaborating agents in the \cses\ and sends the roots to the user, who builds
and signs a Merkle root over them and returns the tree and the signature to the
\masterw. Details are given in Appendix~\ref{Appendix:CSessionUserAgent}.

\subsubsection{Agent discovery.}
The Agent discovery protocol is similar to that for the two-agent protocol; it is run before the inter-agent communication and allows the \masterw agent to receive a token from the \provider for each A-session that it wants to establish if it is authorized to contact other agents according to CP. The inter-agent communication stage handles the A-sessions that the \masterw agent establishes with other agents according to the plan of the task. In each A-session RCP is determined by the receiving agent, and ICP is determined by the orchestrator agent (enforcing ICP in each session corresponds to enforcing an interaction budget and time budget in each A-session). 
 Agent $A^{*}_{I}$ who wants to establish a connection with agent $A_j$, where $j \in [1,t]$, first contacts the \provider to receive $A_j$'s information. Additionally, the \provider checks whether $A^{*}_{I}$ is allowed to communicate with $A_j$ (according to the contact policy) or not and sends the signed information of $A_j$ if $A^{*}_{I}$ is eligible (see Figure \ref{fig:AgentProv}).

\remove{
\begin{enumerate}
    \item \textbf{Establishing a PQ-TLS connection with the provider:} This step follows a standard PQ-TLS establishment between $A^{*}_{I}$ and the provider.
    \item \textbf{Receiving agent information retrieval:} $A^{*}_{I}$ requests permission to contact $A_j$ by sending their identity $aid_{A^{*}_{I}}$ and the identity of the receiving agent $aid_{A_j}$ to the provider.  The provider verifies that $A^{*}_{I}$ is in the contact policy of $A_j$ and $A_j$ is in the contact policy of $A^{*}_{I}$ and they have sufficient Budget (the minimum of the two budgets) by checking that $Counter[aid_{A^{*}_{I}}][aid_{A_j}]> 0$.  
    It then returns $A_j$'s access information, with its signature $\sigma^{TA}_{ac}$ on both agent's information. This information consists of user $U_j$'s certificate $Cert^{ID}_{U_j}$, agent's  $A_j$ device and network information $ ED_{A_j}$, agent's $A_j$ transport layer key and certificate $(Cert^{tls}_{A_j}, Pk_{A_j})$, the signed identity key of $A_j$, $\sigma^{U_j}_{ID}$, the signature of user on $A_j$'s information, $\sigma^{U_j}_{A_j}$, the identity of agent $A^{*}_{I}$, and the identity key of agent $A^{*}_{I}$. Considering that the agent's metadata is $M_{A_j}=\{ED_{A_j},Cert^{tls}_{A_j},pk^{tls}_{A_j}, pk_{A_j}\}$, the provider's signature is $\sigma^{TA}_{ac}=HS.Sign_{sk_{TA}}(\nu, Cert^{ID}_{U_j},aid_{A_j},M_{A_j}, \sigma^{U_j}_{ID}, \sigma^{U_j}_{A_j}, aid_{A^{*}_{I}}, pk_{A^{*}_{I}})$. Then the provider decrements $Counter[aid_{A_j}][aid_{A^{*}_{I}}]$ by one.
    \item \textbf{Receiving agent information verification:} $A^{*}_{I}$ first verifies that $A_j$'s user certificate $Cert^{ID}_{U_j}$ including the user's public key $Pk_{U_j}$ is correct. $A^{*}_{I}$ also verifies the signatures on agent's $A_j$ information as below: $HS.Verify_{Pk_{U_j}}(<aid_{A_j},ED_{A_j}, pk^{tls}_{A_j}, pk^{tls}_{TA}, pk_{TA}>, \sigma^{U_j}_{A_j})$, $HS.Verify_{Pk_{U_j}}(<aid_{A_j},pk_{A_j}>, \sigma^{U_j}_{ID})$ and $HS.Verify_{pk_{TA}}(<\nu, Cert^{ID}_{U_j},aid_{A_j},M_{A_j},\sigma^{U_j}_{ID}, \sigma^{U_j}_{A_j}, aid_{A^{*}_{I}}, pk_{A^{*}_{I}}>, \sigma^{TA}_{ac})$.
\end{enumerate}
}

\subsubsection{$C$-session protocol.}
 We consider that the \masterw agent $A^{*}_{I}$ communicates with agents $\{A_1, \cdots, A_t\}$ through established A-sessions according to the workflow to fulfill the given task. Each A-session has two policies $RCP$ and $ICP$ that are enforced by the (user and) agents involved in that A-session. 
 For efficiency of \master \ computation in a \cses, a single user-signed Merkle root commits to the
initiator-side hash-chain roots of all planned component \ases. Each component
\ases\ then uses its usual \ases\ authorization token together with the relevant
Merkle proof to show that it belongs to the user-authorized \cses.
 
\noindent\textbf{Establishing subroutine A-sessions with $\boldsymbol{A_j}$, 
$\boldsymbol{j \in [1,t]}$. }
Agent $A^{*}_{I}$ follows the A-session establishment protocol for the two-agent 
setting described earlier, with the following differences.

First, $A^{*}_{I}$ uses the signature $\sigma^{U^*}_{ICP}$ and the personalized 
hash chains obtained during the \textit{User-agent interaction} protocol, rather 
than generating new hash chains. These chains are defined as:
\begin{align*}
    \rho^i_0 &= \mathsf{PRF}(\mathit{sid},\, \mathit{aid}_{A^{*}_{I}},\, 
                \mathit{aid}_{A_j},\, \mathit{addr}), \\
    \rho^i_1 &= H(\rho^i_0,\, 1,\, \mathit{sid}_j,\, \mathit{aid}_{A_j}), \\
    \rho^i_2 &= H(\rho^i_1,\, 2,\, \mathit{sid}_j,\, \mathit{aid}_{A_j}), \\
             &\;\vdots \\
    \rho^i_{n'} &= H(\rho^i_{n'-1},\, n',\, \mathit{sid}_j,\, \mathit{aid}_{A_j})
\end{align*}
for all $i \in [0, m-1]$.

Second, when $A^{*}_{I}$ sends its request together with $\sigma^{U^*}_{ICP}$, 
it also sends the Merkle root $\mathit{Mroot}$ and provides a Merkle proof 
showing that the tuple $\rho^i_{n'}$ is included in the Merkle tree whose 
root has been signed.

Finally, when agent $A_j$ handles the request, it verifies the correctness of 
the Merkle proof. Throughout these communication rounds, $A^{*}_{I}$ also 
checks before each new request that the maximum allowed time specified in the 
ICP has not been exceeded.

\section{A Formalization of \sagaplus~ Properties}
\label{sec:model}
In this section we describe the security properties and proof sketch for \sagaplus. The complete proof is presented in Appendix \ref{Appendix:securityanalysis}.

\subsection{Security Properties} \label{sec-prop}
\sagaplus provides the following security properties:\\
-- {\em Confidentiality.} The contents of \tmsg exchanged between two entities remain private and are accessible only to the participating entities, except as revealed by the intended protocol outputs.\\
-- {\em 
Authentication.} All communications between entities are mutually authenticated at both the network and application layers.\\
-- {\em Message integrity.} \Tmsg exchanged between two entities are protected against unauthorized modification, forgery, replay, and re-ordering.\\
-- {\em Accountability.} \Tmsg are publicly verifiably attributable to their sender agent and, through the certification/binding mechanism, to the associated user.\\
-- {\em Authorization soundness.} No agent can interact with another agent unless authorized by its owner. The system enforces user-defined contact lists and \emph{budgets} on the number and duration of 
sessions, and on the number of \tmsg, as specified by policy.

\subsection{Modeling \sagaplus in the UC Approach}
\label{Security model}
We model and analyze the security of \sagaplus\ using Canetti's Universally 
Composable (UC) framework~\cite{Canetti-18}. The UC framework captures the 
security of a protocol $\pi$ in a realistic setting where $\pi$ may run 
concurrently with other protocols, and where these protocols may interact with $\pi$ and with one another. More details about UC can be found in Appendix \ref{Appendix:ideal}.
 We assume \emph{static corruption}, where the adversary $\mathcal{A}$ corrupts
parties before protocol execution begins. This is a common simplifying assumption
in UC analyses \cite{dziembowski2018fairswap,canetti2020efficient,hesse2023password}.

Figure~\ref{UCsteps} gives an outline of the \sagaplus\ ideal functionality and
the corresponding real-world construction, together with the auxiliary ideal
functionalities available to each. The ideal functionalities of \ases\ and \cses\ formalize  the above security properties. See section \ref{ases-func}.

\begin{figure}[t]
    \centering
    \includegraphics[width=0.8\columnwidth]{
    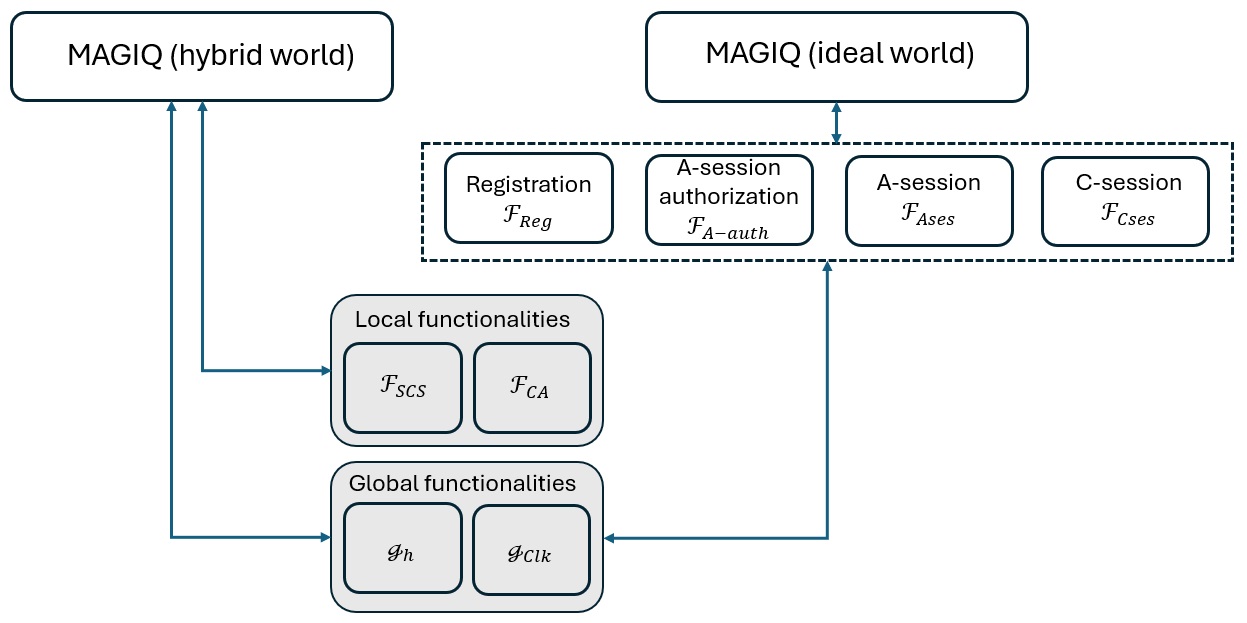}
    \caption{The modular model of \sagaplus \ in the hybrid and ideal world. 
    The dotted box shows the ideal functionalities we have defined for \sagaplus\ and the gray boxes show the  existing ideal functionalities used as subroutine in our model.}
    \label{UCsteps}
    \Description{To avoid warning.}
\end{figure}

\noindent
{\bf  Modeling \sagaplus\ in ideal-world.} \sagaplus \ in the ideal world uses the {\em global random oracle functionality $\mathcal{G}_h$} and the {\em global clock ideal functionality  $\mathcal{G}_{\mathsf{clk}}$} as subroutines. 
The former captures the assumptions of a synchronized global clock. Our analysis is in the $\mathcal{G}_h$-hybrid model, and so the proof is in the UC global random-oracle model.

\sagaplus\ consists of several protocols, whose security requirements are captured by corresponding ideal functionalities. These include the {\em registration functionality }$\mathcal{F}_{\mathsf{Reg}}$, modeling registration of users and agents with the \provider; the \ases-{\em authorization functionality } $\mathcal{F}_{A\text{-}\mathsf{auth}}$, modeling {\em agent discovery}  with the help of \provider\ that allows getting authorization to initiate an \ases; the {\em \ases\ functionality }$\mathcal{F}_{\mathsf{Ases}}$, modeling authenticated inter-agent communication and policy enforcement in a two-agent session; and the {\em \cses\ functionality} $\mathcal{F}_{\mathsf{Cses}}$, modeling the corresponding properties for a one-to-many agent session. The latter two functionalities interact with the global functionalities $\mathcal{G}_{\mathsf{clk}}$ and $\mathcal{G}_h$.

The functionalities $\mathcal{F}_{\mathsf{Reg}}$ and
$\mathcal{F}_{A\text{-}\mathsf{auth}}$ capture, respectively, ideal registration and authorization by checking requests against an authorization table. Their details are given in Appendix~\ref{Appendix:ideal}.

\noindent
{\em \ases \ Ideal Functionality.}
The ideal functionality of an \ases, denoted by
$\mathcal{F}_{\mathsf{Ases}}$, captures a policy-enforced secure session
involving three types of entities: a user who provides the task and two agents that interact to perform the task. In addition to enforcing
the session policy, the functionality captures communication security guarantees, including message confidentiality, integrity, authenticity, and attribution to the sending agent. $\mathcal{F}_{\mathsf{Ases}}$ has
access to the global clock functionality $\mathcal{G}_{\mathsf{clk}}$ and the
global random-oracle functionality $\mathcal{G}_h$.

\noindent
{\em \cses \ Ideal Functionality.}
A \cses\ is a one-to-many agent session in which \masterw distributes
its budget across multiple responders, enforcing its own
budget.
The ideal functionality of a \cses, denoted by
$\mathcal{F}_{\mathsf{Cses}}$, captures communication security and attribution guarantees, analogous to those of an \ases, together with policy enforcement across all participating agents.

\noindent
{\bf Modeling \sagaplus \ hybrid world.}
 %
%
\sagaplus\ is analyzed using widely adopted UC functionalities as subroutines:
the global random oracle $\mathcal{G}_h$ \cite{camenisch2018wonderful}, the global clock
$\mathcal{G}_{\mathsf{clk}}$ \cite{canetti2017clock}, the secure communication-session functionality
$\mathcal{F}_{\mathsf{scs}}$ \cite{gajek2008universally}, and the certificate-authority functionality
$\mathcal{F}_{\mathsf{CA}}$ \cite{canetti2004signature}. \sagaplus 
consists of several protocols
including registration, agent discovery, \ases, and \cses. An \ases\ uses
$\mathcal{F}_{\mathsf{scs}}$ as a post-quantum adaptation of the TLS secure
session functionality with security considered against quantum
polynomial-time adversaries and environments,  may span multiple PQ-TLS sessions, and relies on
$\mathcal{F}_{\mathsf{CA}}$ for certificate-based trust. A \cses\ is realized by
the one-to-many \sagaplus\ protocol using multiple instances of
$\mathcal{F}_{\mathsf{Ases}}$, with access to $\mathcal{G}_{\mathsf{clk}}$ and
$\mathcal{G}_h$.

\subsection{Security Proof Sketch}
\label{Proofsketch}

We analyze \sagaplus \ in the UC framework; the following theorem gives the security of our protocol:

\begin{theorem}
\label{thm:main}
\sagaplus\   (Section \ref{sec:sagaplus}) UC-realizes
$\mathcal{F}_{\mathsf{Reg}}$, $\mathcal{F}_{A\text{-}\mathsf{auth}}$, \Fases,
and \Fcses\ in the
$(\mathcal{F}_{\mathsf{scs}},\mathcal{F}_{\mathsf{CA}}, \mathcal{G}_{\mathsf{clk}},\mathcal{G}_h)$-hybrid model. 
The cryptographic building blocks of \sagaplus\ are (i) a digital signature scheme that is EUF-CMA, and (ii) hash function, HMAC and PRF that are modeled by global random oracles.
\end{theorem}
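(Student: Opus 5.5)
The plan is modular. We prove a separate UC-realization statement for each sub-protocol and then combine them with the UC composition theorem. The pieces are: registration for $\mathcal{F}_{\mathsf{Reg}}$, agent discovery for $\mathcal{F}_{A\text{-}\mathsf{auth}}$, the \ases\ protocol for \Fases, and the one-to-many protocol for \Fcses. Concretely, we show that registration and discovery realize their functionalities in the $(\mathcal{F}_{\mathsf{scs}},\mathcal{F}_{\mathsf{CA}})$-hybrid model. We then show that the \ases\ protocol realizes \Fases\ in the $(\mathcal{F}_{\mathsf{scs}},\mathcal{F}_{\mathsf{CA}},\mathcal{F}_{A\text{-}\mathsf{auth}},\mathcal{G}_{\mathsf{clk}},\mathcal{G}_h)$-hybrid model. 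Finally, we show that the \cses\ protocol realizes \Fcses\ in the \Fases-hybrid model, using one instance of \Fases\ per planned responder, together with $\mathcal{G}_h$ and $\mathcal{G}_{\mathsf{clk}}$. Since $\mathcal{G}_h$ and $\mathcal{G}_{\mathsf{clk}}$ are global, we invoke the generalized UC composition theorem. All reductions are to classical assumptions stated against QPT adversaries: EUF-CMA of the hash-based signature, and random-oracle bounds on collisions and preimages. Because the protocol itself is classical, rewinding never arises. The only subtlety is that random-oracle bounds against quantum adversaries must be taken in the quantum ROM; we would state them as Grover-type bounds $O(q^2/2^{\ell})$ and leave the proof in the global RO model as the theorem prescribes.

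For each sub-protocol we build a simulator $\mathcal{S}$ that runs $\mathcal{A}$ internally and emulates $\mathcal{F}_{\mathsf{scs}}$, $\mathcal{F}_{\mathsf{CA}}$ and the programmable part of $\mathcal{G}_h$. $\mathcal{S}$ generates signing keys for honest users, agents and the \provider. It then handles two cases.
\begin{enumerate}
\item When both agents are honest, $\mathcal{S}$ learns only lengths from \Fases, because $\mathcal{F}_{\mathsf{scs}}$ leaks only lengths, and it produces dummy transcripts.
\item When one agent is corrupted, $\mathcal{S}$ extracts the corrupted party's inputs. These are the \tmsg\ contents together with the claimed budget $Q$ and $\Delta$ committed in the signed chain root. $\mathcal{S}$ forwards them to \Fases, which applies the policy checks.
\end{enumerate}
Indistinguishability then follows from a hybrid sequence.
\begin{itemize}
\item[(H1)] Replace real channels by $\mathcal{F}_{\mathsf{scs}}$ (perfect by hybrid model).
\item[(H2)] Abort if $\mathcal{A}$ presents a valid \provider, user, or agent signature on a message never signed by the honest signer: $\sigma^{\mathsf{TA}}_{ac}$, $\sigma^{U}_{ICP}$, $\sigma^{U}_{RCP}$, $\sigma^{A_I}_{init}$, or the Merkle-root signature. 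This is bounded by EUF-CMA. It gives authentication, authorization soundness for ACLs and session counts (each \provider\ token is bound to $\nu$, $aid_{A_I}$, $aid_{A_R}$, $pk_{A_I}$), and attribution.
\item[(H3)] Abort if a HMAC tag under $k_{ses}=H(r_1,r_2)$ verifies on an unqueried message. Since $r_1$ travels inside $\mathcal{F}_{\mathsf{scs}}$, $k_{ses}$ is uniform to $\mathcal{A}$, so this is bounded by $q/2^{\ell}$. It yields integrity, replay and reorder protection across multiple PQ-TLS sessions.
\item[(H4)] Abort if $\mathcal{A}$ produces a hash-chain element beyond the released prefix, or a chain valid for a different $(sid, aid)$.
\end{itemize}

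The main obstacle is (H4) together with its consequence that the \tmsg\ count enforced in the real world equals the budget enforced by \Fases. Two things must be argued. First, an honest verifier accepting $k$ tokens from a corrupted peer implies the peer's user authorized a chain of length at least $k$ bound to this very \ases. This follows from signature unforgeability on $(\rho_{n'},Q)$ plus domain separation: the index, $sid$ and $aid$ are hashed into every link, so a chain from another session, or a shifted position, would require an RO collision. Second, a corrupted peer cannot extend an honest party's chain, since that requires inverting $H$ on a fresh point. I would argue this with a lazy-sampled $\mathcal{G}_h$ table: the honest party's seed comes from PRF, modeled as RO, on inputs $\mathcal{A}$ cannot query first, so each unrevealed $\rho_j$ is uniform. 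Time budgets reduce directly to honest parties reading $\mathcal{G}_{\mathsf{clk}}$, matching the effective-policy case analysis given earlier.

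For \Fcses, the additional step is Merkle binding. If a responder accepts a leaf not among the $m$ user-committed roots, we obtain an RO collision. Hence the sum of per-\ases\ budgets is at most $m\cdot n=$\Nmaster. With that in place, the simulator is just a wrapper that routes calls to the \Fases\ instances.
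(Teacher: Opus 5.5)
Your decomposition (four lemmas, an honest versus one-corrupted case split, reductions to EUF-CMA and to $\mathcal{G}_h$) matches the paper's, and your hybrids are more explicit than the paper's, which defers its games to a full version. The gap is in the one-corruption case of your $A$-session simulator.

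In $\mathcal{F}_{\mathsf{Ases}}$, every request and response is computed inside the functionality via $\mathsf{ExeRes}$ and $\mathsf{ExeReq}$. The simulator receives only $\ell(req_i,Q_I,\Delta_I)$ or $\ell(res_i,Q_R,\Delta_R)$; with a corrupted agent, $\ell$ additionally reveals the budgets, but not the contents. The simulator can only let the round proceed, abort, terminate, or issue $\mathsf{ChangePolicy}$. So there is no interface to which you could forward extracted \emph{task-msg} contents.

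More importantly, the opposite direction, which you never address, breaks:
\begin{itemize}
\item If $A_R$ is corrupted, in the real world it receives every $req_i$ in the clear as the legitimate endpoint of $\mathcal{F}_{\mathsf{scs}}$. $\mathcal{Z}$ chose $tsk$ and can check these values, whereas your simulator has only lengths. ``Dummy transcripts'' work only when both agents are honest.
\item If $A_I$ is corrupted, $\mathcal{F}_{\mathsf{Ases}}$ does nothing until it receives $(Task,sid,tsk)$ on $A_I$'s behalf. A stream of extracted requests does not determine $tsk$.
\end{itemize}

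The paper's proof rests on a device you are missing. The $sid$ contains $\mathcal{G}_h(tsk)$, and the simulator recovers $tsk$ through the $\mathsf{Observe}$ interface of $\mathcal{G}_h$. It submits $tsk$ when $A_I$ is corrupted, and it runs the honest agent's code ($\mathsf{ExeReq}(A_R,\cdot)$ or $\mathsf{ExeRes}(A_I,tsk,\cdot)$) itself to produce the contents the corrupted peer must see. The $C$-session simulator likewise obtains $tsk$ and derives each responder's subtask. Without such a mechanism, or a functionality that hands corrupted parties their outputs, your case 2 cannot be completed.

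Two secondary steps also fail as written.

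First, in (H4) you assume the honest chain is unpredictable because the adversary cannot query its PRF inputs first. But $\rho_0=\mathsf{PRF}(sid,aid_{A_I},aid_{A_R},addr)$ has only public inputs; the peer even recomputes it in the final seed check. So an adversary can compute the entire honest chain, and your abort event is not negligible. What you actually need is that an honest verifier accepts at most the user-authorized number of \emph{task-msg} from a corrupted sender in this session. That follows from EUF-CMA on $(\rho_{n'},Q_{I,R})$, the $(j,sid,aid)$-binding of each link, and the verifier's local counter, with no secrecy of the honest chain required.

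Second, your Merkle-binding step cannot be carried out in the $\mathcal{F}_{\mathsf{Ases}}$-hybrid model you chose for the $C$-session. Chain roots and Merkle proofs live inside the real handshake and disappear once it is replaced by $\mathcal{F}_{\mathsf{Ases}}$, where a corrupted orchestrator may simply issue $\mathsf{ChangePolicy}$ to each instance. The paper's $C$-session proof uses no Merkle argument, only simulation of the $\mathcal{F}_{A\text{-}\mathsf{auth}}$ and $\mathcal{F}_{\mathsf{Ases}}$ instances.
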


(It is sufficient for the signature scheme that are used by all \sagaplus\ participants  to be EUF-CMA in the standard model (see proof \ref{Appendix:securityanalysis})).

 \paragraph{Proof sketch.}
We construct the protocol
$\pi_{\sagaplus}^{\mathcal{F}_{\mathsf{scs}},
\mathcal{F}_{\mathsf{CA}},\mathcal{G}_{\mathsf{clk}},\mathcal{G}_h}$ in the $(\mathcal{F}_{\mathsf{scs}},\mathcal{F}_{\mathsf{CA}},
\mathcal{G}_{\mathsf{clk}},\mathcal{G}_h)$-hybrid model and prove Theorem~\ref{thm:main} through a sequence of lemmas, each capturing the security of one protocol in \sagaplus. 
These 
protocols are executed sequentially: an agent may participate in an \ases\ or a \cses\ only after registration, and an initiating agent may start the session only after obtaining authorization from the \provider. The lemmas are stated below; full details are given in Appendix~\ref{Appendix:securityanalysis}.

\begin{lemma} 
    The  registration protocol $\pi_{Reg}$ of $\pi^{\mathcal{F}_{\mathsf{SCS}}, \mathcal{F}_{\mathsf{CA}}, \mathcal{G}_{\mathsf{clk}},\mathcal{G}_h}_{\sagaplus}$ for the user and the agent, described  in Appendix \ref{Appendix:UserAgentReg}, UC realizes the registration ideal functionality $\mathcal{F}_{\mathsf{Reg}}$ against static adversaries. 
\end{lemma}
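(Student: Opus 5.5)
To prove the lemma, we construct a simulator $\mathcal{S}$ for the registration protocol $\pi_{Reg}$, which runs in the $(\mathcal{F}_{\mathsf{scs}},\mathcal{F}_{\mathsf{CA}},\mathcal{G}_{\mathsf{clk}},\mathcal{G}_h)$-hybrid model. We then show that no QPT environment $\mathcal{Z}$ can distinguish the real execution from the ideal execution with $\mathcal{F}_{\mathsf{Reg}}$. Corruption is static, and only agents may be corrupted: users, the CA and the \provider are trusted, with the \provider honest-but-curious. The case analysis is therefore short. Since $\mathcal{S}$ internally emulates $\mathcal{F}_{\mathsf{scs}}$ and $\mathcal{F}_{\mathsf{CA}}$, it sees every message the adversary would see. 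For registrations between honest parties it generates leakage from $\mathcal{F}_{\mathsf{Reg}}$ alone. $\mathcal{F}_{\mathsf{scs}}$ leaks only message lengths and session identifiers, so $\mathcal{S}$ can produce dummy ciphertext lengths for the steps $(\mathit{uid}_U,\mathit{Pwd},\mathit{Cert}^{ID}_U)$ and the agent-registration submission. It also forwards delivery or drop decisions from $\mathcal{A}$ to $\mathcal{F}_{\mathsf{Reg}}$. In addition, $\mathcal{S}$ generates the keys $(sk_U,pk_U)$ and $(sk_A,pk_A)$ of honest parties and the provider key $sk_{TA}$, so it can output consistent signatures $\sigma^U_{ID},\sigma^U_A,\sigma^{TA}_A$ whenever the adversary would see them. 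In particular, the signatures on a corrupted agent's material are handed to that agent by its honest user.

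The steps are as follows. (1) Describe $\mathcal{S}$ for user registration. The only environment-visible decision is whether $\mathit{uid}_U$ is fresh and whether the external identity check passes, and $\mathcal{F}_{\mathsf{Reg}}$ reports both. (2) Describe $\mathcal{S}$ for agent registration, including uniqueness of $\mathit{aid}_A$ and $\mathit{ED}_A$, certificate issuance through the emulated $\mathcal{F}_{\mathsf{CA}}$, and storage of the registry entry $D_A[\mathit{aid}_A]$. (3) Handle a corrupted agent. Such an agent receives its secret keys and $\sigma^{TA}_A$ from its honest user, so $\mathcal{S}$ just supplies them. Registration itself is driven by the trusted user, which means the agent cannot inject a registration request. (4) Argue indistinguishability through a hybrid sequence:
\[
H_0\ (\text{real}) \to H_1\ (\text{replace TLS by ideal } \mathcal{F}_{\mathsf{scs}}) \to H_2\ (\text{abort on forged } \mathsf{HS} \text{ signatures}) \to H_3\ (\text{ideal}).
\]
The step $H_0\to H_1$ is immediate in the hybrid model. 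In $H_1\to H_2$, the \provider accepts a registration for $\mathit{aid}_A$ only if $\mathsf{HS.Verify}_{pk_U}$ succeeds on $\sigma^U_A$ and $\sigma^U_{ID}$. A run in which the provider accepts a binding the honest user never signed, or in which $\mathcal{A}$ later presents a $\sigma^{TA}_A$ never issued, yields an EUF-CMA forgery. The reduction embeds the challenge key as $pk_U$ or $pk_{TA}$ and answers signing queries through the oracle. Because every step is classical and the adversary is QPT, the post-quantum EUF-CMA security of the hash-based scheme suffices. For password storage, $H(\mathit{Pwd})$ is modeled via $\mathcal{G}_h$; it is only ever sent to the trusted provider inside $\mathcal{F}_{\mathsf{scs}}$, so $\mathcal{S}$ never needs to program it.

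I expect the main obstacle to be step (3) combined with the abort event of $H_2$. A corrupted agent, possibly cloned, may try to get the \provider to register a second identity or endpoint under an honest user; this is the Sybil concern in the attacker model. The argument has to show that every registry entry in the real world corresponds to an entry that $\mathcal{F}_{\mathsf{Reg}}$ also records. Two facts should give this: uniqueness of $\mathit{aid}_A$ and $\mathit{ED}_A$ is enforced by the trusted provider, and any entry with a fresh identity requires a fresh user signature, whose absence is exactly the forgery event. The simulator's key generation must also stay consistent with the global $\mathcal{G}_h$, which $\mathcal{S}$ cannot program. This is fine for the lemma because the signature reduction runs in the standard model, as noted after Theorem~\ref{thm:main}.
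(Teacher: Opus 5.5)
Your honest-case simulator is the paper's. $\mathsf{Sim}$ generates the CA, provider, user and agent key pairs itself, emulates $\mathcal{F}_{\mathsf{CA}}$ (and $\mathcal{F}_{\mathsf{scs}}$), produces $\sigma^U_{ID}$, $\sigma^U_A$ and $\sigma^{TA}_A$ with those keys, and answers \emph{ok} to $\mathcal{F}_{\mathsf{Reg}}$. The paper then argues that the resulting view is identically distributed, with no reduction at all.

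The gap is your step (3). A statically corrupted agent receives more than its secret keys and $\sigma^{TA}_A$. In the agent-registration flow (Figure~\ref{fig:AgentReg}) the user hands it $(M_A, sk^{tls}_A, sk_A, CP_A, \sigma^U_{ID}, \sigma^U_A, uid_U, \ldots)$. Both $\sigma^U_A$ and $\sigma^{TA}_A$ are signatures on messages that contain $ED_A$. Now $CP_A$ is an input that $\mathcal{Z}$ gives to $U$ via $(AgentPolicy, sid, A, attr, CP_A, Pwd, TA)$, and $ED_A$ is specified by the user at registration. Yet $\mathcal{F}_{\mathsf{Reg}}$ leaks only $(RegisterAgent, sid, U, A)$ and $(AgentPolicy, sid, U)$. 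So $\mathsf{Sim}$ cannot place the correct $CP_A$ in the corrupted agent's view, nor produce $\sigma^U_A$ and $\sigma^{TA}_A$ on the true $ED_A$. $\mathcal{Z}$ distinguishes simply by comparing that view with its own inputs.

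Hence ``$\mathsf{Sim}$ just supplies them'' fails. No hybrid step can repair this, because it is an information deficit of the simulator, not a computational event. To close it you have two options:
\begin{itemize}
\item strengthen $\mathcal{F}_{\mathsf{Reg}}$ to leak $(attr, CP_A)$ of corrupted agents, analogous to how $\mathcal{F}_{\mathsf{Ases}}$ leaks budgets once an agent is corrupted; or
\item restrict the claim as the paper implicitly does.
\end{itemize}
The paper's proof treats only honest $U$ and provider, with none of the agent's material exposed to $\mathcal{Z}$. That is exactly why its simulator can store an \emph{empty} $CP_A$ and a random $Pwd'$. The same deficit undermines your claim that $\mathsf{Sim}$ can fake the $\mathcal{F}_{\mathsf{scs}}$ length leakage: $\mathcal{F}_{\mathsf{Reg}}$ reveals neither $|Pwd|$ nor $|CP_A|$, so you need fixed-length encodings or a matching leakage clause.

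Conversely, the obstacle you single out does not arise in $\pi_{Reg}$. A corrupted or cloned agent cannot get the honest provider to register a second identity or endpoint under $U$. Agent registration is gated by password authentication of $U$ over a mutually authenticated $\mathcal{F}_{\mathsf{scs}}$ session, and the agent is never given $Pwd$. Every submission the provider accepts therefore comes from the honest user, so the forgery event in your $H_1 \to H_2$ step cannot occur, and the simulation is perfect, as the paper argues. EUF-CMA security of $\sigma^U_{ID}$, $\sigma^U_A$ and $\sigma^{TA}_A$ only matters in the agent-discovery and A-session lemmas, where those values are presented to other parties.
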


{\em Proof sketch.}  The user and the \provider\ are honest. The simulator $\mathsf{Sim}$ produces
outputs for the honest parties that are consistent with the ideal registration functionality: $\mathsf{Sim}$ generates the required key pairs and
simulates the CA interaction consistent with
$\mathcal{F}_{\mathsf{CA}}$. Since the simulated registration transcript with the \provider \ and the
certificates are distributed identically to those in the hybrid execution, no
environment can distinguish the ideal execution from the hybrid execution.

\begin{lemma}
    The \ases- agent discovery and authorization protocol $\pi_{A\text{-}\mathsf{auth}}$ of $\pi_{\sagaplus}^{\mathcal{F}_{\mathsf{scs}},
\mathcal{F}_{\mathsf{CA}},\mathcal{G}_{\mathsf{clk}},\mathcal{G}_h}$ described in Section~\ref{sec:one-to-one} UC-realizes the \ases-authorization ideal
functionality $\mathcal{F}_{A\text{-}\mathsf{auth}}$ against static adversaries. 
\end{lemma}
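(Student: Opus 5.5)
The plan is to build a simulator $\mathsf{Sim}$ for the agent-discovery protocol $\pi_{A\text{-}\mathsf{auth}}$ and show that no environment can tell the two executions apart, in the $(\mathcal{F}_{\mathsf{scs}},\mathcal{F}_{\mathsf{CA}})$-hybrid model. The honest-but-curious \provider\ follows the protocol, so the cases are split by who is corrupted. Case 1: both \caler\ and \caled\ are honest. Case 2: \caler\ is corrupted. Case 3: \caled\ is corrupted, which affects only what information is returned, not the authorization decision. Since corruption is static, $\mathsf{Sim}$ knows the case in advance.

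\textbf{Honest initiator.} When $\mathcal{F}_{A\text{-}\mathsf{auth}}$ notifies $\mathsf{Sim}$ of an authorization request $(\mathit{aid}_{A_I},\mathit{aid}_{A_R})$, $\mathsf{Sim}$ emulates the $\mathcal{F}_{\mathsf{scs}}$ channel between $A_I$ and the \provider. That functionality leaks only message lengths and session identifiers, so $\mathsf{Sim}$ can mimic it exactly. $\mathsf{Sim}$ also holds a simulated \provider\ key pair $(sk_{\mathsf{TA}},pk_{\mathsf{TA}})$, registered through a simulated $\mathcal{F}_{\mathsf{CA}}$. Whenever the functionality grants authorization, it reports the outcome of the table check. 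This check is the same one as the provider's test of $CP_{A_I}\cap CP_{A_R}$ together with $\mathit{Counter}[\mathit{aid}_{A_R}][\mathit{aid}_{A_I}]>0$ and the decrement. $\mathsf{Sim}$ then produces $\sigma^{\mathsf{TA}}_{\mathsf{ac}}$ by signing honestly. The registry data, namely $\mathit{Cert}^{ID}_{U_R}$, $M_{A_R}$, $\sigma^{U_R}_{ID}$ and $\sigma^{U_R}_{A_R}$, is available from the simulated $\mathcal{F}_{\mathsf{Reg}}$ state. That state exists because of the previous lemma and the sequential order of the protocols. For these reasons the view in this case is distributed identically.

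\textbf{Corrupted initiator.} $\mathsf{Sim}$ extracts the request the adversary sends to the simulated provider and forwards it to $\mathcal{F}_{A\text{-}\mathsf{auth}}$. If the functionality refuses, $\mathsf{Sim}$ returns a rejection. Otherwise it returns the token signed under $sk_{\mathsf{TA}}$. Cloned replicas all share the same $\mathit{aid}_{A_I}$, and the per-pair counter is kept by the functionality, so Sybil replication cannot exceed the authorized count.

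\textbf{Main obstacle: token forgery.} The key step is to rule out the adversary producing a valid $\sigma^{\mathsf{TA}}_{\mathsf{ac}}$ for a pair $(\mathit{aid}_{A_I},\mathit{aid}_{A_R},pk_{A_I})$, or for a nonce $\nu$, that the functionality never authorized. It must also be excluded that a corrupted agent alters the $U_R$-signed data so that an honest $A_I$ accepts a false $M_{A_R}$. In the ideal world such a token would be rejected or would have no counterpart, whereas in the real world it would verify; this is exactly the event that separates the two executions. I would bound it by a reduction to EUF-CMA of $\mathsf{HS}$. The reduction $\mathcal{B}$ embeds the challenge key as $pk_{\mathsf{TA}}$ (or as $pk_{U_R}$ for an honest user) and answers the provider's signing queries through its oracle. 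It then outputs the first fresh valid token as a forgery. The per-request $\nu$ keeps replayed tokens from counting twice, since $\nu$ is tied to a single counter decrement. Because the signature scheme is post-quantum EUF-CMA and the reduction is classical-query and straight-line, the bound holds against QPT environments. A union bound over these bad events yields an advantage that is negligible, which completes the indistinguishability argument.
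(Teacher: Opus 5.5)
Your simulator is essentially the paper's. $\mathsf{Sim}$ holds provider and CA keys it generated itself, emulates $\mathcal{F}_{\mathsf{scs}}$ and $\mathcal{F}_{\mathsf{CA}}$ internally, reuses the registration simulator's state, signs $\sigma^{\mathsf{TA}}_{\mathsf{ac}}$ honestly, and in the corrupted case takes $(\mathit{aid}_{A_I},\mathit{aid}_{A_R})$ from the adversary.

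Where you differ is in what you treat as the crux. The paper treats this simulation as perfect: $\mathsf{Sim}$ literally runs the honest provider's code, so no computational step appears in this lemma. Your EUF-CMA reduction is sound, but the event you isolate does not separate the two executions of this protocol. In discovery, the only party that verifies $\sigma^{\mathsf{TA}}_{\mathsf{ac}}$ (or the $U_R$ signatures) is an honest $A_I$. That $A_I$ receives them from the honest provider over $\mathcal{F}_{\mathsf{scs}}$, which does not let the adversary inject messages under an uncorrupted identity. A token forged by the adversary is therefore equally possible, and equally visible to $\mathcal{Z}$, in both worlds.

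That forgery event, together with your nonce/replay remark, only matters in the A-session lemma. There a possibly corrupted $A_I$ presents the token to $A_R$, and $\mathcal{F}_{\mathsf{Ases}}$ checks $L_{Auth}$; this is where the paper invokes EUF-CMA, and where your reduction belongs. Your Case 3 is likewise vacuous: $A_R$ takes no part in discovery, and its registry entry was supplied by its trusted user.

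On the other hand, forwarding the corrupted agent's request to $\mathcal{F}_{A\text{-}\mathsf{auth}}$ is more careful than the paper's sketch, which only checks a local copy. Forwarding keeps the functionality's counter and $L_{Auth}$ consistent with the tokens the adversary actually holds. However, $\mathcal{F}_{A\text{-}\mathsf{auth}}$ issues no refusal message, so $\mathsf{Sim}$ must still mirror the check locally to know when to reject.

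For the same reason, your claim that $\mathsf{Sim}$ can mimic the $\mathcal{F}_{\mathsf{scs}}$ leakage exactly for an honest initiator holds only for granted requests. A refused honest request produces channel traffic in the real world that $\mathsf{Sim}$ never learns about. The paper glosses over this too, by asserting that the channel is invisible when both parties are honest.
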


{\em Proof sketch.}
The simulator $\mathsf{Sim}$ runs the registration simulator described in previous Lemma, maintaining a consistent registration state. If
\caler\ and the \provider\ are honest, the authorization request and response are
generated honestly and consistently with $\mathcal{F}_{A\text{-}\mathsf{auth}}$.
If \caler\ is corrupted, $\mathsf{Sim}$ receives the request that $\mathcal{A}$ causes \caler\ to send, checks it against the registration state and authorization table, and returns the corresponding
provider response or rejection.

\begin{lemma}
    The \ases \ protocol $\pi_{Ases}$ of $\pi_{\sagaplus}^{\mathcal{F}_{\mathsf{SCS}}, \mathcal{F}_{CA}, \mathcal{G}_{\mathsf{clk}},\mathcal{G}_h}$ (described in Section \ref{sec:one-to-one}) UC realizes the \ases \ ideal functionality \Fases \ against static adversaries. 
\end{lemma}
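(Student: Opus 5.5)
We will prove the lemma by building a simulator $\mathsf{Sim}$ for each static corruption pattern of the pair $(A_I, A_R)$ and then arguing indistinguishability through a short sequence of hybrids. Users and the \provider\ are honest throughout. The simulator internally runs the simulators of the two previous lemmas, so it inherits a registration state and authorization table consistent with $\mathcal{F}_{\mathsf{Reg}}$ and $\mathcal{F}_{A\text{-}\mathsf{auth}}$. It also emulates $\mathcal{F}_{\mathsf{scs}}$ and $\mathcal{F}_{\mathsf{CA}}$ toward $\mathcal{A}$, and it observes and programs the queries to $\mathcal{G}_h$ that are allowed in the global random-oracle model.

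\emph{Case 1: both agents honest.} $\mathcal{F}_{\mathsf{scs}}$ reveals only lengths and timing to $\mathcal{A}$. $\mathsf{Sim}$ therefore produces the leakage of each handshake and \tmsg\ from the length information that \Fases\ hands it. It forwards delivery or drop instructions from $\mathcal{A}$ to \Fases. Because $k_{ses}=H(r_1,r_2)$ is derived from values that $\mathcal{A}$ never sees, the HMAC tags and chain tokens stay hidden, and both views are identical.

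\emph{Case 2: exactly one agent honest, say $A_R$ (the case of honest $A_I$ is symmetric).} $\mathsf{Sim}$ plays honest $A_R$ toward the corrupted $A_I$ and simulates $\mathcal{G}_{\mathsf{clk}}$-based expiry exactly as the protocol does. Whenever $\mathcal{A}$ makes $A_I$ send a \tmsg\ that $A_R$ would accept, $\mathsf{Sim}$ submits it to \Fases\ on behalf of $A_I$. \Fases\ then applies the policy checks on ACL, session count, $Q_{I,R}=\min\{Q_I,Q_R\}$ and $\Delta_h$, and the attribution record.

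\emph{Case 3: both agents corrupted.} \Fases\ promises no guarantees here, and $\mathsf{Sim}$ simply relays messages.

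The core of the argument is the sequence of hybrids for Case 2, where the ideal world could reject a message that the real $A_R$ accepts. We identify the bad events that break the correspondence and bound each one.
\begin{enumerate}
\item[(E1)] $A_R$ accepts $\sigma^{\mathsf{TA}}_{ac}$, $\sigma^{U_I}_{ICP}$, or a registration signature on a value that the honest \provider\ or user never signed. This is ruled out by a reduction to EUF-CMA of HS. The reduction embeds the challenge key as $pk_{TA}$ or $pk_{U_I}$ and obtains all other signatures from the signing oracle. This reduction is why the standard-model EUF-CMA assumption suffices.
\item[(E2)] The corrupted $A_I$ sends more than $n'$ accepted tokens in one session, or reuses a chain in another \ases. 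Each chain link $H(\cdot,i,\mathit{sid},\mathit{aid}_{A_R})$ is domain-separated by position, session and responder. Extending past the signed terminal value, or presenting a chain for a different $\mathit{sid}$, therefore requires either an RO preimage or collision (probability $\mathrm{poly}(q)/2^{\ell}$, or $O(q^2/2^{\ell})$ when $\mathcal{A}$ makes quantum-style query counts) or a fresh user signature, which is covered by (E1).
\item[(E3)] Forged, replayed, or reordered HMAC tags. With $k_{ses}$ modeled through $\mathcal{G}_h$ and one input $r_2$ fresh from honest $A_R$, tags behave as an unpredictable random function. The transcript is also chained by counters and tokens, so a replayed or reordered message fails the counter check except with negligible probability.
\end{enumerate}
Conditioned on none of these events occurring, every message accepted in the real execution corresponds to a request that \Fases\ accepts, and the two views are identical. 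A union bound over polynomially many sessions then completes the argument.

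We expect the main obstacle to be (E2) together with the precise statement of attribution. The difficulty is to show that the counting argument holds against a monolithic, cloning adversary whose replicas share state. That is, it must hold across concurrent sessions that have the same $(\mathit{aid}_{A_I},\mathit{aid}_{A_R})$ pair but different $\mathit{sid}$, and across sessions that span several $\mathcal{F}_{\mathsf{scs}}$ instances. Our first approach is to make the PRF seed and every hash input include $\mathit{sid}$, and to argue that $\mathit{sid}$ is fixed by the \provider-signed token $\nu$. Cross-session reuse then reduces to (E1). The remaining quantum-adversary subtlety is the programmability and extractability of $\mathcal{G}_h$ under QPT queries, which we will take as given by the theorem's modeling assumption.
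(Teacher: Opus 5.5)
\textbf{Gap: the simulator does not match the interface of $\mathcal{F}_{\mathsf{Ases}}$.} Your bad events (E1)--(E3) are a reasonable, more explicit version of what the paper compresses into ``EUF-CMA plus random oracle''. The problem is the simulator they are meant to support. You treat $\mathcal{F}_{\mathsf{Ases}}$ as a message-relay functionality: one into which a corrupted agent's messages can be submitted one by one, and which hands the simulator an honest agent's messages when the peer is corrupted. It does neither.
\begin{itemize}
\item It starts only on $(Task, sid, tsk)$ from $A_I$.
\item It derives every $req_i$ and $res_i$ itself, via $ExeRes$ on $tsk$ and via $ExeReq$.
\item It gives the adversary only $\ell(req_i,Q_I,\Delta_I)$ or $\ell(res_i,Q_R,\Delta_R)$, which in the paper's simulation is just $[Q^*_I,\Delta^*_I]$ or $[Q^*_R,\Delta^*_R]$, plus abort/terminate power.
\end{itemize}
Consequently the honest-$A_I$/corrupted-$A_R$ case is \emph{not} symmetric to the one you treat. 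In the real execution the corrupted $A_R$ receives the plaintext $req_i$ inside every $m_i=(req_i,\mathit{NextTok}^{ICP}_i,\mathit{NextTok}^{RCP}_i)$, and the number of rounds is fixed by $tsk$. Your simulator must produce these $m_i$ but never learns $req_i$.

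In the corrupted-$A_I$ case, ``submitting accepted messages on behalf of $A_I$'' is not an available interface; the simulator has to supply a task. Moreover, $\mathcal{F}_{\mathsf{Ases}}$ enforces $\min\{Q_I,Q_R\}$ and $\min\{\Delta_I,\Delta_R\}$ read from $L_{APo}$, not $\Delta_h$. Aligning it with what the corrupted agent actually presented requires acting on $L_{APo}$, which your proposal never does.

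\textbf{Missing idea: task extraction.} The paper's proof hinges on this. Since $sid$ contains $\mathcal{G}_h(tsk)$, the simulator uses the Observe interface of $\mathcal{G}_h$ to recover $tsk$.
\begin{itemize}
\item When $A_I$ is corrupted, it sends $(ChangePolicy,\ldots)$ with the budgets declared in $m_0$. It then sends $(Task, sid, tsk)$ if $\sigma^{TA}_{ac}$, $\sigma^{A_I}_{init}$ and $\sigma^{U_I}_{ICP}$ verify, and $(Task, sid, \perp)$ otherwise. It produces $A_R$'s responses by running $ExeReq$ itself.
\item When $A_R$ is corrupted, it runs $ExeRes$ on the extracted $tsk$ to generate the actual requests shown to $A_R$.
\item In both cases it programs $\mathcal{G}_h$ on the revealed chain tokens, and answers $ok$ or abort to $\mathcal{F}_{\mathsf{Ases}}$ according to its tag and token checks.
\end{itemize}
Only once the corrupted party's view is produced this way do (E1)--(E3) carry the indistinguishability argument.

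\textbf{Smaller point.} Your plan to fix $sid$ through the Provider's token does not match the protocol. $\sigma^{TA}_{ac}$ signs $\nu$ and the two agents' data, but not $sid$. Cross-session chain reuse is instead excluded by $sid$ entering every hash step beneath the user-signed terminal value, as your own (E2) already notes.
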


{\em Proof sketch.}
 $\mathsf{Sim}$  uses the registration and authorization simulators to maintain consistent view for  registration and authorization tokens  and
policies, and simulates the \ases\ handshake and message
transmission. If both agents are honest, $\mathsf{Sim}$ relays the session messages to \Fases\ and delivers the outputs prescribed by \Fases\ that enforces policies.
If one agent is corrupted, $\mathsf{Sim}$ obtains adversarially chosen  messages of that agent through
$\mathcal{A}$, checks the  authorization token, user and agent signatures, HMAC tags, hash-chain token progression using $\mathcal{G}_h$, and the message/time budgets using $\mathcal{G}_{\mathsf{clk}}$.  Invalid messages cause $\mathsf{Sim}$ sends abort to \Fases. 
{\em In-distinguishability } follows from the EUF-CMA security of the signature scheme against quantum polynomial-time adversaries, and the  seudorandomness/unforgeability provided by the HMAC, PRF, and hash-chain checks in the $\mathcal{G}_h$ model.

\begin{lemma}
    The \cses \ protocol $\pi_{Cses}$ of $\pi^{\mathcal{F}_{\mathsf{SCS}}, \mathcal{F}_{CA}, \mathcal{G}_{\mathsf{clk}},\mathcal{G}_h}_{\sagaplus}$ (described in Section \ref{sec:one-to-one}) UC realizes the \cses \ ideal functionality \Fcses \ against static adversaries assuming that \ases \ $\pi_{Ases}$ UC realizes \Fases. 
\end{lemma}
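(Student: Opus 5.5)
We work in the $\mathcal{F}_{\mathsf{Ases}}$-hybrid model, using the hypothesis that $\pi_{Ases}$ UC-realizes \Fases. By the UC composition theorem, with $\mathcal{G}_{\mathsf{clk}}$ and $\mathcal{G}_h$ treated as global subroutines (generalized UC), it suffices to show that the one-to-many protocol $\pi_{Cses}$, with every component \ases\ replaced by an instance of \Fases, UC-realizes \Fcses. Concretely, we build a simulator $\mathsf{Sim}$ that runs the registration and authorization simulators of the previous lemmas, so its view of keys, certificates, and provider tokens stays consistent. Internally it emulates one copy of \Fases\ per planned responder $A_j$, $j\in[1,t]$, and translates between these copies and \Fcses. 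The only cryptographic material that is new relative to an \ases\ is the user-signed Merkle root $Mroot$ with its chain length, the $m$ personalized chains $\rho^i$ of length $n$ with \Nmaster$=m\times n$, and the per-session Merkle proofs. So the simulation argument concentrates on these.

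We split by corruption pattern. If the orchestrator \master\ and all the $A_j$ are honest, $\mathsf{Sim}$ forwards each request, response, and leakage from \Fcses\ into the corresponding emulated \Fases\ instance. It also produces $Mroot$ and the Merkle proofs honestly, programming $\mathcal{G}_h$ as needed. Policy enforcement is then consistent because the honest orchestrator assigns per-session budgets summing to at most \Nmaster\ and checks \Tmaster\ against $\mathcal{G}_{\mathsf{clk}}$ before each request, exactly as \Fcses\ does.

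If some responders $A_j$ are corrupted but \master\ is honest, $\mathsf{Sim}$ handles each such component session using the corrupted-responder branch of the \ases\ simulator. The aggregate budget still holds because the honest orchestrator enforces it locally.

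The main case is a corrupted \master\ with honest responders. Here $\mathsf{Sim}$ extracts, from the adversary's first message to each honest $A_j$, the triple consisting of $Mroot$, the signature $\sigma^{U^*}_{ICP}$, and the Merkle proof of the leaf $\rho^i_{n'}$. It accepts only if every check passes, and it then invokes \Fcses\ with the implied per-session budget.

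The main obstacle is this corrupted-orchestrator case. We must show that the adversary cannot make honest responders jointly accept more than \Nmaster\ \tmsg, or a component session outside \Tmaster. We argue with a sequence of hybrids. First, EUF-CMA security of the user's signature (the user is honest and signs only the genuine $(Mroot,n)$) means any accepted root is the user-signed one, except with negligible probability. Second, collision resistance of $\mathcal{G}_h$ means any accepted Merkle proof opens to one of the $m$ genuine leaves, so at most $m$ distinct chains, each of length $n$, can ever be accepted. Third, the chains are personalized with $\mathit{sid}_j$ and $\mathit{aid}_{A_j}$, so a leaf bound to $A_j$ cannot be replayed to a different responder or session. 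Within each session, preimage resistance of the hash chain in the random-oracle model caps the accepted tokens at $n$. Summing over sessions bounds the total by $m\cdot n=$\Nmaster. One technical point is that the random-oracle collision and preimage bounds must hold against QPT adversaries. We would invoke the quantum query bounds for $\mathcal{G}_h$, which give collision advantage $O(q^3/2^{\ell})$, or assume the classical-query global random oracle stated in the model.

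Finally, we bound the distinguishing advantage by the sum of the forgery, collision, and preimage advantages, which is negligible.
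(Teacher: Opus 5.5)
There is a genuine gap: your composition step and your main case are set in different worlds. Once every component \ases\ is replaced by an instance of $\mathcal{F}_{\mathsf{Ases}}$, no handshake ever reaches an honest responder, so nothing carries $\mathit{Mroot}$, $\sigma^{U^*}_{ICP}$, a Merkle proof or chain openings. $\mathcal{F}_{\mathsf{Ases}}$ only exposes policy, task, request/response and abort interfaces. So there is nothing for $\mathsf{Sim}$ to extract, and your EUF-CMA, collision and preimage hybrids have no objects to act on.

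You cannot keep the real messages instead, because the component sessions of a \cses\ are not runs of $\pi_{Ases}$. Their initiator authorization is a user signature on $(\mathit{Mroot},n')$ plus a Merkle proof, not a signature on $(\rho_{n'},Q_{I,R})$. Their tokens come from chains fixed in the user--agent phase. So the hypothesis that $\pi_{Ases}$ realizes $\mathcal{F}_{\mathsf{Ases}}$ does not license the substitution. (The paper's proof also glosses over this, but it never tries to reason about Merkle material inside $\mathcal{F}_{\mathsf{Ases}}$.) To make your route coherent, you need one of two fixes. Either prove a separate lemma that the Merkle-authorized \ases\ realizes $\mathcal{F}_{\mathsf{Ases}}$, which is where your Merkle/chain argument belongs. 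Or drop composition and argue directly in the $(\mathcal{F}_{\mathsf{scs}},\mathcal{F}_{\mathsf{CA}},\mathcal{G}_{\mathsf{clk}},\mathcal{G}_h)$-hybrid model.

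Your main case is also aimed at the wrong target. $\mathcal{F}_{\mathsf{Cses}}$ lets the adversary overwrite a corrupted agent's policy via ChangePolicy, including the orchestrator's $[(A^*_I,\perp),Q_{tot},\Delta_{tot}]$. So realizing it does not require that honest responders jointly accept at most $Q^*_{tot}$ \tmsg. What it does require is a simulator that turns the corrupted orchestrator's traffic into inputs to $\mathcal{F}_{\mathsf{Cses}}$. That simulator must also produce the honest responders' replies, tags and RCP tokens from what $\mathcal{F}_{\mathsf{Cses}}$ returns. Beyond ``invoke $\mathcal{F}_{\mathsf{Cses}}$ with the implied per-session budget'' (which can only be a ChangePolicy call), you leave this simulation unspecified. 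It is exactly what the paper's proof supplies. The paper emulates one $\mathcal{F}_{\mathsf{A\text{-}auth}}$ and one $\mathcal{F}_{\mathsf{Ases}}$ per responder and feeds them the corrupted orchestrator's per-responder subtasks. For an honest orchestrator it instead uses subtasks obtained by running the task internally, with no Merkle reasoning at all.

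Even as a standalone property, your hybrids fall short in two ways. First, $\sigma^{U^*}_{ICP}$ covers only $(\mathit{Mroot},n')$, so an honest responder has no way to detect a corrupted orchestrator acting beyond $\Delta^*_{tot}$. Second, neither the root nor its signature names this \cses. EUF-CMA therefore rules out only unsigned roots, not roots $U^*$ signed in other runs over leaves the corrupted orchestrator chose. So ``at most $m$ chains'' does not follow.
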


 {\em Proof sketch.}  We consider three cases: (i) all agents are honest, (ii) only \master \ is corrupted, and (iii) all $A_R$'s are corrupted. In all cases $\mathsf{Sim}$ simulates each $\mathcal{F}_{\mathsf{A\text{-}auth}}$ and \Fases. In case (ii), $\mathsf{Sim}$ receives the subtasks for each \Fases \ from the corrupted \master \ and simulates \Fases \ accordingly, but in case (i) and (iii), $\mathsf{Sim}$ receives the task from the dummy \master \ and it runs the task internally and gets the subtasks for each \Fases \ that it will simulate.
\section{Experimental Results}
\label{sec:results}
In this section we present experimental results. 
We measure the overhead of \sagaplus{}
across cryptographic operations, protocol coordination, network
communication, and bandwidth. We further evaluate \sagaplus{} in a
multi-agent setting where an initiating agent coordinates with multiple
receiving agents to complete a task.

\subsection{Setup}
\textbf{Libraries.} We use XMSS\_\allowbreak SHA2\_\allowbreak 16\_\allowbreak 256 \cite{xmss_rfc}, an instance of XMSS using SHA-256, a Winternitz parameter of 16, and a 256-bit security level, to instantiate long-term agent identity keys. All inter-agent and agent–\provider communication is secured via post-quantum mutually authenticated TLS (PQ-mTLS), configured with X25519 and ML-KEM-768 for hybrid key exchange, ML-DSA-65 for transport-layer certificates and mutual authentication, and TLS\_\allowbreak AES\_\allowbreak 256\_\allowbreak GCM\_\allowbreak SHA384 as the cipher suite. The PQ-mTLS stack is implemented using OpenSSL 3.5, while post-quantum cryptographic primitives are provided via the liboqs-python library. SHA-256 serves as the hash function across all protocol operations. 

\noindent \textbf{Agents. }The LLM agents were implemented using AutoGen \cite{autogen}. For comparison with the experimental evaluation of \saga{}, we used gpt-4.1, gpt-4.1-mini, and locally hosted Qwen2.5-72B. In addition, we evaluated other cloud-hosted OpenAI models such as gpt-5.4, gpt-5.4-mini and locally deployed Qwen3.5-30B-Instruct to assess performance across a broader range of configurations.

\noindent \textbf{Device Specifications. }Experiments were conducted on Ubuntu 24.04, running on an AMD Ryzen Threadripper PRO 5955WX with 16 cores and 256~GB of RAM, with all reported results averaged over 100 runs. The Qwen 30B Instruct model ran on an NVIDIA RTX 4090 while Qwen-2.5 72B model used an NVIDIA H100. For all experiments, the user and the agents they own were assumed to reside on the same device.

\noindent \textbf{Baseline.} We use \saga as the baseline protocol. All cryptographic operations in \saga are built on Curve25519, with both long-term and ephemeral keys generated via the X25519 ECDH scheme using 256-bit shared secrets. Certificates follow the X.509 PKI standard and are issued by an internal certificate authority deployed as part of the \provider, with SHA-256 used for all digital signatures and key derivation. Cryptographic primitives were implemented using the cryptography library in Python 3.12. We used the LLM agents that came with the \saga distribution, implemented using the \texttt{smolagents} library. Experiments were conducted with a local Qwen-2.5 72B model on an NVIDIA H100, as well as cloud-hosted OpenAI models accessed via API. The code for \saga was obtained from the github repo \cite{saga_git}.

\subsection{Computation Overhead}

Table~\ref{tab:crypto_overhead} reports the computational overhead of protocol operations in \sagaplus against the \saga baseline. 
Note that for \saga, user registration, agent registration, and the setup phases on both the initiating and receiving agents are analogous to the respective handshake phases in \sagaplus. Notably, \saga does not include counterparts to the Handshake Phase (User) or Contact Resolution (Initiating) phase, as user involvement is not required during the handshake, and contact resolution is subsumed within the setup phase of the initiating agent. As reflected in Table \ref{tab:crypto_overhead}, the token decryption and validation for \saga are incorporated into the per-interaction data transfer costs on the initiating agent, with token validation similarly accounted for on the receiving agent. Token generation by the initiating agent is likewise subsumed within its handshake phase, rather than treated as a separate operation.

On the user side, both user and agent registration in \sagaplus incur $\sim$36~seconds, attributable to XMSS key tree pre-computation. This cost is paid only once at deployment. \provider-side registration, by contrast, takes at most 5.4~ms, confirming that post-quantum key management imposes no scalability burden on the infrastructure side.

Recurring per-session costs in \sagaplus remain uniformly sub-5~ms across all operations. Contact resolution, handshake phases across all three roles, and data transfer together constitute a modest operational footprint. Data transfer in \sagaplus is effectively free at 0.03~ms per interaction, compared to 1.44~ms on the initiating side in \saga --- a reduction of nearly two orders of magnitude. 

\begin{table}[t]
\caption{Computational overhead of \sagaplus{} and \saga{}. 
}
\label{tab:crypto_overhead}
\small
\begin{tabular}{lrr}
\toprule 
\textbf{Operation} & \textbf{\sagaplus{} (ms)} & \textbf{\saga{} (ms)}\\
\midrule
\multicolumn{3}{l}{\textit{User Registration}}\\ 
\quad User Registration (User)     & 36{,}215.33 & 2.34\\ 
\quad User Registration (Provider) & 0.26 & 194.09\\
\midrule
\multicolumn{3}{l}{\textit{Agent Registration}}\\
\quad Agent Registration (User)     & 36{,}002.27 & 15.09 \\
\quad Agent Registration (Provider) & 5.4 & 212.85\\
\midrule
\multicolumn{3}{l}{\textit{Agent Communication (per session)}} \\
\quad Contact Resolution (Provider)   & 2.96 & 1.46\\
\quad Contact Resolution (Initiating) & 4.1 & N/A\\
\quad Handshake Phase (Initiating)    & 4.01 & 3.17 \\
\quad Handshake Phase  (User)          & 1.55 & N/A \\
\quad Handshake Phase (Receiving)     & 4.71 & 1.83\\
\midrule
\multicolumn{3}{l}{\textit{Data Transfer (per interaction)}} \\
\quad Initiating & 0.03 & 1.44\\
\quad Receiving & 0.03 & 0.26\\
\bottomrule
\end{tabular}
\end{table}

\subsection{Protocol Overhead}
We evaluate the overhead introduced by the access control and \provider 
coordination mechanisms of \sagaplus compared with \saga. We consider a 
scenario in which an initiating agent \caler~ issues $m$ requests to a 
receiving agent \caled, subject to a maximum of $Q_{\max}$ requests per 
\ases between \caler~ and \caled. The total overhead comprises both a 
network component, associated with establishing secure communication, and 
a cryptographic component, denoted $t_{\mathrm{crypto}}$, which captures 
the costs of all phases involved in agent communication and data transfer, 
as detailed in Table~\ref{tab:crypto_overhead}. The total protocol overhead 
is modeled as:
\begin{equation}
  c_{\text{proto}}(m) =
    \bigl(\mathrm{RTT}_{A_i,P} + t_{\text{crypto}}\bigr)
    \cdot \left\lceil \frac{m}{Q_{\max}} \right\rceil,
  \label{eq:cproto}
\end{equation}
where $P$ is the \provider and $\mathrm{RTT}_{A_i,P}$ is the round-trip 
time between \caler~ and the \provider. $t_{\text{crypto}} = 20.33$~ms is 
the measured cryptographic overhead per \ases. Each authorization cycle 
must be performed once every $Q_{\max}$ requests.

We sample round-trip times from empirical measurement distributions using 
monitors in US-East, US-West, Europe, and Asia, made available by 
CAIDA~\cite{caida} and AWS~\cite{cloudping}. Figures~\ref{fig:amortised_overhead_a_anywhere} 
and~\ref{fig:amortised_overhead_p_fixed} show the amortized protocol 
overhead:
\begin{equation}
  \bar{c}_{\text{proto}}(m) = \frac{c_{\text{proto}}(m)}{m}
  \label{eq:cproto_bar}
\end{equation}
as a function of $Q_{\max}$, using $m = 100$ requests, under two 
configurations: one in which the \provider location varies 
across four geographic regions while agents are distributed worldwide 
(Fig.~\ref{fig:amortised_overhead_a_anywhere}), and one in which the 
\provider is fixed in US-West while the initiating agent \caler~ is placed 
across the same four regions 
(Fig.~\ref{fig:amortised_overhead_p_fixed}).

\begin{figure}[t]
    \centering
    \begin{subfigure}[b]{0.49\linewidth}
        \centering
        \includegraphics[width=\linewidth]{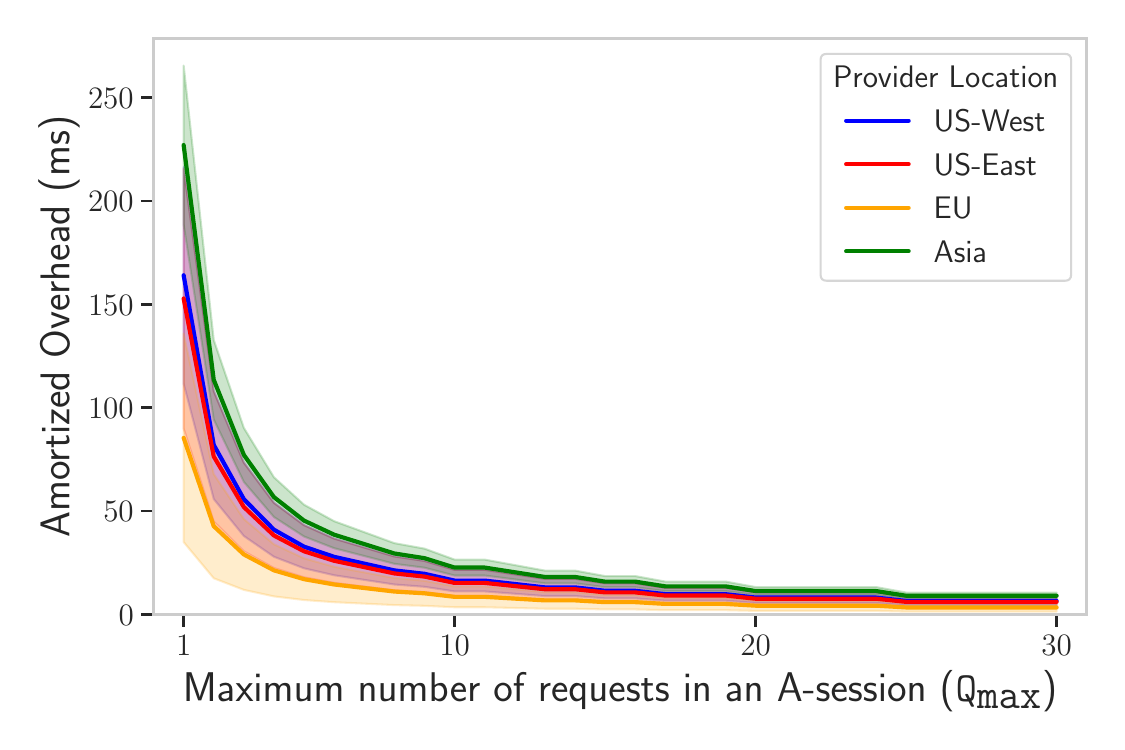}
        \caption{\sagaplus}
        \label{fig:amort_session_sagaplus}
    \end{subfigure}%
    \hfill
    \begin{subfigure}[b]{0.49\linewidth}
        \centering
        \includegraphics[width=\linewidth]{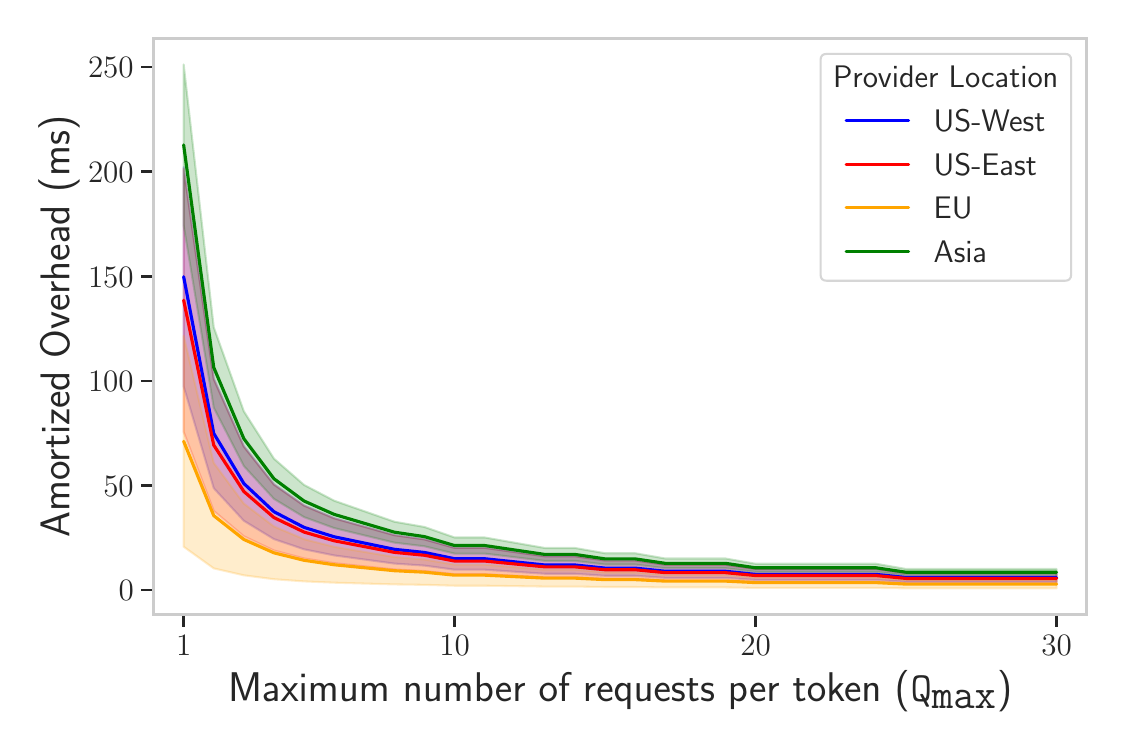}
        \caption{\saga}
        \Description{xx}
        \label{fig:amort_token_saga}
    \end{subfigure}
    \caption{Amortized protocol overhead across provider locations 
    (US-West, US-East, EU, Asia) under varying $Q_{\max}$ for $m = 100$ requests between \caler~ and 
    \caled. Shaded regions reflect the variability in overhead attributable 
    to differences in agent location worldwide.}
    \label{fig:amortised_overhead_a_anywhere}
\end{figure}

\begin{figure}[t]
    \centering
    \begin{subfigure}[b]{0.49\linewidth}
        \centering
        \includegraphics[width=\linewidth]{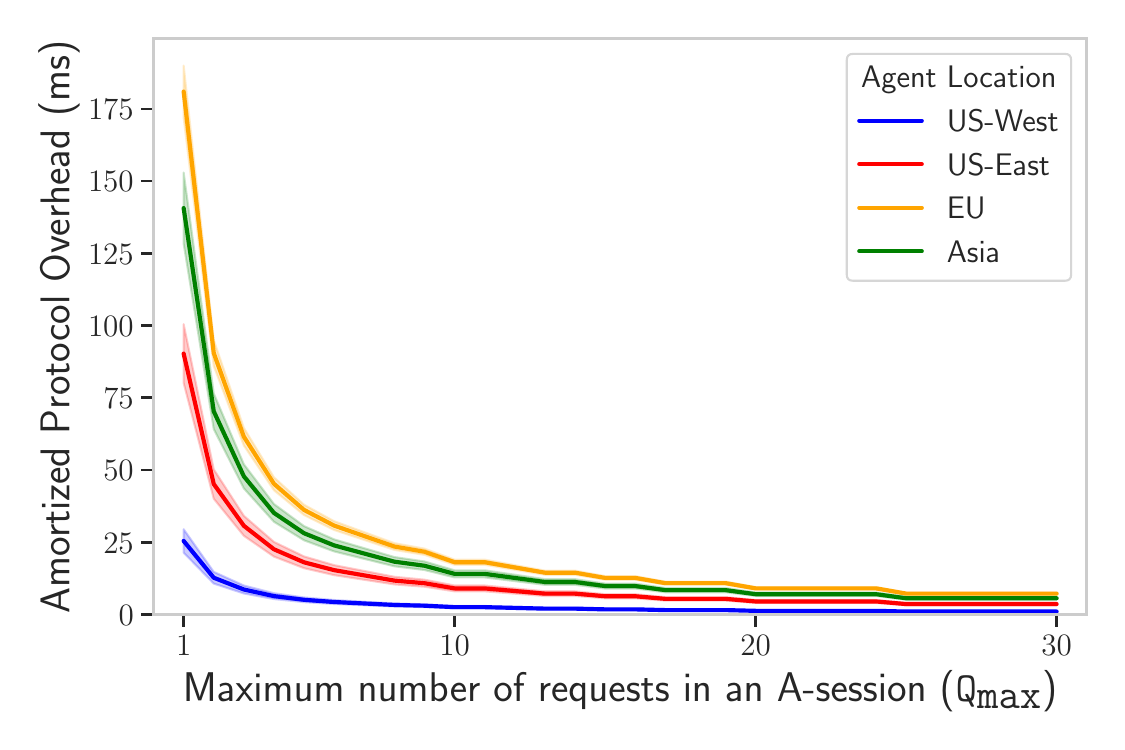}
        \caption{\sagaplus}
        \label{fig:amort_session}
    \end{subfigure}%
    \hfill
    \begin{subfigure}[b]{0.49\linewidth}
        \centering
        \includegraphics[width=\linewidth]{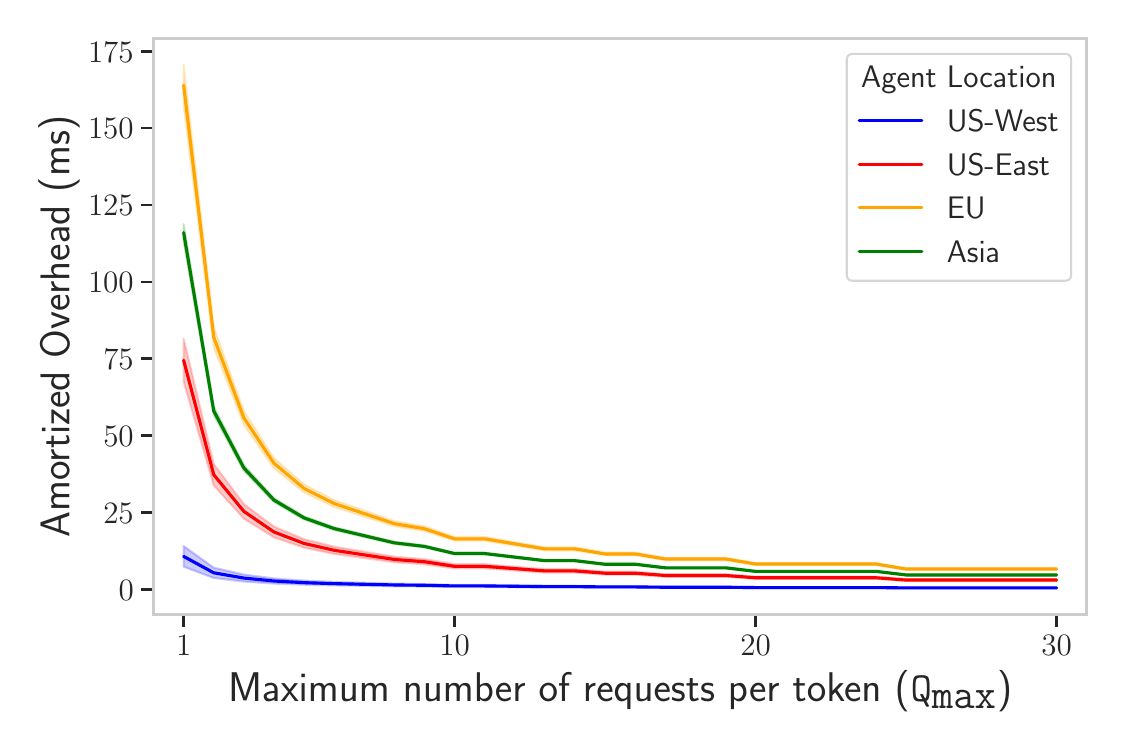}
        \caption{\saga}
        \label{fig:amort_token}
    \end{subfigure}
    \caption{Amortised protocol overhead across \caler~ locations 
    (US-West, US-East, EU, Asia) under varying $Q_{\max}$ for $m = 100$ requests between \caler~ and 
    \caled. Shaded regions reflect the variability in overhead attributable to differences in network conditions across agent locations worldwide.}
    \label{fig:amortised_overhead_p_fixed}
    \Description{To avoid warning.}
\end{figure}

Across both configurations, \sagaplus and \saga exhibit qualitatively 
identical amortization behavior: overhead decreases sharply with increasing 
$Q_{\max}$ in all regional configurations, falling below 40~ms by 
$Q_{\max} = 10$ and converging to within 10~ms of one another by 
$Q_{\max} = 15$. Although \sagaplus incurs an additional post-quantum 
cryptographic overhead absent in \saga, this cost is subsumed by the 
latency variation attributable to geographic separation, which constitutes 
the dominant source of variability in both systems. Consequently, the two 
systems are indistinguishable in their convergence characteristics at 
practical session sizes: an Asia-based agent at $Q_{\max} = 15$ incurs 
overhead comparable to a co-located deployment at $Q_{\max} = 5$, 
regardless of whether the \provider or the agent location is varied, 
confirming that a modest increase in session capacity effectively 
neutralizes geographic latency disparity and that the marginal cost of 
post-quantum security in \sagaplus is not discernible under realistic 
network conditions.

\subsection{Task Completion Overhead}

We evaluate task execution overhead for \saga and \sagaplus across the three 
agentic tasks from \saga: Calendar, Email, and Writing, adopting 
the same deployment configuration---initiating agent \caler~ in Europe, 
receiving agent \caled~ in Asia, \provider in US-West, and $Q_{\max} = 10$ 
requests per \ases~ in between \caler~ and \caled.

\begin{table}[t]
\centering
\caption{Comparison of task execution time (in seconds) between \saga and \sagaplus. We  assume that $A_i$, $A_r$, and the \provider are located in Europe, Asia and US-West respectively, with $Q_{\max}=10$ requests per session.}
\label{tab:task_overhead}
\small
\resizebox{\columnwidth}{!}{%
\begin{tabular}{@{}llrrrrr@{}}
\toprule
& & & \multicolumn{2}{c}{\textbf{Networking (s)}} & \multicolumn{2}{c}{\textbf{Overhead (s)}} \\
\cmidrule(lr){4-5}\cmidrule(lr){6-7}
\textbf{Task} & \textbf{LLM Backend} & \textbf{LLM (s)}
& \saga & \sagaplus
& \saga & \sagaplus \\
\midrule
Calendar & GPT-4.1-mini & 35.524  & 0.791 & 1.140 & 0.165 & 0.176 \\
Email    & GPT-4.1      & 17.8  & 1.319 & 1.140 & 0.165 & 0.176 \\
Writing  & Qwen-2.5     & N/A & 1.319 & N/A & 0.165 & N/A \\
\bottomrule
\end{tabular}%
}
\end{table}

Table~\ref{tab:task_overhead} presents a direct comparison between \saga 
and \sagaplus. The measured overhead of \saga{} is $0.165$~s, while \sagaplus{} incurs $0.176$~s, corresponding to a modest $1.07\times$ increase due to post-quantum cryptographic operations. The protocol overhead is dominated by a single \provider round-trip per $Q_{\max}$ requests (Equation~\ref{eq:cproto}). Nevertheless, in both systems the protocol overhead constitutes a negligible fraction of cumulative LLM inference time. For the Calendar and Email tasks, the \sagaplus overhead of $0.176$~s represents less than $0.8\%$ of cumulative LLM inference time in both cases. The Writing task  could not be completed under Qwen-2.5, as the model's tool-call outputs did not conform to the format expected by the agent framework; we 
attribute this to the limitations of an older model generation.


Table~\ref{tab:task_overhead_sagaplus} evaluates \sagaplus across the same three 
tasks using current-generation LLMs: GPT-5.4, GPT-5.4-mini, and 
Qwen-3.5. Across all configurations, \sagaplus overhead remains fixed at 
$0.176$~s, confirming that it is independent of the underlying LLM, as expected from the protocol design. The newer models 
yield substantially reduced LLM inference times and varying networking costs, reflecting the differing number of inter-agent interactions required to complete each task. Qwen-3.5 achieves the lowest inference times on Calendar ($7.275$~s) and 
Email ($7.298$~s) tasks, while GPT-5.4-mini offers competitive performance across all three tasks at reduced cost relative to GPT-5.4. In all 
cases, \sagaplus overhead of $0.176$~s represents well under $2\%$ of total execution time, confirming that the cost of post-quantum access control is effectively subsumed by LLM inference latency across 
all evaluated models and task types.

\begin{table}[t]
\centering
\caption{Task execution time (in seconds) for \sagaplus.  We  assume that \caler, \caled, and \provider are located in Europe, Asia, and US-West respectively, with $Q_{\max}=10$ requests per session.}
\label{tab:task_overhead_sagaplus}
\small
\begin{tabular}{llccc}
\hline
\textbf{Task} & \textbf{\shortstack{LLM\\Backend}} & \textbf{LLM (s)} & \textbf{\shortstack{Network\\(s)}} & \textbf{\shortstack{\sagaplus\\Overhead (s)}} \\
\hline
\multirow{3}{*}{Calendar}
  & GPT-5.4      & 12.507 &0.899 & \multirow{3}{*}{0.176} \\
  & GPT-5.4-mini & 11.367 &0.899 & \\
  & Qwen-3.5     & 7.275  & 1.140 & \\
\hline
\multirow{3}{*}{Email}
  & GPT-5.4      & 14.009 & 0.656 & \multirow{3}{*}{0.176} \\
  & GPT-5.4-mini & 8.848  & 0.656 & \\
  & Qwen-3.5     & 7.298  &0.899 & \\
\hline
\multirow{3}{*}{Writing}
  & GPT-5.4      & 30.498 & 1.140 & \multirow{3}{*}{0.176} \\
  & GPT-5.4-mini & 10.421 &0.899 & \\
  & Qwen-3.5     & 10.652 & 1.140 & \\
\hline
\end{tabular}
\end{table}

\subsection{Performance Evaluation of Provider}
We evaluate the cumulative overhead imposed on the \provider as a function 
of \ases~ lifetime, varying the number of initiating agents. Because 
each new \ases~ requires one contact-resolution operation at the 
\provider (2.96~ms, Table~\ref{tab:crypto_overhead}), a shorter session 
lifetime increases the rate at which this operation must be performed and 
thus the total \provider burden. For $n$ initiating agents each 
refreshing sessions of lifetime $\ell$~minutes, the daily overhead is:
\begin{equation}
  C_{\text{\provider}}(n, \ell)
    = n \cdot \frac{1440}{\ell} \cdot t_{\text{crypto}},
  \label{eq:provider_overhead}
\end{equation}
where $t_{\text{crypto}} = 2.96$~ms is the measured contact-resolution 
cost and $1440$ is the number of minutes in a day. Compared with \saga, 
which incurs a contact-resolution cost of 1.46~ms per-session, 
\sagaplus introduces approximately $2\times$ the per-session \provider 
cost, owing to its additional post-quantum cryptographic operations.

Figure~\ref{fig:provider_overhead_session} plots $C_{\text{\provider}}$ 
for $n \in \{1, 10, 100\}$ agents across session lifetimes ranging from 
one minute to one day. Two structural observations follow directly from 
Equation~\ref{eq:provider_overhead}. First, overhead scales linearly with 
agent count: the 100-agent curve lies exactly one order of magnitude above 
the 10-agent curve at every point. Second, overhead is inversely 
proportional to session lifetime: moving from a 1-minute to a 1-day 
session reduces \provider cost by three orders of magnitude for any fixed 
agent population. Even accounting for the $2\times$ increase in 
per-session cost relative to \saga, the absolute \provider burden remains 
negligible across all evaluated configurations, confirming that the 
post-quantum overhead of \sagaplus does not materially alter the 
scalability characteristics of the system.

\begin{figure}[t]
\centering
\includegraphics[width=0.6\columnwidth]{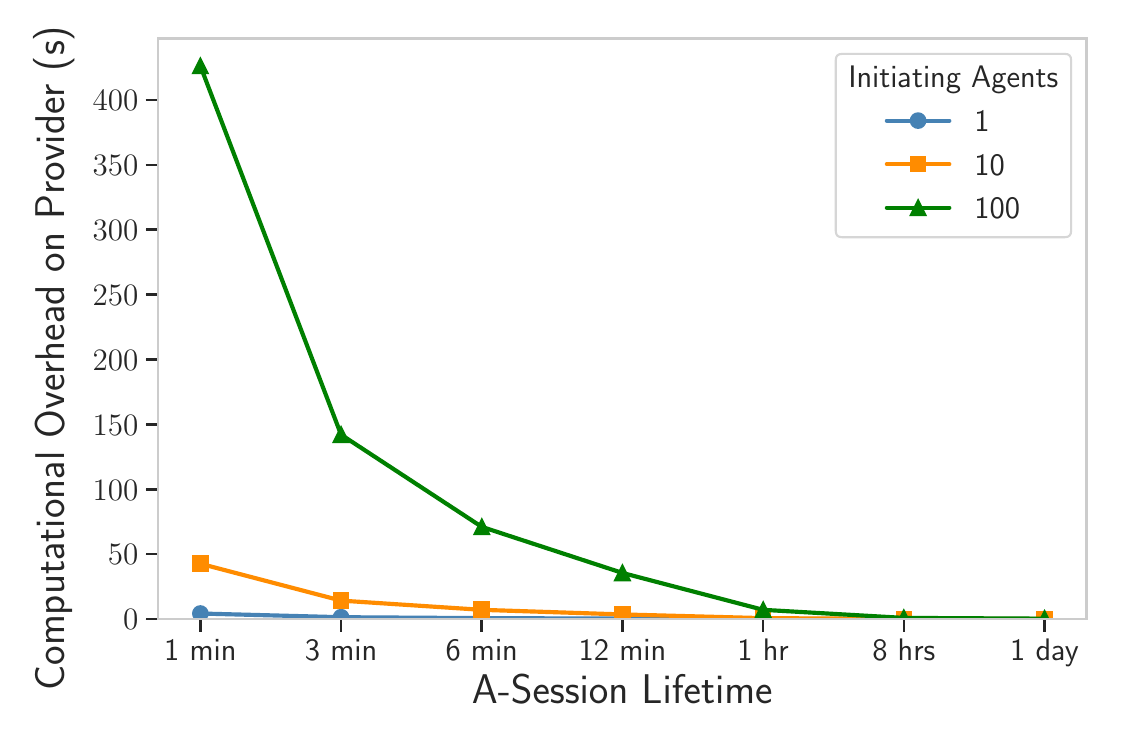}
\caption{Computational overhead on the \provider as a function
of \ases lifetime, for 1, 10, and 100 initiating agents.
}
\label{fig:provider_overhead_session}
\Description{To avoid warning.}
\end{figure}

\subsection{Bandwidth Overhead}

Table~\ref{tab:bandwidth} summarizes the bandwidth requirements across protocol phases for \saga{} and \sagaplus{}. For \saga{}, we assume $N = 100$ one-time keys (OTKs) are registered by the user on behalf of the agent. For both protocols, we consider a setting in which an initiating agent \caler~ issues $m = 10$ requests to a receiving agent \caled, with a maximum of $Q_{\max} = 10$ requests per \ases.

User and agent registration incur one-time costs and therefore do not affect per-task bandwidth. In \saga{}, these costs amount to 0.6~KB for user registration and 10.40~KB for agent registration, whereas in \sagaplus{} they increase to 6.98~KB and 27.28~KB, respectively. The most significant difference arises during session establishment, which constitutes the recurring per-token-cycle overhead: \saga{} requires 2.12~KB per cycle, while \sagaplus{} incurs 88.65~KB, representing a $\sim$42$\times$ increase. The bulk of this overhead is due to the post quantum certificates and signatures being transmitted in between \caler~ and \caled~ during the handshake phase. In contrast, per-request data transfer remains comparable between the two protocols, at 0.59~KB for \saga{} and 0.56~KB for \sagaplus{}. When using \sagaplus for tasks involving more than 2 agents, only the bandwidth required for the agent communication phase grows linearly with the number of agents. Specifically, with $t$ agents the cost scales $(t-1)\times$ the per session bandwidth.

\begin{table}[t]
\centering
\caption{Bandwidth requirements (in KB) per protocol phase for \saga
and \sagaplus. \saga parameters: $N=100$ OTKs,
$m=10$, $Q_{\max}=10$. \sagaplus parameters: $m=10$,
$Q_{\max}=10$.}
\label{tab:bandwidth}
\small
\begin{tabular}{lrr}
\toprule
\textbf{Operation} & \textbf{\saga} & \textbf{\sagaplus} \\
\midrule
User Registration                 & 0.6     & 6.98  \\
Agent Registration                & 10.40  & 27.28 \\
Agent Communication (per session) & 2.12   & 88.65  \\
Data Transfer (per request)       & 0.59    & 0.56 \\
\bottomrule
\end{tabular}
\end{table}

\subsection{One-Agent to Many-Agents Overhead}

We evaluate the overhead borne by an \masterw (\caler) that
coordinates with $t$ receiving agents to complete a task.
For each receiving agent \caled, \caler~ must establish one \ases~
comprising a contact-resolution step and a handshake phase.
We assume all \ases s share the same lifetime $\ell$ and that
each \ases~ comprises $Q_{max}$=10 requests.
The per-session cryptographic cost on \caler~ is:
\begin{equation}
  c_{\text{init}}(t)
    = \underbrace{4.1\,t}_{\text{contact resolution}}
    + \underbrace{4.01\,t}_{\text{handshake}}
    = 8.11\,t \quad \text{(ms)},
  \label{eq:init_cost}
\end{equation}
where costs are taken from Table~\ref{tab:crypto_overhead}.
Because a new session must be established every $\ell$ minutes,
the total daily overhead on \caler~ is:
\begin{equation}
  C_{\text{init}}(t,\,\ell)
    = c_{\text{init}}(t) \cdot \frac{1440}{\ell},
  \label{eq:init_daily}
\end{equation}
with 1440 denoting the number of minutes in a day.
Figure~\ref{fig:oh_initiating_mas} plots $C_{\text{init}}$ for
$t \in \{1, 2, 5, 10, 15\}$ across session lifetimes ranging from
one minute to one day.
The overhead is governed by two independent axes.
At a fixed lifetime, cost scales linearly with $t$: at a 1-minute
session lifetime, overhead ranges from $\sim$12 seconds for $t=1$ to
$\sim$175~seconds for $t=15$, a 15$\times$ ratio consistent with
Equation~\ref{eq:init_cost}.
At a fixed $t$, cost falls inversely with $\ell$: for $t=15$,
extending the session lifetime from 1~minute to 1~hour reduces
daily overhead from $\sim$175~seconds to under 3~seconds, and to
under 0.4~seconds at 8~hours or beyond.
For all values of $t$, the curves converge to negligible overhead
beyond a 1-hour lifetime, confirming that session lifetime is the
dominant design parameter and that \sagaplus\ imposes no meaningful
burden on \caler~ under practical deployment conditions.

\begin{figure}[t]
\centering
\includegraphics[width=0.6\columnwidth]{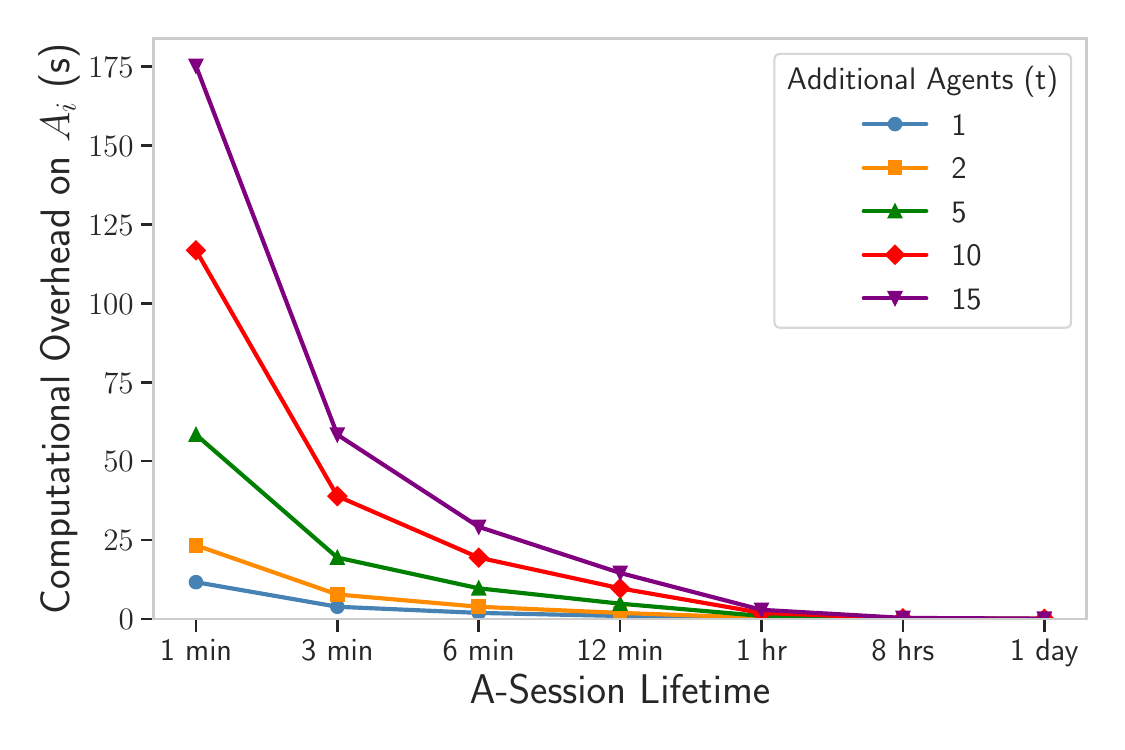}
\caption{Daily computational overhead on \caler~ as a function of
\ases~ lifetime, for $t \in \{1, 2, 5, 10, 15\}$ receiving agents.
Per-session cost follows Equation~\ref{eq:init_cost} using measured
cryptographic costs; daily overhead follows
Equation~\ref{eq:init_daily}.
All \ases s are assumed to share the same lifetime.}
\label{fig:oh_initiating_mas}
\Description{To avoid warnings.}
\end{figure}

\subsection{Multi-Agent Task Completion Overhead}

\begin{table}[t]
\centering
\caption{Task execution time (seconds) for \sagaplus with 3 agents. \caler~ is located in Europe, 2 \caled-s in Asia, and \provider in US-West, with $Q_{\max}=10$ requests per session.}
\label{tab:task_overhead_sagaplus_ma}
\small
\begin{tabular}{llccc}
\hline
\textbf{Task} & \textbf{\shortstack{LLM}} & \textbf{LLM(s)} & \textbf{\shortstack{Network(s)}} & \textbf{\shortstack{\sagaplus(s)}} \\
\hline
\multirow{3}{*}{Calendar}
  & GPT-5.4      & 50.606 & 1.708 & \multirow{3}{*}{0.336} \\
  & GPT-5.4-mini & 42.203 & 1.950 & \\
  & Qwen-3.5     & 23.487  & 1.950 & \\
\hline
\multirow{3}{*}{Email}
  & GPT-5.4      & 47.669 & 1.223 & \multirow{3}{*}{0.336} \\
  & GPT-5.4-mini & 34.261  & 1.465 & \\
  & Qwen-3.5     & 31.194  & 1.708 & \\
\hline
\multirow{3}{*}{Writing}
  & GPT-5.4      & 66.731 & 1.223 & \multirow{3}{*}{0.336} \\
  & GPT-5.4-mini & 10.421 & 1.223 & \\
  & Qwen-3.5     & 36.471 & 2.192 & \\
\hline
\end{tabular}
\end{table}

We evaluate task execution overhead for \sagaplus across the three agentic tasks---Calendar, Email, and Writing---extending the two-agent deployment of \sagaplus to a multi-agent setting. For both \sagaplus and \saga the \masterw (\caler) is in Europe, receiving agents \caled~ in Asia, and the Provider in US-West, with $Q_{\max} = 10$ requests per \ases~ between \caler~ and \caled. Each task is remodeled for the multi-agent setting, comprising one \masterw agent and two receiving agents executing a sequential, static workflow, where the total overhead encompasses task completion across all three agents. As reported in Table \ref{tab:task_overhead_sagaplus_ma}, \sagaplus incurs a constant overhead of 0.336 s across all tasks and LLM backends---identical to that observed in the two-agent case (Table 4, 0.176 s)---confirming that the protocol cost scales predictably with the number of agents. Crucially, this overhead remains well under 4\% of the LLM inference time across all configurations (ranging from 10.421 s to 66.731 s), demonstrating that \sagaplus's overhead is effectively subsumed by model inference latency, irrespective of agent topology or task type.
\section{Related Work}
\label{sec:relwork}


\textbf{Designs for Agent Governance. }
Recent work has moved beyond static permissions toward dynamic, machine-verifiable delegation. Chen \citep{aith} introduces AITH, a post-quantum continuous delegation protocol that replaces discrete signing events with sub-microsecond boundary checks, allowing agents to operate autonomously within cryptographically attested ``standing instructions''. Addressing the non-deterministic nature of agent behavior, Bhardwaj \citep{abc} proposes Agent Behavioral Contracts (ABC), which apply Design-by-Contract principles to enforce preconditions, invariants, and recovery mechanisms at runtime. Kaptein et al. \citep{runtimegovernanceaiagents} further formalize this via ``Policies on Paths,'' arguing that governance must treat the agent's execution trajectory as a deterministic function of its identity and prior actions to prevent information barrier breaches.

\noindent\textbf{Inter-Agent Protocol Implementations. }
The transition toward an ``Agentic Web'' has spurred the development of standardized communication layers. While early implementations like Agent Protocol from langchain lacked robust security, the Agent Communication Protocol (ACP) \citep{acp} now provides a federated orchestration model integrating decentralized identifiers (DIDs) and automated service-level agreements. Google's Agent-to-Agent (A2A) protocol \citep{A2A} facilitates discovery via ``Agent Cards,'' though it lacks the centralized mediation found in the SAGA architecture. Furthermore, Anbiaee et al. \citep{securitythreatmodelingemerging} provide a systematic security analysis of emerging protocols (MCP, A2A, Agora, and ANP), identifying design-induced risk surfaces such as token scope escalation and naming collisions.

\noindent\textbf{Attacks on Multi-Agent Systems. }
Several works examine adversarial propagation in multi-agent communication \cite{gu2024agent,he2025red,yu2024infecting,lee2024prompt,amayuelas2024multiagent}, where rogue agents can propagate malicious outputs via interactions with other agents. Triedman et al. \citep{cfh} demonstrate Control-Flow Hijacking (CFH) attacks, where orchestrators are manipulated via indirect prompt injection to misroute sub-tasks to malicious agents. Chen et al. \citep{adapam} introduce AdapAM, a framework for black-box adversarial attacks that utilizes adaptive selection to induce malicious actions in victim agents with high stealth. 
Wang et al. \citep{killchaincanaries} introduces the ``Kill-Chain Canary'' methodology, which tracks cryptographic tokens through discrete stages (Exposed, Persisted, Relayed, Executed) to localize where defenses fail. 
\section{Conclusion}
\label{sec:conclusion}
In this paper we presented \sagaplus, a framework for policy definition
and enforcement of multi-agentic AI systems using novel highly
efficient quantum-resistant cryptographic protocols with proved
security guarantees. \sagaplus (i) allows users to define rich communication and access control policy budgets; (ii) enforces such policies using post-quantum cryptographic primitives; (iii) supports session-based enforcement of policies for
agent-to-agent and one-to-many agent sessions; and (iv) provides
accountability of agents to their users in the form of \tmsg attribution. We formally model and prove correctness and security
of the system using the UC framework.
Our evaluation results demonstrate that \sagaplus is a viable solution for agentic AI governance systems.



\bibliographystyle{ACM-Reference-Format}
\bibliography{bib/arch,bib/pg_f,bib/crypto,bib/tushin}

\appendix 




\section{Ethical Considerations}
Our paper is not an attack paper, it does not use any public dataset, or human data collection, so we believe that there are no ethical concerns.



\section{Notations}
\label{App:notations}

We present the notations used throughout the paper in Table \ref{tab:notation}.

\begin{table}
    \centering
    \caption{Notations}
    \label{tab:notation}
    \small
    \begin{tabular}{|l|l|}
    \hline
    \textbf{Symbol} & \textbf{Description} \\
    \hline
    \hline
    \multicolumn{2}{|c|}{{\em Entities}}\\
    \hline
     $A_I$ & Initiator agent\\
    \hline
     $A_R$ & Responding agent\\
    \hline
     $A^{*}_{I}$ & Orchestrator agent \\
    \hline 
    $U$ & User \\
    \hline 
    $TA$ & Provider \\
    \hline
    $CA$ & Certificate authority \\
    \hline
    \hline
    \multicolumn{2}{|c|}{{\em \sagaplus \ Protocols}}\\
    \hline
     $Cert$ & Certificate issued by $CA$\\
     \hline
     $uid_U$ & User $U$ identifier \\
     \hline
     $aid_A$ & Agent $A$ identifier\\
     \hline
     $ED_A$ & Agent $A$ endpoint descriptor\\
     \hline 
     $M_A$ & Metadata of agent $A$\\
     \hline
     $(pk^{tls}_A, sk^{tls}_A)$ & Agent $A$ public and private PQ-TLS credentials\\
     \hline 
     $(pk_A, sk_A)$ & Agent $A$ public and private keys for hash based signature\\
     \hline
     $CP_A$ & Contact policy of Agent $A$ (ACL and \ases\ budget)\\
     \hline
     $\sigma^{Y}_{X}$ & Signature generated by $Y$ with type $X$ \\
     \hline
     $NextTok$ & Next message count token for \tmsg \ budget \\ 
     \hline
     $DU$ & Provider's user registry\\
     \hline
     $DA$ & Provider's agent registry \\
     \hline
     $HS$ & Hash-based signature\\
     \hline
     $req$ & Request\\
     \hline
     $res$ & Response\\
     \hline
     $s_0, \rho_0$ & Random seeds for \tmsg \ count tokens \\
     \hline
     $Mroot$ & Merkle root \\
     \hline
     $n, n'$ & hash chain length\\
     \hline
     \hline
     \multicolumn{2}{|c|}{{\em Security Model and Analysis}}\\
    \hline
    $\mathsf{Sim}$ & Simulator \\
    \hline
     $\mathsf{Sim}^A$ & Simulator with oracle access to $A$ \\
    \hline
    $\mathcal{A}$ & Adversary \\
    \hline
    $\mathcal{Z}$ & Environment\\
    \hline
     $\mathcal{F}$ & Ideal functionality\\
     \hline
     $\mathcal{G}$ & Global ideal functionality\\
     \hline
     $\pi$ & Protocol \\
     \hline
     $\pi^{\mathcal{F}_1, \mathcal{F}_2, \cdots}$ & Hybrid protocol that uses $\mathcal{F}_1, \mathcal{F}_2, \cdots$ as subroutines\\
     \hline
     $\tau$ & Current time read from $\mathcal{G}_{\mathsf{clk}}$\\
     \hline
     $T_{exp}$ & Expiry time \\
     \hline
     $Q$ & \tmsg budget \\
     \hline
     $\Delta$ & Time budget\\
     \hline
     $sid$ & Session identifier \\
     \hline
     $IDI$ & Initiator's identity (in $\mathcal{F}_{\mathsf{SCS}}$) \\
     \hline
     $P$, $P'$ & Party (in $\mathcal{F}_{\mathsf{SCS}}$) \\
     \hline
     $I$ & Initiator \\
     \hline
     $R$ & Responser  \\
     \hline
     $L$ & Set representation \\
     \hline 
     $Apolicy$ & Policy of agent consisting of \tmsg and time budgets\\
     \hline
     $\ell()$ & Leakage function\\
     \hline
     $\ell$ & Leakage value\\
     \hline
     $ctr$ & Counter\\
     \hline
     $m$ & \tmsg \\
     \hline
     $attr_A$ & Attribute of agent $A$\\
     \hline
     $Pwd$ & Password \\
     \hline 
     $tsk$, $subtsk$ & Task and subtask, respectively\\
     \hline
     $Budget$ & Contact budget \\
     \hline
     $ExeReq()$ & The request execution function\\ 
     \hline 
     $ExeRes()$ & The response execution function\\ 
     \hline
    \end{tabular}
\end{table}

\section{Cryptographic Enforcement of Policies}
\label{Appendix:policyEnforce}

Below we define each of the three policies of CP, RCP, and ICP and give the mechanisms that are used to enforce each of them in our protocol.


\subsection{Contact Policy $CP$} Users define and administer contact policies for their registered agents. The contact policy $CP$ states that which agents can establish a connection with whom and for how many times (budget). $CP$ consists of a set of declarative rules such as ("receive, *@company.com:email-agent", 10) which allows an email handling agent from a specified domain to initiate contact at most 10 times, or ("send, *@company.com:email-agent", 20) which allows the agent to initiate an A-session with an email handling agent for 20 times.  If multiple rules match, the one with the highest specificity is selected. The number of connections between an initiating agent $A_I$ and responding agent $A_R$, i.e., the budget, is defined as below:

$Budget(aid_{A_R}, aid_{A_I})= \Bigg \{ \begin{tabular}{l l} $-1$ & if $R=\emptyset$ \\ $B(r^*)$  & if $R\neq \emptyset$ \end{tabular} $

Which state that $r^*$ is the most specific rule among all rules $R \in CP_{A_R}$ when $A_R$ and $A_I$ are establishing an A-session, and $B(r^*)$ corresponds to the rule $r^*$. $R=\emptyset$ allows the user to distinguish between no match in policy and an expired budget.

\textbf{Enforcing the contact policy $CP$:} when an initiating agent $A_I$ queries the provider to contact $A_R$, the provider first verifies that the agents participating in the A-session satisfy the contact policy $CP_{A_R}\cap CP_{A_I}$, which means that $CP_{A_R}\cap CP_{A_I}\neq \emptyset$.  If this is the first time $A_I$ and $A_R$ connects, then the provider creates a counter $Counter[aid_{A_R}][aid_{A_I}]$ 
to keep track of the number of contacts and initializes it with $Min\{Budget(aid_{A_R},aid_{A_I}), Budget(aid_{A_R},aid_{A_I})\}$.

If the policy check succeeds, the provider returns a signature $\sigma^{TA}_{ac}$ to agent $A_I$ on the recipient's data and decreases the counter by one: $\sigma^{TA}_{ac}=HS.Sign_{sk_{TA}}(\nu, Cert^{ID}_{U_R},aid_{A_R},M_{A_R}, \sigma^{U_R}_{ID}, \sigma^{U_R}_{A_R})$, where $\nu$ is a nonce used to ensure token freshness. We assume the receiver agent will keep track of the nonce. 
If $Budget$ is negative, or exhausted, then provider sends a failure message and terminates the connection.
The user of agent $A_R$ can update the contact policy and dedicate a new budget at any time. 

\remove{
\begin{figure}
    \centering
    \includegraphics[width=0.5\linewidth]{Figures/CP.jpg}
    \caption{Enforcing contact policy CP by the provider}
    \label{CP}
\end{figure}
}

\subsection{Responder Contact Policy $RCP$} 

We consider RCP is defined by the user for a responding agent $A_R$, and consists of the tuples of the form $(A_R, A_I, Q_{R,I}, \Delta_{R,I})$ which states that in an A-session between $A_I$ and $A_R$,  the maximum number of requests that $A_R$ can receive is $Q_{R,I}$, and the maximum duration of the A-session is $\Delta_{R,I}$. 

\textbf{Enforcing the interaction budget in $RCP$:} We assume the responding agent can be corrupted and define a scheme which allows the responding agent to keep the total budget counting correctly, while it ensures the budget has indeed been determined by the user, and the initiating agent can \textit{efficiently} verify when the responding agent exceed the maximum allowed budget.

\begin{itemize}

    \item When $A_I$ establishes an A-session with $A_R$, the responder agent 
    $A_R$ derives a random seed:
    \[
        s_0 = \mathsf{PRF}(\mathit{sid},\; \mathit{aid}_{A_I},\; 
        \mathit{aid}_{A_R},\; \mathit{addr})
    \]
    where $\mathit{addr}$ is the chain index for this A-session (set to $0$ 
    for the first chain). It then builds a \emph{personalized hash chain} of 
    length $n = Q_{R,I}$:
    \[
        s_j = H(s_{j-1},\; j,\; \mathit{sid},\; \mathit{aid}_{A_I}), 
        \qquad j = 1, \ldots, n
    \]
    Binding $\mathit{aid}_{A_I}$ into each step prevents a corrupted $A_R$ 
    from reusing the chain with another agent to exceed $Q_{R,I}$; binding 
    $\mathit{sid}$ prevents reuse across A-sessions; and the index $j$ allows 
    $A_I$ to track which hash values $A_R$ opens and detect any point of 
    cheating. These values serve as \emph{hash-based message count tokens} 
    that enforce the interaction budget. Agent $A_R$ then sends the terminal 
    value $s_n$ to the user $U_R$.

    \item User $U_R$ signs the terminal value and chain length:
    \[
        \sigma^{U_R}_{\mathit{RCP}} = \mathsf{HS.Sign}_{sk_{U_R}}(s_n,\; 
        Q_{R,I})
    \]
    and sends $\sigma^{U_R}_{\mathit{RCP}}$ to $A_R$.

    \item Agent $A_R$ forwards $s_n$, $s_{n-1}$, $Q_{R,I}$, $\Delta_{R,I}$, 
    and $\sigma^{U_R}_{\mathit{RCP}}$ to the initiator $A_I$.

    \item Agent $A_I$ verifies the user's signature:
    \[
        \mathsf{HS.Verify}_{pk_{U_R}}\!\left(
            \langle\, s_n,\; Q_{R,I}\, \rangle,\; 
            \sigma^{U_R}_{\mathit{RCP}}
        \right) = 1
    \]
    and checks that $Q_{R,I} = n$ and $H(s_{n-1},\, n,\, \mathit{sid},\, 
    \mathit{aid}_{A_I}) = s_n$. If all checks pass, $A_I$ is assured that 
    $A_R$ has correctly initialized the interaction budget, and initializes a 
    counter $\mathit{ctr}_{\mathit{RCP}} = Q_{R,I}$, decrementing it by one. 
    In subsequent rounds, each time $A_I$ receives a token it verifies:
    \[
        s_i = H(s_{i-1},\; i,\; \mathit{sid},\; \mathit{aid}_{A_I})
    \]
    and decrements $\mathit{ctr}_{\mathit{RCP}}$. When the counter reaches 
    zero, $A_I$ performs a final check:
    \[
        s_0 = \mathsf{PRF}(\mathit{sid},\; \mathit{aid}_{A_I},\; 
        \mathit{aid}_{A_R},\; \mathit{addr})
    \]
    At this point $A_R$ must cease accepting further requests; an honest $A_I$ 
    will stop sending them regardless, though a corrupted $A_R$ may not comply.

\end{itemize}

We guarantee that if both agents that are involved in an A-session are honest or only one of them is honest then the interaction budget will be correctly enforced. 
Note that we do not ensure any guarantee for the case when both agents are corrupted.

\remove{
\begin{figure}
    \centering
    \includegraphics[width=0.5\linewidth]{Figures/RCP.jpg}
    \caption{Enforcing the interaction budget in the receiver contact policy RCP}
    \label{RCP}
\end{figure}
}


\subsection{Initiator Contact Policy $ICP$} We assume that the user chooses $Q_{tot}$, that is, the maximum number of requests for an initiator agent that can send to other agents toward fulfilling a task, and $\Delta_{tot}$ that is the maximum duration of time that an initiator agent can interact with other agents to fulfill a given task.  These two values are stored as the Initiator contact policy $ICP$, that is, $ICP=(Q_{tot},\Delta_{tot})$.

%

\textbf{Enforcing the interaction budget in $ICP$ (static workflow).} We assume that the \masterw \ agent $A^{*}_{I}$ acts as an initiator and orchestrates the communication with other agents $A_1,A_2, \cdots, A_t$ through sessions $S_1, S_2, ..., S_t$. We design a scheme that enforces the total budget $Q_{tot}$ throughout all these sessions that  allows $A^{*}_{I}$ to keep the total budget counting correctly while ensuring that the budget has indeed been determined by user $U^*$. Also, the responding agents
$A_1$, $A_2$, $A_3$, ....,  can \textit{efficiently} identify when $A^{*}_{I}$ exceeds the budget without knowing the history of interactions of $A^{*}_{I}$ with other agents and without interacting with other agents (just using their local views). We require that the communication should stay almost constant (independent of the request number)  for each pair of agents.

We note that a single hash chain of length $Q_{tot}$ will not work for $ICP$ since the agents need to know the history of the consumed hash tokens in order to verify the token presentations. This will also lead to a communication cost that grows linearly with the number of requests. Therefore, this naive approach will not work. To circumvent around this, we first consider that the agents follow an static workflow where the \masterw \ agent knows which agents it has to contact to fulfill the task as soon as it receives the task. We design the hash-based tokens to enforce the interaction budget for an static workflow as below. 

\begin{itemize}

    \item The user $U^*$ determines the maximum interaction budget $Q_{tot}$ 
    for $\mathit{ICP}$.

    \item Upon receiving a task from $U^*$, agent $A^*_I$ processes it and 
    obtains the workflow, identifying the list of agents $A_1, \ldots, A_t$ 
    to contact. It then creates $m$ personalized hash chains of length $n$, 
    where each chain is seeded as:
    \[
        s_0 = \mathsf{PRF}(\mathit{sid},\; \mathit{aid}_{A^*_I},\; 
        \mathit{aid}_{A_i},\; \mathit{addr})
    \]
    and extended as $s_j = H(s_{j-1},\, j,\, \mathit{sid},\, 
    \mathit{aid}_{A_i})$ for $j = 1, \ldots, n$. The parameters satisfy 
    $Q_{tot} = n \times m$ and $m \geq t$, ensuring the total chain length 
    does not exceed $Q_{tot}$ and that at least one chain is available per 
    A-session; $A^*_I$ may allocate two or more chains to a single A-session. 
    Agent $A^*_I$ then sends the terminal values $\{s_n, s'_n, \ldots, 
    s^\ell_n\}$ of all personalized hash chains to user $U^*$.

    \item The user $U^*$ constructs a Merkle tree over the terminal values and 
    signs the root together with the chain length:
    \[
        \sigma^{U^*}_{\mathit{ICP}} = \mathsf{HS.Sign}_{sk_{U^*}}
        (\mathit{Mroot},\; n),
    \]
    and sends the Merkle tree and signature $\sigma^{U^*}_{\mathit{ICP}}$ to 
    $A^*_I$.

    \item When $A^*_I$ initiates an A-session with agent $A_i$, $i \in [1,t]$, 
    it sends to $A_i$:
    \begin{itemize}
        \item the Merkle root $\mathit{Mroot}$ and user signature 
        $\sigma^{U^*}_{\mathit{ICP}}$
        \item a Merkle proof that the hash chain root is included in the tree 
        signed by $U^*$
        \item the chain openings $s_n$ and $s_{n-1}$, where 
        $s_n = H(s_{n-1},\, n,\, \mathit{sid},\, \mathit{aid}_{A_i})$
    \end{itemize}

    \item Agent $A_i$ verifies the following:
    \begin{itemize}
        \item the user's signature: $\mathsf{HS.Verify}_{pk_{U^*}}
        (\langle \mathit{Mroot},\, n \rangle,\, 
        \sigma^{U^*}_{\mathit{ICP}}) = 1$
        \item the Merkle proof is valid
        \item the hash step: $s_n = H(s_{n-1},\, n,\, \mathit{sid},\, 
        \mathit{aid}_{A_i})$
    \end{itemize}
    If all checks pass, $A_i$ is assured that $U^*$ has authorized the use of 
    a personalized hash chain of length $n$ for agent $A_i$ to fulfill the 
    task. Agent $A_i$ then initializes a counter $\mathit{ctr}_{\mathit{ICP}} 
    = n$ and decrements it by one.

\end{itemize}
   
    In the next rounds, the initiator agent just sends a hash token from the chain $s_j$, and the responding agent $A_i$ verifies that $s_j=H(s_{j-1},j, sid, aid_{A_i})$, and then decreases the counter $ctr_{ICP}$. When the counter $ctr_{ICP}$ reaches 0 it checks that $s_0=PRF(sid,aid_{A^{*}_{I}}, aid_{A_i}, addr)$. If the agent wants to continue interactions and use more hash chains, it can increment $addr$ by 1, and repeat the last two steps mentioned above to start using a new personalized hash chain.

\remove{
\begin{figure}
    \centering
    \includegraphics[width=0.5\linewidth]{Figures/ICP.jpg}
    \caption{Enforcing the interaction budget in the initiator contact policy ICP (static workflow)}
    \label{ICP}
\end{figure}
}

 \textbf{Enforcing the interaction budget in $ICP$ (dynamic workflow).}
 For a dynamic workflow, the \masterw \ agent cannot know the list of agents who wants to contact with them ahead of time. Therefore, it cannot allocate the hash chains to the responding agents and create personalized hash chains from beginning. To solve this issue, the naive approach will be to let the \masterw \ agent to create the chains as soon as it finds with whom it has to contact. For example, if it initially learns that it has to contact with $A_1$ and $A_2$, it generates two or more hash chains of length $n$, and sends the root of these personalized hash chain to user to sign. The user generates a Merkle tree on top of the hash chain terminal values and signs the Merkle root, and sends the Merkle tree and the signature to agent $A^{*}_{I}$. Every time the agent creates a new hash chain,  the user updates its Merkle tree and its signature. 
 This approach, however, is not efficient and requires too many interactions between the \masterw \ agent and the user. 
 
 Below, we give a more efficient scheme that allows the ICP to be enforced for a dynamic workflow (plan) with minimal interactions with the user. The idea is that user creates $m$ one time signature (OTS) key pairs and generate a Merkle tree on the OTS public keys. Then, it signs the root of the Merkle tree, and sends the Merkle tree to the \masterw \ agent together with delegating the OTS private keys to the \masterw \ agent. On receiving these keys, the \masterw \ agent can generate personalized hash chains for any agent $A_i$ and it can plug in the hash chain to the Merkle tree by signing the terminal values of the hash chain using the delegated private keys. See below for more details. 

 {\em One time signature (OTS)} is a digital signature scheme that allows using a private and public key pair to sign a message only once. The examples of OTS are Lamport signature scheme \cite{lamport1979constructing}, and WOTS+ signature scheme \cite{buchmann2013security}. Both of these OTS schemes are hash-based stateful signatures and need secure state management. We consider that OTS  consists of: (i) $OTS.Gen(\lambda)$ which takes the security parameter $\lambda$ as input and outputs a private and public key pair.   (ii) $OTS.Sign_{Sk}(m)$, which takes a message $m$ and the OTS private key $Sk$ and outputs a signature $\sigma$. (iii) $OTS.Verify_{Pk}(<m>, \sigma)$ which takes a message $m$, an OTS signature $\sigma$ and  a public key $Pk$ and outputs 1 if the signature is valid, and 0 otherwise.

\remove{
 \begin{figure}
    \centering
    \includegraphics[width=0.5\linewidth]{Figures/ICP-Dynamic.jpg}
    \caption{Enforcing the interaction budget in the initiator contact policy ICP (dynamic workflow)}
    \label{ICP-Dynamic}
\end{figure}
}

\begin{itemize}

    \item The user $U^*$ determines the maximum message budget $Q_{tot}$ for 
    $\mathit{ICP}$, and creates $m$ OTS key pairs such that $Q_{tot} = n 
    \times m$. It constructs a Merkle tree over the OTS public keys and signs 
    the Merkle root along with the hash chain length:
    \[
        \sigma^{U^*}_{\mathit{ICP}} = \mathsf{HS.Sign}_{sk_{U^*}}
        (\mathit{Mroot},\; n).
    \]
    It then sends the Merkle tree and signature to the master agent $A^*_I$, 
    and delegates the OTS private keys to $A^*_I$.

    \item Upon receiving a task from $U^*$, agent $A^*_I$ processes it and 
    obtains the workflow, which identifies the first agent $A_i$, $i \in 
    [1,t]$, to contact. It derives a personalized hash chain of length $n$ 
    using the seed:
    \[
        s_0 = \mathsf{PRF}(\mathit{sid},\; \mathit{aid}_{A^*_I},\; 
        \mathit{aid}_{A_i},\; \mathit{addr})
    \]
    and extends it as $s_j = H(s_{j-1},\, j,\, \mathit{sid},\, 
    \mathit{aid}_{A_i})$ for $j = 1, \ldots, n$. It then signs the chain root 
    using one of the delegated OTS private keys:
    \[
        \sigma^{A^*_I}_{\mathit{OTS}} = 
        \mathsf{OTS.Sign}_{\mathit{OTS.sk}}(s_n)
    \]
    Secure state management is assumed to prevent reuse of any OTS key.

    \item When $A^*_I$ initiates an A-session with $A_i$, it sends the 
    following to $A_i$:
    \begin{itemize}
        \item the Merkle root $\mathit{Mroot}$ and user signature 
        $\sigma^{U^*}_{\mathit{ICP}}$
        \item the OTS public key used to sign $s_n$, together with a Merkle 
        proof that this key is included in the tree signed by $U^*$
        \item the OTS signature $\sigma^{A^*_I}_{\mathit{OTS}}$ on $s_n$,
        \item the chain openings $s_n$ and $s_{n-1}$,
        where 
        $s_n = H(s_{n-1},\, n,\, \mathit{sid},\, \mathit{aid}_{A_i})$
    \end{itemize}

    \item Agent $A_i$ verifies the following:
    \begin{itemize}
        \item the user's signature: $\mathsf{HS.Verify}_{pk_{U^*}}
        (\langle \mathit{Mroot},\, n \rangle,\, 
        \sigma^{U^*}_{\mathit{ICP}}) = 1$,
        \item the OTS signature: $\mathsf{OTS.Verify}_{\mathit{OTS.pk}}
        (\langle s_n \rangle,\, \sigma^{A^*_I}_{\mathit{OTS}}) = 1$,
        \item the Merkle proof is valid,
        \item the hash step: $s_n = H(s_{n-1},\, n,\, \mathit{sid},\, 
        \mathit{aid}_{A_i})$.
    \end{itemize}
    If all checks pass, $A_i$ is assured that $U^*$ has authorized a hash 
    chain of length $n$ for agent $A_i$ to fulfill the task. Agent $A_i$ then 
    initializes a counter $\mathit{ctr}_{\mathit{ICP}} = n$ and decrements it 
    by one.

\end{itemize}
   
  In the next rounds, the initiator agent just sends a hash token from the chain $s_j$, and the responding agent $A_i$ verifies that $s_j=H(s_{j-1},j, sid, aid_{A_i})$, and then decreases the counter $ctr_{ICP}$. When the counter $ctr_{ICP}$ reaches 0 it checks that $s_0=PRF(sid,aid_{A^{*}_{I}}, aid_{A_i}, addr)$. If the agent wants to continue interactions and use more hash chains, it can increment $addr$ by 1, and repeat the last two steps mentioned above to start a new personalized hash chain.

 \textbf{ICP in the two-agent case.} In the two-agent scenario, ICP becomes similar to RCP, and we deal with it in the same way using the same message count tokens used in RCP description. Therefore, the agent creates the hash chains of the length determined by the policy, and the user signs the hash terminal value. Then, agent sends the message count token and the signature to the responding agent.

\section{User-Agent Interaction for Multi-agent Setting}
\label{Appendix:CSessionUserAgent}

We assume the communication between the user and the master agent is local. In the agent registration subprotocol, the user has determined the ICP and RCP policies for all agents. For ICP which defines the maximum interaction budget for total requests, user has determined $m$ and $n'$ such that $Q_{tot}=n' \times m$, and has shared $m$ and $n'$ with the master agent. When the user allocates the task to the master agent $A^{*}_{I}$, the following subprotocol is run:

\begin{enumerate}

    \item \textbf{Giving the task to the agent.} The user $U^*$ determines 
    the task $\mathit{tsk}$ and sends it to the master agent $A^*_I$.

    \item \textbf{Executing the task and obtaining the workflow.} The master 
    agent $A^*_I$ processes the task and derives the workflow, which specifies 
    the list of agents $\{A_1, \ldots, A_t\}$ that $A^*_I$ must contact to 
    fulfill the task.

    \item \textbf{Creating the hash chain tokens.} Agent $A^*_I$ generates 
    $m$ personalized hash chains. For each chain $i \in [0, m-1]$, the seed 
    is derived as:
    \[
        \rho^i_0 = \mathsf{PRF}(\mathit{sid},\; \mathit{aid}_{A^*_I},\; 
        \mathit{aid}_{A_j},\; \mathit{addr}),
    \]
    and the chain is extended as:
    \[
        \rho^i_k = H(\rho^i_{k-1},\; k,\; \mathit{sid}_j,\; 
        \mathit{aid}_{A_j}), \qquad k = 1, \ldots, n',
    \]
    where $\mathit{aid}_{A_j}$ is the identity of the agent assigned to chain 
    $i$ according to the workflow, such that each agent receives at least one 
    chain. Agent $A^*_I$ then sends the hash roots 
    $\{\rho^i_{n'}\}_{i \in [0,m-1],\, j \in [1,t]}$ to the user $U^*$.

    \item \textbf{Creating the Merkle tree and signing the root.} The user 
    $U^*$ constructs a Merkle tree over the hash roots 
    $\{\rho^i_{n'}\}_{i \in [0,m-1],\, j \in [1,t]}$ received from $A^*_I$, 
    obtaining the root $\mathit{Mroot}$. The user signs the root as:
    \[
        \sigma^{U^*}_{\mathit{ICP}} = \mathsf{HS.Sign}_{sk_{U^*}}
        (\mathit{Mroot},\; n')
    \]
    The user then sends the Merkle tree and signature 
    $\sigma^{U^*}_{\mathit{ICP}}$ to $A^*_I$.

    \item \textbf{Verifying the signature.} Agent $A^*_I$ verifies 
    $\sigma^{U^*}_{\mathit{ICP}}$ by checking:
    \[
        \mathsf{HS.Verify}_{pk_{U^*}}\!\left(
            \langle\, \mathit{Mroot},\; n'\, \rangle,\; 
            \sigma^{U^*}_{\mathit{ICP}}
        \right) = 1
    \]
    and stores the Merkle tree upon success.

\end{enumerate}

 For the dynamic workflow, note that the Merkle tree is constructed over the OTS keys and the protocol have been described in Section \ref{Appendix:policyEnforce}.

\section{UC Model and Ideal Functionalities}
\label{Appendix:ideal}

Below we give an overview of the UC approach and the description of each ideal functionality that is needed in our analysis.

\subsection{UC Approach}
In the UC approach, the desired behavior of a system is specified by an 
\emph{ideal functionality} $\mathcal{F}$, which describes the working of the system in  ideal world. Security of a protocol $\pi$ is defined by comparing 
the real-world execution of $\pi$ in the presence of an adversary $\mathcal{A}$ 
with the ideal-world execution of $\mathcal{F}$, where the adversary's behavior 
is emulated by a \emph{simulator} $\mathsf{Sim}$. 
Protocol participants,  $P_1$, $P_2 \cdots P_\ell$, are modeled by polynomially-bounded Interactive Turing Machines (ITM). 
A special ITM $\mathcal{Z}$, called the \emph{environment}, models the influence
of the surrounding protocol execution on $\pi$. The environment provides inputs
to the participants, receives the outputs of honest parties, has access to the corrupted parties through $\mathcal{A}$, and may interact arbitrarily with
$\mathcal{A}$.
{\em The protocol $\pi$ is UC-secure }if no environment $\mathcal{Z}$ can distinguish between the real-world and ideal-world execution of the protocol.

\subsection{Global Clock Ideal Functionality $\mathcal{G}_{\mathsf{clk}}$}
We follow the clock functionality of \cite{canetti2017clock} that provides a universal reference (physical) clock. When it is queried it provides an abstract notion of the time represented by $\tau$. The clock is monotonic and only the environment can increment it; also the simulator cannot forge this reference time.

\begin{figure}[H]
    \centering
    \begin{center}
        \fbox{\begin{minipage}{\linewidth}
        {\centering \textbf{Global clock ideal functionality $\mathcal{G}_{\mathsf{clk}}$}\\}

        {maintains an integer $\tau$ corresponding to the reference time. When created, it initializes to $\tau=0$.}
        
        \begin{itemize}
            \item 
            Upon receiving message $(ClockUpdate,sid)$ from $\mathcal{Z}$, update $\tau=\tau+1$ and send $(ClockUpdate, sid, ok)$ to $\mathcal{Z}$. Ignore any clock update message sent by an other party.  
            \item 
            Upon receiving $(ClockRead,sid)$  from any entity return $(clock\text{-}read,\;sid,\;\tau)$ to the requester.
        \end{itemize}
        
        \end{minipage}
        }
    \end{center}
    \caption{Global clock ideal functionality $\mathcal{G}_{\mathsf{clk}}$ \cite{canetti2017clock}}
    \label{Fclock}
    \Description{To avoid warning.}
\end{figure}

\subsection{Secure Communication Session Ideal Functionality $\mathcal{F}_{\mathsf{SCS}}$ }
This functionality captures the security requirements of the TLS channel and allows a secure communication between entities in a single protocol instance. TLS consists of two phases: handshake and message transmission. The handshake protocol aims at securely sharing uniformly distributed session keys, and the message transmission provides authenticated encryption of session messages. The following ideal functionality $\mathcal{F}_{\mathsf{SCS}}$ \cite{gajek2008universally} consists of session establishment and message sending and captures the security requirements of the TLS session which are: (i) {\em Uni- and bi-directional authentication (non-transferrable authentication)}. In uni-directional authentication mode, this functionality allows the adversary to impersonate the initiator. However, the adversary cannot impersonate the initiator after the two parties start communicating with each other. (ii) {\em  Confidentiality of messages.} This property is captured since adversary can only see the message of corrupted parties, and only receives a leakage $l(m)$ for messages of honest parties.  The leakage function in TLS includes the message length and the TLS error messages. (iii) {\em Integrity.}
Adversary cannot change the honest parties' messages (in practice these messages have a sequence number that comes from the application layer to prevent changing their order). Adversary can only send corrupted messages to corrupted parties of its choice. We consider that  $\mathcal{F}_{\mathsf{SCS}}$ captures the security properties of the post quantum TLS channel (that uses post-quantum secure primitives) as well.

\begin{figure}
\framebox{
\begin{minipage}{\linewidth}
    \centering
    \textbf{Functionality $F_{scs}$}, proceeds as follows, when parameterized by a leakage function $l : \{0, 1\}^* \rightarrow \{0, 1\}^*$.
\begin{itemize}
\item Upon receiving an input $(establish\text{-}session, sid, IDI)$ from some party, where $IDI \in (\perp, I)$, record $IDI$ as initiator, and send the message to the adversary. Upon receiving input $(establishsession, sid, R)$ from some party, record $R$ as responder, and forward the message to the adversary.
\item Upon receiving a value $(impersonate, sid)$ from the adversary, do: If $(IDI=\perp)$, check that no ready entry exists, and record the adversary as initiator. Else ignore the message.
\item Upon receiving a value $(send, sid, m, P')$ from party $P$, which is either initiator or responder, check that a record $(sid, P, P')$ exists, record ready (if there is no such entry) and send $(sent, sid, l(m))$ to the adversary and a private delayed value $(receive, sid, m, P)$ to $P'$. Else ignore the message. If the sender is corrupted, then disclose $m$ to the adversary. Next, if the adversary provides $\hat{m}$ and no output has been written to the receiver, then send $(send, sid, \hat{m} , \hat{P})$ to the receiver unless $\hat{P}$ is an identity of an uncorrupted party.
\end{itemize}   
\end{minipage}
}
\caption{The secure communication session ideal functionality, $\mathcal{F}_{\mathsf{SCS}}$}
\label{Fscs}
\Description{To avoid warning.}
\end{figure}

\subsection{Global Random Oracle Ideal Functionality $\mathcal{G}_h$}
The random oracle functionality $\mathcal{G}_h$ \cite{camenisch2018wonderful}  has a $query$ interface, and responds to all queries with uniformly random sampled values $ r \leftarrow \{0,\;1\}^{l(\kappa)}$, and outputs the same value for the same query. 
$\mathcal{G}_h$ allows the adversary to program the random oracle to its desired values. At the same time, to prevent the adversary from programming collisions, it allows the parties to check whether a query has been programmed or not and drop that value if so. Additionally, the adversary can observe all the adversarial queries made from a wrong session $L_{sid}$.

 \begin{figure}
  \vspace{3pt}
 	\centering
 	\begin{center}
 		\small
 		\fboxsep=2pt
 		\framebox{\begin{minipage}{0.96\columnwidth}
 				{\centering \textbf{Functionality $\mathcal{G}_h$}\\}

 				{Shared functionality $\mathcal{G}_h$ is globally available to all participants. It takes as input queries $q \in \{0,1\}^*$ and outputs values $r \in \{0,1\}^k$. Internally, it stores initially empty sets $L$, $L_p$, and a set $L_{sid}$ for all sessions $sid$.}
 				
 				{\raggedright
 					\textbf{Query}\\
 					-- Upon receiving $(query,\;sid,\;q)$ from a party of session $sid'$ proceed as follows:\\
 					\begin{itemize}
 						\item If $(sid,\;q,\;r) \in L$ respond with $(query,\; q,\; r)$.
 						\item If $(sid,\;q,\;r) \notin L$, samples $r \in \{0,1\}^k$, store $(sid,\;q,\;r)$ in $Q$ and respond with $(query,\; q,\; r)$.
 						\item If the query is made from a wrong session $(sid \neq sid')$, store $(q,\;r)$ in $L_{sid}$.
 					\end{itemize} 
                    
 				\textbf{Program}\\
 			-- Upon receiving message $(program,\;sid,\;q,\; r)$ by the adversary, check if $(sid,\;q,\;r')$ is defined in $L$. If this is the case, abort. Otherwise, if $r \in \{0,1\}^k$ store $(sid,\;q,\;r)$ in $Q$ and $(sid,\; q)$ in $L_p$.\\
 					-- Upon receiving $(isPrgrmd,\;q)$ from a party of session $sid$, check if $(sid,\; q)$ in $L_p$. If this is the case respond with $(isPrgmd,\;1)$.\\
 					
 					\textbf{Observe}\\
 					-- Upon receiving message $(observe)$ from the adversary $\mathcal{A}$ of session id $sid$ respond with $(observe,\; L_{sid})$.
 				}
 			\end{minipage}
 		}
 	\end{center}
 	\caption{Global restricted programmable and observable random oracle functionality $\mathcal{G}_h$ \cite{camenisch2018wonderful}, provides three interfaces \textit{Query}, \textit{Program}, and \textit{Observe} to participants. 
 	} 
 	\label{RO}
    \Description{To avoid warning.}
 \end{figure}

\subsection{Certification Authority Ideal Functionality $\mathcal{F}_{\mathsf{CA}}$}
We follow \cite{canetti2004signature} for the ideal functionality of $\mathcal{F}_{\mathsf{CA}}$ (see Figure \ref{Fca}). This ideal functionality is bound to a single identity via sid. $\mathcal{F}_{\mathsf{CA}}$ accepts only the first registered message and does not allow for modification or revocation. $\mathcal{F}_{\mathsf{CA}}$ does not perform any computation on the received message and only acts as the public bulletin board.  

\begin{figure}
\framebox{
\begin{minipage}{\linewidth}
    \centering
    \textbf{Functionality $F_{CA}$}
\begin{itemize}
\item Upon receiving the first message $(Register, sid, v)$ from party $P$, send $(Registered, sid, v)$ to the adversary; upon receiving $ok$ from the adversary, and if $sid=P$, and this is the first request from $P$, then record the pair $(P,v)$.
\item Upon receiving a message $(Retrieve, sid)$ from party $P'$, send $(Retrieve, sid, P')$ to the adversary, and wait for an $ok$ from the adversary. Then, if there is a recorded pair $(sid,v)$ output $(Retrieve, sid, v)$ to $P'$. Else, output $(Retrieve, sid, \perp)$ to $P'$.
\end{itemize}   
\end{minipage}
}
\caption{The certification authority ideal functionality, $\mathcal{F}_{\mathsf{CA}}$}
\label{Fca}
\Description{To avoid warning.}
\end{figure}

\subsection{Secure Registration Ideal Functionality $F_{Reg}$} 
 We define $F_{Reg}$ to capture the secure registration of the users and the  agents (see Figure \ref{F-Reg}). The user $U$ can choose a specific provider $TA$ to register with and choose a password $Pwd$ that will be stored with its identity (capturing that the user holds a certified public key)  
and will be used for authentication when registering the 
agents.
It also allows a user who has registered with an authorized 
$TA$  
to register its agent $A$ with attribute $attr$. 
The agent information consisting of the user identity, the agent identity, and the agent attribute are stored in the list of registered agents $L_{AReg}$. The adversary learns about the identity of the registered agents (
we assume they are publicly known). 
$F_{Reg}$ also allows the user $U$ 
to add or update a contact policy for its agent. The contact policy
$CP_A=\{[\mathsf{Send}/\mathsf{Receive},A_i,\mathsf{Budget}_i]\}$ specifies
which agents $A_i$ may send requests to, or receive requests from, $A$, together
with the corresponding contact budget $\mathsf{Budget}_i$. 
The adversary is only informed about the call but nothing beyond that, which captures the fact that the communication is over PQ-TLS channel and the adversary can only learn if the user $U$ has contacted the provider without knowing for which of the agents it has contacted nor for what purpose, that is, adding a policy or updating a policy.

\begin{figure}
\framebox{
\footnotesize
\begin{minipage}{\linewidth}
    \centering
    \textbf{Secure registration ideal functionality $F_{Reg}$}, has access to the list of registered providers $L_{TA}$, and holds a list $L_{UReg}$ and $L_{AReg}$  containing the identity of the registered user and the registered agents respectively.
\begin{itemize}
\item  Upon receiving an input $(RegisterUser, sid, U, Pwd, TA)$ from some party $U$, check $TA \in L_{TA}$, also check $U$ has not been registered before, send $(RegisterUser, sid, TA, U)$ to the adversary; upon receiving ok from the adversary store $(TA, U, Pwd)$ in $L_{UReg}$.
\item Upon receiving an input $(RegisterAgent, sid, A, attr, Pwd, TA)$ from party $U$, where $attr$ is attribute of $A$, check that $(TA, U, Pwd)$ exists and hence $U$ is a registered user with $TA$, also check that $(U,A,attr)$ does not exist in the record, then send  $(RegisterAgent, sid, U, A)$ to the adversary; upon receiving ok from the adversary,  store $(U,A,attr, -, -, -)$ in $L_{AReg}$.
\item Upon receiving an input $(AgentPolicy, sid, A, attr, CP_A, Pwd, TA)$ from $U$,  where $CP_A=\{[Send/Receive, A_i, Budget_i]\}$ specifies the identity of agents who can contact with $A$, and the number of times each of those agents $A_i$ can contact $A$, i.e. $Budget_i$,  
 it checks whether  the user is registered and $(TA, U, Pwd) \in L_{UReg}$, checks whether $(U,A,attr, \cdot, \cdot) \in L_{AReg}$ exists and hence $A$ is already registered.  
Then send $(AgentPolicy,sid, U)$ to the adversary, and update $(U, A, attr, CP_A)$ in $L_{AReg}$.
\end{itemize}

\end{minipage}
}
\caption{Secure user and agent registration ideal functionality $\mathcal{F}_{\mathsf{Reg}}$}
\label{F-Reg}
\Description{To avoid warning.}
\end{figure}

\subsection{\ases \ Authorization Ideal Functionality $\mathcal{F}_{\mathsf{A\text{-}auth}}$} 

It is run between an agent $A$ and the Provider $TA$, and ensures that an agent is authorized (according to the user-defined policies $CP_A$) to establish an \ases \ with another agent (see Figure \ref{F-aAuth}). 

\begin{figure}
\framebox{
\footnotesize
\begin{minipage}{\linewidth}
    \centering
    \textbf{\ases \ Authorization ideal functionality $\mathcal{F}_{\mathsf{A\text{-}auth}}$}, 
    holds a list $L_{AReg}$  containing the identity and attribute of the registered agents, and their contact policy $CP_{A}$ of all registered agents, and a counter $CTR$ set that keeps the counter states for enforcing the budget. 

\begin{itemize}
\item Upon receiving an input $(GetAuthorization, sid, A, attr, A', attr', TA)$ from $A$, the ideal functionality checks whether $(A,attr) \in L_{AReg}$ and $(A',attr') \in L_{AReg}$.  It then retrieves $CP_{A}$ and $CP_{A'}$ and checks whether $(A, A') \in CP_{A} \cap CP_{A'}$. If yes, it finds the counter $ctr_{A,A'} \in CTR$ which has ben initialized with $B=min\{Budget_A, Budget_{A'}\}$ the first time, and checks whether $ctr_{A,A'} \leq B$. If so, it decrements $ctr_{A,A'}$ by one, and sends $(Allowed, sid, A, attr, A', attr')$ to the agent $A$, the $TA$, and the adversary, and stores $(Allowed, sid, A, attr, A', attr')$ in $L_{Auth}$.
\end{itemize}

\end{minipage}
}
\caption{\ases \ Authorization ideal functionality $F_{A\text{-}auth}$}
\label{F-aAuth}
\Description{To avoid warning.}
\end{figure}

\subsection{Secure A-session Ideal Functionality $\mathcal{F}_{\mathsf{Ases}}$} \label{ases-func}
 The secure \ases\ ideal functionality, shown in Figure~\ref{F-SAS}, involves a
user and two agents, \caler\ and \caled. It captures the security requirements
of an \ases, including \tmsg\ confidentiality, participant authentication,
\tmsg\ integrity, accountability, and policy enforcement with respect to
\tmsg\ and time budgets.
In this functionality, the users specify their agents' policies, and the user of
\caler\ provides the task to be performed. \caler\ acts as the initiator and contacts \caled\ multiple times to complete
the given task.

 The task, in our model, determines when the session starts and when it
terminates. We model task execution as a stateful request-response computation
between the agents. The initiator-side algorithm $\mathsf{ExeRes}$ takes the
task $tsk$, the previous response, and the initiator's code as input, and outputs
the next request. The responder-side algorithm $\mathsf{ExeReq}$ takes a request
and the responder's code as input, and outputs a response. The responder need
not know the underlying task $tsk$.

The algorithm $\mathsf{ExeReq}$ may itself be stateful and interactive: the
responding agent may interact with other agents before producing its response to
\caler.

 The ideal functionality is parameterized by a leakage function $\ell()$ on the
request/response transcript and the \tmsg\ and time budgets. If one agent is
corrupted, $\ell()$ leaks the budgets fully, reflecting the policy information
exchanged during an \ases. If both agents are honest, $\ell()$ leaks only
unavoidable transcript metadata, such as message lengths and execution times.

The ideal functionality handles requests and responses internally, since the
number of rounds may depend on the task. When the initiator $A_I$ is honest, the
adversary does not know the task; hence, without the ideal functionality running the task internally and suitable leakage the simulator
may be unable to reproduce the observable round structure of the real execution,
allowing the environment to distinguish the two worlds.  Following the
internal simulation of rounds and leakage-based approach of~\cite{eckey2020optiswap,avizheh2024refereed},
the functionality simulates the task execution internally and therefore leaks specified metadata in each round, enabling the
simulator to match the observable transcript while preserving the intended
communication-security guarantees.

  \Fases\ captures the stated security requirements as follows:
\begin{itemize}
    \item \emph{Confidentiality of \tmsg.} If both agents are honest, the
    adversary learns only the  leakage about the requests and
    responses, as defined by the leakage function $\ell()$.

    \item \emph{Authentication.} Only registered and authorized agents can
    participate in an \ases; the adversary cannot impersonate an honest entity.

     \item \emph{Integrity.} Requests and responses are simulated internally by \Fases\ and 
    cannot be modified, injected, replayed, or reordered by the adversary. 

  \item \emph{Accountability.} If a corrupted party sends an invalid,
    malformed, or policy-violating \tmsg, \Fases\ can identify the corrupted
    party whose behavior causes an abort through the adversary, and attribute that to the corrupted party.  

    \item \emph{Policy enforcement and authorization.} \Fases\ receives the two agents' \ases\
    policies, including their \tmsg\ and time budgets, and enforces the effective
    session policy obtained from their intersection. It maintains the relevant
    counters and timers, and terminates the session once a budget is exhausted,
    the time bound expires, or a corrupted party triggers an abort (capturing the early termination of session by a corrupted party).
\end{itemize}
 The {\em authorization soundness} in Section \ref{sec:model} is guaranteed by the composition  of 
$\mathcal{F}_{\mathsf{A\text{-}auth}}$ and \Fases. 

Note that our protocol also ensures {\em Forward security} for an \ases\  which means that
compromise of long-term credentials does not compromise previously established
\ases \ keys. While our protocol  provides this property,  we leave modeling and  proving this  property for future work.

\begin{figure}[!htb]
\framebox{
\footnotesize
\begin{minipage}{\linewidth}
    \centering
    \textbf{The Secure A-session ideal functionality $\mathcal{F}_{\mathsf{Ases}}$}, is defined for a given task $tsk$, and interacts with a global clock functionality $\mathcal{G}_{\mathsf{clk}}$, and it is parametrized by a leakage function $\ell(\cdot): \{0,1\}^* \times \{0,1\}^{|Q|} \times \{0,1\}^{|\Delta|} \rightarrow \{0,1\}^*$ (note that $sid$ includes $\mathcal{G}_h(tsk)$).
    $\mathcal{F}_{\mathsf{Ases}}$ holds the list of registered agents $L_{AReg}$ and the list of agents that are authorized to establish an A-session $L_{Auth}$.

\begin{itemize}
\item  Upon receiving $(AgentPolicy, sid, APolicy)$ from the user $U$, where $Apolicy$ consists of the tuple of the form $[(A, A'), Q_{A,A'}, \Delta_{A,A'}]$, store $(sid, APolicy)$ in $L_{APo}$. If $A$ is corrupted and later it receives $(ChangePolicy, sid, APolicy')$ from the adversary replace $APolicy$ with $Apolicy'$ in $L_{APo}$.
\item Upon receiving a $(Task, sid, tsk)$ from $(A_I, attr_I)$, where $tsk$ is given to agent $A_I$ with attribute $attr_I$ (if $tsk=\perp$ abort), it checks $(A_I,attr_I) \in L_{AReg}$, and stores $(sid, tsk, A_I, attr_I)$. 
Then, run $(A_R, attr_R, req_1) \leftarrow ExeRes(A_I,tsk, \perp)$, where $Execute$ runs the code of $A_I$ on the given task $tsk$ and outputs (i) the identity and attribute $(A_R, attr_R)$ of the agent that $A_I$ has to connect to,  (ii) the first request to be sent from $A_I$ denoted by $req_1$. 
Then it checks $(A_R,attr_R) \in L_{AReg}$, if not aborts.
Next, retrieve and parse $APolicy$ from $L_{APo}$ for both agents and retrieve $[(A_I, A_R), Q_{I}, \Delta_{I}]$ and $[(A_R, A_I), Q_{R}, \Delta_{R}]$. Then, set $Q=min\{Q_{I},Q_{R}\}$ and $\Delta=min\{\Delta_{I}, \Delta_{R}\}$. Next compute $T_{exp}=\tau+\Delta$, and start a counter $ctr$ for interaction limit. Then,
simulates the task execution internally as below by starting with {\em Request handling} where $req_0= \perp$:
\begin{description}
    \item[--\textbf{Request handling}:] upon receiving $(Request, sid, A_I, attr_I, req_i)$ from $A_I$ with attribute $attr_I$, check $(AuthorizationAllowed, \cdot, A_I, attr_I, A_R,attr_R) \in L_{Auth}$; if so then
    send $(ClockRead, sid)$ to $\mathcal{G}_{\mathsf{clk}}$ and get the time $\tau$.
    Then check whether $ctr \leq Q$, and increment $ctr$ by one; Then, check $\tau \leq T_{exp}$. If the checks do not pass, abort and inform the adversary.  Otherwise,   run $res_i \leftarrow ExeReq(A_I,req_i)$,  and send $(Response, sid, A_R, attr_R, \ell(res_i, Q_R, \Delta_R))$ to the adversary (if $A_I$ or $A_R$ is corrupted $\ell()$ leaks $Q_R$ and $\Delta_R$ to the adversary). 
    Upon receiving abort from the adversary, abort and output the identity of the corrupted agent, else proceed to {\em response handling}. 
    \item[--\textbf{Response handling:}] upon receiving $(Response, sid, A_R, attr_R, res_i)$ from $A_R$ with attribute $attr_R$,  check whether $ctr \leq Q$, and increment $ctr$ by one; Then, check $\tau \leq T_{exp}$. If the checks do not pass, abort and inform the adversary. Otherwise, run $(req_i, result) \leftarrow ExeRes(A_R,tsk,res_i)$.  If $req_i=\perp$, then output $result$ and terminate. Else send $(Request, sid, A_I, attr_I, \ell(req_i, Q_I, \Delta_I))$ to adversary (if $A_I$ or $A_R$ is corrupted $\ell()$ leaks $Q_I$ and $\Delta_I$ to the adversary). Upon receiving $terminate$ from adversary, output $res_i$ and terminate. Upon receiving abort from the adversary, abort and output the identity of the corrupted agent, else proceed to {\em request handling}.
\end{description}
\end{itemize}   
\end{minipage}
}
\caption{The secure  A-session ideal functionality, $\mathcal{F}_{\mathsf{Ases}}$}
\label{F-SAS}
\Description{To avoid warning.}
\end{figure}

\subsection{The Secure Multi-Agent Composite Session (C-session) Ideal functionality, $\mathcal{F}_{\mathsf{Cses}}$ with Static Workflow}  This ideal functionality allows  a user to give a task to an \masterw \ agent \master, where  \master \ contacts a set of responding agents $A_i$ (according to a static work plan) to fulfill the task (see Figure \ref{F-MAS}). This ideal functionality ensures the security properties listed in \Fases \ in a similar way but in the communication across mutiple responding agents (see description of \Fases). These properties are: \tmsg\ confidentiality, participant authentication, \tmsg\ integrity, accountability, and policy enforcement with respect to \tmsg\ and time budgets of \master \ and the set of $A_i$.
In \Fcses \,  the users determine the policy for their agents at the beginning of the session. The agent policy is in the form of $[(A, A'), Q_{A,A'}, \Delta_{A,A'}]$ and consists of the interaction budget $Q_{A,A'}$ and time budget $\Delta_{A,A'}$. $[(A, \perp), Q_{tot}, \Delta_{tot}]$ represents the policy of \masterw \ agent (i.e., ICP). 

We highlight three points in our model: (i) Similar to the A-session functionality, we model the task execution as a stateful interactive computation, in which $A^{*}_{I}$ acts as initiator  and contacts with other agents $A_i$ several times in order to complete the given task. We only consider one-hop interactions, which means $A^{*}_{I}$ runs $ExeRes()$, and contact another agent $A_i$ back and forth, but we do not model the case where $A_i$ interacts with other agents to respond the requests from $A^{*}_{I}$ as it makes it very complex for the analysis. Without loss of generality, we assume that the $A_i$ uses the reactive function $ExeReq()$ to respond to a request. This function/ computation by itself is an stateful interactive function which allows $A_i$ to interact with other agents and respond to $A^{*}_{I}$ at the end. (ii) We consider a static workflow for task execution, where the \masterw \ agent $A^{*}_{I}$ first determines the \textit{schedule} consisting of the list of agents $(A_i,attr_i)$ that should be contacted based on the given task. 
(iii) The ideal functionality simulates the handling of requests and responses internally since the number of requests and responses depend on the given task, and the adversary who has not corrupted the \masterw \ agent $A^{*}_{I}$ and does not know the task cannot simulate the number of rounds of requests and response and the agents who should be contacted with correctly. This will enable the environment to distinguish the ideal world from the real world. Therefore, we follow the approach of \cite{eckey2020optiswap,avizheh2024refereed} and consider that the ideal functionality simulate the request and response process internally, enforces the policy, and leak some information to the adversary. With this consideration, we only need to show that the simulator can simulate each round correctly even if only a negligible information is leaked to it.

\begin{figure}
\framebox{
\footnotesize
\begin{minipage}{\linewidth}
    \centering
    \textbf{The secure multi-agent composite session (C-session) ideal functionality, $\mathcal{F}_{\mathsf{Cses}}$}, is defined for a task $tsk$ and interacts with a global clock functionality $\mathcal{G}_{\mathsf{clk}}$, and it is parametrized by a leakage function $\ell(\cdot): \{0,1\}^* \times \{0,1\}^{|Q|} \times \{0,1\}^{|\Delta|} \rightarrow \{0,1\}^*$. This ideal functionality has access to the list of registered agents $L_{AReg}$, and the list of agents that are authorized to establish an A-session $L_{Auth}$.

\begin{itemize}
\item  Upon receiving $(AgentPolicy, sid, APolicy)$ from the user $U$, where $Apolicy$ consists of the tuple of the form $[(A, A'), Q_{A,A'}, \Delta_{A,A'}]$, store $(sid, APolicy)$ in $L_{APo}$. If $A$ is corrupted and later it receives $(ChangePolicy, sid, APolicy')$ from the adversary replace $APolicy$ with $Apolicy'$ in $L_{APo}$.
\item Upon receiving $(Task, sid, tsk, A^{*}_{I}, attr^*)$ from user $U^*$, where $tsk$ is the given task from user $U^*$ to \masterw \ agent $A^{*}_{I}$ with attribute $attr^*$, then store $(sid, tsk, U^*, A^{*}_{I}, attr^*)$ (if $tsk=\perp$ abort).  
Next, run $(schedule, req^{1}_i) \leftarrow ExeRes(A^{*}_{I},tsk,\perp)$, where $ExeRes$ runs the code of $A^{*}_{I}$ on the given task $tsk$ and outputs the first request $req^{1}_i$ and the schedule which consists of the identity $[(A_1, attr_1), \cdots, (A_t, attr_t)]$ of all the agents that should be contacted with (and the order of invocation).  Then,  check whether $(A_i,attr_i) \in L_{AReg}$ $\forall i$, and retrieve and parse $APolicy$ from $L_{APo}$ for $A^{*}_{I}$, that is,  $[(A^{*}_{I}, \perp), Q_{tot}, \Delta_{tot}]$. Next compute $T_{exp}=\tau+\Delta_{tot}$. Also, start a counter $ctr$ for interaction limit. 
It then simulates the task execution internally as below:
\begin{description}
    \item[--\textbf{Request handling}:] upon receiving $(Request, sid, A_i, req^{k}_i)$ from $A^{*}_{I}$ with attribute $attr^*$,  check $(AuthorizationAllowed,\cdot, A^{*}_{I}, attr^*, A_i, attr_i) \in L_{Auth}$; 
    if so then
    send $(ClockRead, sid)$ to $\mathcal{G}_{\mathsf{clk}}$ and get the time $\tau$.  If this is the first time a request is received from $A^{*}_{I}$, retrieve and parse $APolicy$ from $L_{APo}$ for agent $A_i$ and retrieve $[(A^{*}_{I}, A_i), Q_{*,i}, \Delta_{*,i}]$. Then, check $\Delta_{*,i}< \Delta_{tot}$ and compute $T'_{exp}=\tau+\Delta_{*,i}$. Check that $T'_{exp} <T_{exp}$ and store $T'_{exp}$.  Else, abort. Also, start a counter $ctr_i$ for interaction limit  and increment both $ctr$ and $ctr_i$. 
    Else, if this is not the first message, check whether $ctr_i \leq Q_{*,i}$ and $\tau <T'_{exp}$, and then increment $ctr_i$ by one. 
    If the checks do not pass, abort and inform the adversary.  Otherwise,   run $res^{k}_i \leftarrow ExeReq(A_i,req^{k}_i)$, and send $(response, sid, A_i, \ell(res^{k}_i, Q_{*,i}, \Delta_{*,i})$ to the adversary.  
    Upon receiving abort from the adversary, abort and output the identity of the corrupted agent, else proceed to {\em response handling}. 
    \item[--\textbf{Response handling:}] upon receiving $(Response, sid, A^{*}_{I}, res^{k}_i)$ from $A_i$ with attribute $attr_i$,  check whether $ctr \leq Q_{tot}$, and increment $ctr$ by one; Then, check $\tau \leq T_{exp}$. If the checks do not pass, abort and inform the adversary.  Otherwise, run $(A_j, attr_j, req^{k+1}_j, result) \leftarrow ExeRes(A^{*}_{I},tsk,res^{k}_i)$. If $A_j= \perp$, then output $result$ and terminate. Else, send $(Request, sid, A_j, \ell(req^{k+1}_j, Q_{tot}, \Delta_{tot}))$ to adversary.  Upon receiving abort from the adversary abort and output the identity of the corrupted agent, else proceed to {\em response handling}.  
\end{description}
\end{itemize}   
\end{minipage}
}
\caption{The secure multi-agent composite session (C-session) ideal functionality, $\mathcal{F}_{\mathsf{Cses}}$ with static workflow}
\label{F-MAS}
\Description{To avoid warning.}
\end{figure}

\subsection{The Secure Multi-agent Composite Session (C-session) Ideal Functionality, $\mathcal{F}_{\mathsf{Cses}}$ with Dynamic Workflow} This ideal functionality allows  the users to determine the policy for their agents, where the agent policy is in the form of $[(A, A'), Q_{A,A'}, \Delta_{A,A'}]$ and consists of the interaction budget $Q_{A,A'}$ and time budget $\Delta_{A,A'}$. The policy $[(A, \perp), Q_{tot}, \Delta_{tot}]$ represents the policy of a \masterw \ agent (i.e., ICP). \Fcses ensures: \tmsg\ confidentiality, participant authentication, \tmsg\ integrity, accountability, and policy enforcement with respect to \tmsg\ and time budgets of \master \ and the set of $A_i$, across multiple agents.  It ensures these security properties in a similar way described in \Fases \ but in the communication across many responding agents (see description of \Fases). 

In $\mathcal{F}_{\mathsf{Cses}}$, a user gives a task $tsk$ to an \masterw agent $A^{*}_{I}$ with attribute $attr^*$ which  orchestrates the communication with other agents (see Figure \ref{F-MAS2}).  The \masterw \ agent can be corrupted in which case there is no guarantee that it performs the task correctly (we allow the adversary to replace the task which affects the order of agent calls and the requests it makes, and also to replace the result with adversarial ones).  If the \masterw \ agent $A^{*}_{I}$ is honest, the ideal functionality execute the task in a trusted way, which means that it first gets  identity of the first agent that needs to be called and the first request by running the reactive function $ExeRes(A^{*}_{I},tsk, \perp)$ on the given task $tsk$ using the code of the agent $A^{*}_{I}$.  It, then, internally simulates the request sending and receiving of the agents, and leaks some information about each request and response to the adversary. We also let the adversary to terminate the request and response handling to capture the fact that in case the interacting agents are corrupted they may abort earlier and then no results will be returned to the user. 

Note that we need to clarify three points: (i) We model the task execution as an stateful interactive computation, in which $A^{*}_{I}$ acts as initiator  and contacts with other agents $A_i$ several times in order to complete the given task. We only consider one-hop interactions, which means $A^{*}_{I}$ runs $ExeRes()$, and contact another agent $A_i$ back and forth, but we do not model the case where $A_i$ interacts with other agents to respond the requests from $A^{*}_{I}$ as it makes it very complex for the analysis. Without loss of generality, we assume that the $A_i$ uses the reactive function $ExeReq()$ to respond to a request. This function/ computation by itself in an stateful interactive function which allows $A_i$ to interact with other agents and respond to $A^{*}_{I}$ at the end. (ii) We consider a dynamic workflow for task execution, where the \masterw \ agent $A^{*}_{I}$ first determines the first contacting agent $(A_i,attr_i)$ based on the given task and then based on the responses that it receives from this agent it decides the next agent that should be contacted. Therefore, we consider that $ExeRes()$ always output an identity and the next request. (iii) The ideal functionality simulates the handling of requests and responses internally since the number of requests and responses depend on the given task, and the adversary who has not corrupted the \masterw \ agent $A^{*}_{I}$ and does not know the task cannot simulate the number of rounds of requests and response and the agents who should be contacted with correctly. This will enable the environment to distinguish the ideal world from the real world. Therefore, we follow the approach of \cite{eckey2020optiswap,avizheh2024refereed} and consider that the ideal functionality simulate the request and response process internally, enforces the policy, and leak some information to the adversary. With this consideration, we only need to show that the simulator can simulate each round correctly even if only a negligible information is leaked to it.

\begin{figure}
\framebox{
\footnotesize
\begin{minipage}{\linewidth}
    \centering
    \textbf{The secure multi-agent composite session (C-session) ideal functionality, $\mathcal{F}_{\mathsf{Cses}}$}, is defined for a task $tsk$ and interacts with a global clock functionality $\mathcal{G}_{\mathsf{clk}}$, and it is parametrized by a leakage function $\ell(\cdot): \{0,1\}^* \times \{0,1\}^{|Q|} \times \{0,1\}^{|\Delta|} \rightarrow \{0,1\}^*$. This ideal functionality has access to the list of registered agents $L_{AReg}$, and the list of agents that are authorized to establish an A-session $L_{Auth}$.

\begin{itemize}
\item  Upon receiving $(AgentPolicy, sid, APolicy)$ from the user $U$, where $Apolicy$ consists of the tuple of the form $[(A, A'), Q_{A,A'}, \Delta_{A,A'}]$, store $(sid, APolicy)$ in $L_{APo}$. If $A$ is corrupted and later it receives $(ChangePolicy, sid, APolicy')$ from the adversary replace $APolicy$ with $Apolicy'$ in $L_{APo}$.
\item Upon receiving $(Task, sid, tsk, A^{*}_{I}, attr^*)$ from user $U^*$, where $tsk$ is the given task from user $U^*$ to \masterw \ agent $A^{*}_{I}$ with attribute $attr^*$, then store $(sid, tsk, U^*, A^{*}_{I}, attr^*)$ (if $tsk=\perp$ aborts). 
Next, run $(A_i, attr_i, req^{1}_i) \leftarrow ExeRes(A^{*}_{I},tsk,\perp)$ where $ExeRes$ runs the code of $A^{*}_{I}$ on the given task $tsk$ and outputs the first request $req^{1}_i$ and the identity $(A_i, attr_i)$ of the first agent that needs to be contacted. Then, retrieve and parse $APolicy$ from $L_{APo}$ for $A^{*}_{I}$, that is,  $[ (A^{*}_{I}, \perp), Q_{tot}, \Delta_{tot}]$. Next compute $T_{exp}=\tau+\Delta_{tot}$. Also, start a counter $ctr$ for interaction limit.  It then simulates the task execution internally as below:
\begin{description}
   \item[--\textbf{Request handling}:] upon receiving $(Request, sid, A_i, req^{k}_i)$ from $A^{*}_{I}$ with attribute $att^*$, check $(AuthorizationAllowed,\cdot, A^{*}_{I}, attr^*, A_i, attr_i) \in L_{Auth}$; if so then
    send $(ClockRead, sid)$ to $\mathcal{G}_{\mathsf{clk}}$ and get the time $\tau$. If this is the first time a request is received from $A^{*}_{I}$, retrieve and parse $APolicy$ from $L_{APo}$ for agent $A_i$ and retrieve $[(A^{*}_{I}, A_i), Q_{*,i}, \Delta_{*,i}]$. Then, check $\Delta_{*,i}< \Delta_{tot}$ and compute $T'_{exp}=\tau+\Delta_{*,i}$. Check that $T'_{exp} <T_{exp}$ and store $T'_{exp}$. Else, abort. Also, start a counter $ctr_i$ for interaction limit  and increment both $ctr$ and $ctr_i$. 
    Else, if this is not the first message, check whether $ctr_i \leq Q_{*,i}$  and $\tau <T'_{exp}$, and then increment $ctr_i$ by one. 
    If the checks do not pass, abort and inform the adversary.  Otherwise,   run $res^{k}_i \leftarrow ExeReq(A_i,tsk,req^{k}_i)$, and send $(Response, sid, A_i, \ell(res^{k}_i, Q_{*,i}, \Delta_{*,i}))$ to the adversary. 
    Upon receiving abort from the adversary abort and output the identity of the corrupted agent, else proceed to {\em response handling}.
    \item[--\textbf{Response handling:}] upon receiving $(Response, sid, A^{*}_{I}, res^{k}_i)$ from $A_i$ with attribute $attr_i$, check whether $ctr \leq Q_{tot}$, and increment $ctr$ by one; Then, check $\tau \leq T_{exp}$. If the checks do not pass, abort and inform the adversary.  Otherwise, run $(A_j, attr_j, req^{k+1}_j, result) \leftarrow ExeRes(A^{*}_{I},tsk,res^{k}_i)$.  If $A_j= \perp$, then output $result$ and terminate. Else, send $(Request, sid, A_j, \ell(req^{k+1}_j, Q_{tot}, \Delta_{tot}))$ to adversary. Upon receiving abort from the adversary abort and output the identity of the corrupted agent, else proceed to {\em response handling}.  
\end{description}
\end{itemize}   
\end{minipage}
}
\caption{The secure multi-agent composite session (C-session) ideal functionality, $\mathcal{F}_{\mathsf{Cses}}$ with dynamic workflow}
\label{F-MAS2}
\Description{To avoid warning.}
\end{figure}

\section{Security Analysis}
\label{Appendix:securityanalysis}

In this section, we give the security analysis of \sagaplus~in multiple steps as described in Figure \ref{UCsteps}. 


\subsection{Main Theorem}
The following theorem gives the security of \sagaplus:

\begin{theorem}
\sagaplus\   (Section \ref{sec:sagaplus}) UC-realizes
$\mathcal{F}_{\mathsf{Reg}}$, $\mathcal{F}_{A\text{-}\mathsf{auth}}$, \Fases,
and \Fcses\ in the
$(\mathcal{F}_{\mathsf{scs}},\mathcal{F}_{\mathsf{CA}}, \mathcal{G}_{\mathsf{clk}},\mathcal{G}_h)$-hybrid model. 
The cryptographic building blocks of \sagaplus\ are (i) a digital signature scheme that is EUF-CMA, and (ii) hash function, HMAC and PRF that are modeled by global random oracles.
\end{theorem}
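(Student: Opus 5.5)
The plan is to prove the theorem modularly, following the sequential structure of \sagaplus\ and the four lemmas stated in Section~\ref{Proofsketch}. We first show that $\pi_{Reg}$ UC-realizes $\mathcal{F}_{\mathsf{Reg}}$, then $\pi_{A\text{-}\mathsf{auth}}$ realizes $\mathcal{F}_{A\text{-}\mathsf{auth}}$, then $\pi_{Ases}$ realizes \Fases, and then $\pi_{Cses}$ realizes \Fcses\ in the \Fases-hybrid model. The universal composition theorem, in its version with global subroutines $\mathcal{G}_{\mathsf{clk}},\mathcal{G}_h$ (GUC/UC with global functionalities), then gives the full statement. Since composition replaces each \Fases\ instance inside $\pi_{Cses}$ by $\pi_{Ases}$, the final step is sound. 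Every step is carried out against a QPT environment and adversary, with static corruption. Each simulator reuses the state maintained by the simulators of earlier steps, so registration data, certificates, $\sigma^{TA}_{ac}$ tokens and policies stay consistent across the protocols.

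For registration and authorization, User, CA and \provider\ are honest and all traffic goes over $\mathcal{F}_{\mathsf{scs}}$. $\mathsf{Sim}$ therefore sees only leakage. It generates keys itself, answers $\mathcal{F}_{\mathsf{CA}}$ queries, and produces \provider\ signatures on behalf of the honest \provider. When an initiator is corrupted, $\mathsf{Sim}$ extracts the request from $\mathcal{F}_{\mathsf{scs}}$ (the sender is corrupted, so $m$ is disclosed), forwards it to $\mathcal{F}_{A\text{-}\mathsf{auth}}$, and returns a signed token exactly when the functionality answers Allowed. The real and ideal worlds are then identically distributed. There is one discrepancy to rule out: in the real world a corrupted agent could present a $\sigma^{TA}_{ac}$ for which no $L_{Auth}$ entry exists. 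We handle this by a reduction to EUF-CMA of $\mathsf{HS}$. It suffices that the scheme is EUF-CMA against quantum adversaries in the standard model, since signatures are classical objects, which gives the remark after the theorem.

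The core step is the \ases\ lemma, and we split it into three cases: both agents honest, $A_I$ corrupted, and $A_R$ corrupted. If both are honest, $\mathsf{Sim}$ learns only $\ell(\cdot)$ from \Fases. It simulates handshake and transmission messages as $\mathcal{F}_{\mathsf{scs}}$ leakages of the right length, with hash-chain values, session keys and tags produced by programming or querying $\mathcal{G}_h$ on fresh points. Indistinguishability then follows because $r_1,r_2$ are hidden by $\mathcal{F}_{\mathsf{scs}}$, so $k_{ses}$ is a random-oracle output on an unqueried point. If one agent is corrupted, $\mathsf{Sim}$ plays the honest agent. It checks every incoming message exactly as the protocol does: $\sigma^{TA}_{ac}$, $\sigma^{A_I}_{init}$, $\sigma_{ICP}$/$\sigma_{RCP}$, the HMAC tag, the hash-chain link $s_{j}=H(s_{j-1},j,sid,aid)$ via $\mathcal{G}_h$, and expiry via $\mathcal{G}_{\mathsf{clk}}$. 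It forwards valid requests or responses to \Fases, sends \emph{ChangePolicy} when a corrupted agent commits to a different user-signed budget, and sends abort with attribution on failure. We then argue via hybrids that the real and ideal worlds diverge only on the following bad events. \emph{(a)} A signature is forged on a chain root, a policy, or a token; this is bounded by EUF-CMA. \emph{(b)} The corrupted party produces a valid chain preimage beyond the committed length, or reuses a chain across $sid$ or $aid$; this is bounded by preimage resistance of $\mathcal{G}_h$, since the domain separation by $(j,sid,aid)$ makes each link a fresh RO point and the adversary must invert an RO output, or collide, which has probability $q^2/2^{\lambda}$ with at most $O(q^2/2^\lambda)$ even for quantum query access to the hash. \emph{(c)} A tag is forged or a message replayed or reordered; this is prevented by the HMAC chaining under an RO-random key. \emph{(d)} An honest-side counter differs from the counter of \Fases; we check explicitly that $Q=\min\{Q_I,Q_R\}$ and $\Delta_h$ match the effective-policy definition of Section~\ref{sec:sagaplus}.

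For \Fcses\ we work in the \Fases-hybrid model and consider the cases all honest, only \master\ corrupted, and all responders corrupted. The additional obligation is the global budget $Q^*_{tot}=m\cdot n$. When \master\ is corrupted, we must show that the total number of accepted \tmsg\ across all honest responders is at most $Q^*_{tot}$. This holds because each accepted chain root must be a Merkle leaf under the user-signed $Mroot$ (by collision resistance of $\mathcal{G}_h$ and EUF-CMA), and each leaf is bound to one $(sid,aid_{A_j},addr)$. I expect this counting argument, together with the treatment of RO queries by a quantum adversary in the global programmable/observable model, to be the main obstacle. Its difficulty lies in justifying classical extraction and programming in $\mathcal{G}_h$ against a QPT adversary. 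My first attempt would be to restrict to classical oracle access, matching the paper's assumption that all processing is classical, and to fall back on compressed-oracle bounds for preimage and collision finding if quantum queries must be allowed.
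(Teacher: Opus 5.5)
Your decomposition matches the paper's: four lemmas (registration, discovery, \ases, and \cses\ in the \Fases-hybrid model), the same corruption cases, composition with the global $\mathcal{G}_{\mathsf{clk}},\mathcal{G}_h$, and EUF-CMA/random-oracle arguments. The genuine gap is in the corrupted-agent cases of the \ases\ lemma, where your simulator ``plays the honest agent''. \Fases\ does not give it the information to do that. The functionality runs $\mathsf{ExeRes}$ and $\mathsf{ExeReq}$ internally on the task, and hands the adversary only $\ell(req_i,Q_I,\Delta_I)$ and $\ell(res_i,Q_R,\Delta_R)$. When \caled\ is corrupted, its real-world view contains the plaintext requests $req_i$, which \caler's code computes from the user's private task $tsk$, and the HMAC tags over them. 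The environment chose $tsk$, so a simulator that sees only lengths and budgets cannot produce a consistent view and is distinguished at once. When \caler\ is corrupted, you can compute responses by running $\mathsf{ExeReq}$ on \caled's code, as the paper does. But \Fases\ only opens a session upon a $(Task,sid,tsk)$ input from \caler, and it derives later requests via $\mathsf{ExeRes}(\cdot,tsk,\cdot)$. So ``forwarding valid requests'' does not by itself drive the functionality; some task has to be supplied.

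The paper closes both holes with a step your plan lacks. The $sid$ contains $\mathcal{G}_h(tsk)$, and $\mathsf{Sim}$ recovers $tsk$ through the \emph{Observe} interface of the global random oracle. With $tsk$ in hand, it sends $(Task,sid,tsk)$ to \Fases\ on behalf of a corrupted \caler, or $(Task,sid,\perp)$ if the handshake checks fail. It then runs the honest agent's code internally to generate the plaintext \tmsg\ that the corrupted party must receive. Likewise, the paper's \cses\ simulator with an honest \master\ works from $tsk$ to derive the schedule and the per-\ases\ subtasks. You need this extraction step, or else a leakage function that explicitly reveals message contents to a corrupted recipient, which would change \Fases.

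Your bad-event hybrids go beyond what the paper writes out, and that is fine. One caution on the $Q^*_{tot}=m\cdot n$ count: $addr$ enters only the PRF seed, not the chain links, and the responder checks the seed only once a chain is exhausted. So ruling out a replayed leaf within one \ases\ also requires the responder to reject roots it has already consumed.
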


(It is sufficient for the signature scheme that are used by all \sagaplus\ participants  to be EUF-CMA in standard model).

\textbf{Proof}. \paragraph{Proof sketch.}
We construct the protocol $\pi_{\sagaplus}^{\mathcal{F}_{\mathsf{scs}},
\mathcal{F}_{\mathsf{CA}},\mathcal{G}_{\mathsf{clk}},\mathcal{G}_h}$ in the $(\mathcal{F}_{\mathsf{scs}},\mathcal{F}_{\mathsf{CA}},
\mathcal{G}_{\mathsf{clk}},\mathcal{G}_h)$-hybrid model and prove Theorem~\ref{thm:main} through a sequence of lemmas, each capturing the security of one \sagaplus\ protocol.  These protocols are executed sequentially: an agent may participate in an \ases\ or a \cses\ only after registration, and an initiating agent may start the session only after obtaining authorization from the \provider. The lemmas are stated below.

\textbf{Hybrid world.} We consider a hybrid protocol $\pi^{\mathcal{F}_{\mathsf{SCS}}, \mathcal{F}_{\mathsf{CA}}, \mathcal{G}_{\mathsf{clk}}, \mathcal{G}_h}_{\sagaplus}$. This means that we replace the PQ-TLS channel in the protocol with calls to $\mathcal{F}_{\mathsf{SCS}}$ 
Note that the protocol (specially the \ases ) may run over multiple instances of $\mathcal{F}_{\mathsf{SCS}}$. Additionally, we assume parties have access to a certificate authority ideal functionality $\mathcal{F}_{\mathsf{CA}}$ (that allows registering the identities by a trusted party and has been used in the security of TLS as well \cite{gajek2008universally}), the global clock ideal functionality $\mathcal{G}_{\mathsf{clk}}$ (that captures the reference clock) and the global random oracle ideal functionality  $\mathcal{G}_h$ (that models the hash functions). 
We consider the adversary $\mathcal{A}$ to be static which means it  corrupts the parties only before the protocol starts. 

\begin{figure}[t]
    \centering
    \includegraphics[width=0.6\columnwidth]{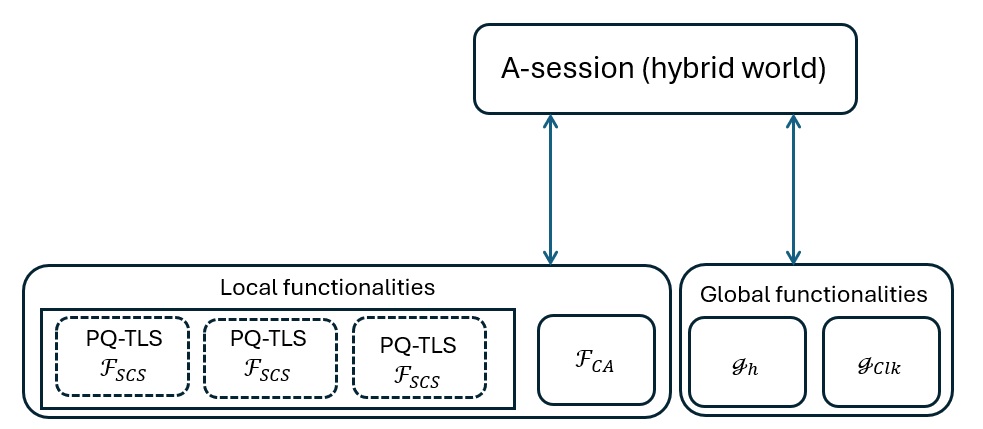}
    \caption{The \ases \ in the two-agent \sagaplus\ protocol}
    \label{fig:Ases}
    \Description{To avoid warning.}
\end{figure}

 \begin{figure}[t]
    \centering
    \includegraphics[width=0.4\columnwidth]{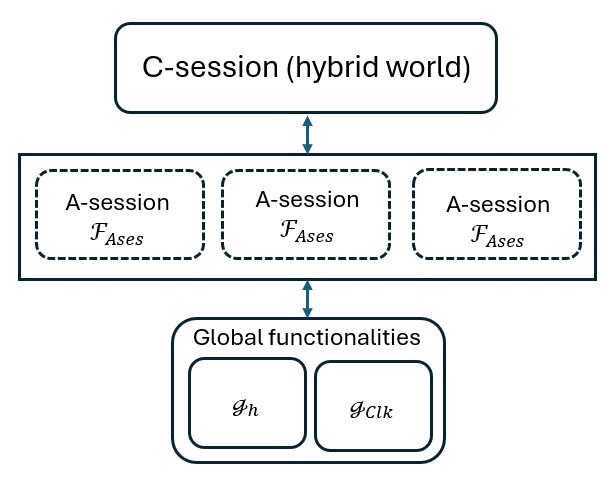}
    \caption{The \cses \ in the multi-agent \sagaplus\ protocol}
    \label{fig:Cses}
    \Description{To avoid warning.}    
\end{figure}

\textbf{Ideal world.} In the ideal world, we consider the ideal functionality $\mathcal{F}$ that captures the security of the protocol and interacts with a dummy user $U_I$, and two agents $A_I$ and $A_R$, and a \provider \ and uses the global clock ideal functionality $\mathcal{G}_{\mathsf{clk}}$ and the global random oracle functionality $\mathcal{G}_{\mathsf{clk}}$ as its subroutine (note that in our analysis, we only consider one protocol at a time, the entities that are involved in that protocol and the ideal functionality $\mathcal{F}$ that captures the security of that protocol.) Parties receive their inputs from the environment $\mathcal{Z}$ and relay them to $\mathcal{F}$. When they receive their output from  $\mathcal{F}$, they just pass it to $\mathcal{Z}$.  We consider a simulator $\mathsf{Sim}$ that is the ideal world adversary which interacts with $\mathcal{F}$, and its goal is to simulate the protocol correctly such that $\mathcal{Z}$ cannot distinguish the ideal world protocol execution from the hybrid world. $\mathsf{Sim}$ has access to the code of the honest parties, some times denoted by $Sim^A$ for clarity, and can perform the computations internally. $\mathsf{Sim}$ can corrupt parties and in this case it gets oracle access to their codes. 

\subsection{Security Analysis of the Registration Protocol}
In this section, we show that the user and agent registration protocol, described in Appendix \ref{Appendix:UserAgentReg}, UC realizes the registration ideal functionality $\mathcal{F}_{\mathsf{Reg}}$. 

\begin{lemma}
    The registration protocol of $\pi^{\mathcal{F}_{\mathsf{SCS}}, \mathcal{F}_{\mathsf{CA}}, \mathcal{G}_{\mathsf{clk}},\mathcal{G}_h}_{\sagaplus}$ for the user and agent (described in Appendix \ref{Appendix:UserAgentReg}) UC realizes the registration ideal functionality $\mathcal{F}_{\mathsf{Reg}}$ against static adversaries. 
\end{lemma}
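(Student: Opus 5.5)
We prove the lemma by constructing a simulator $\mathsf{Sim}$ and arguing that the environment's views in the hybrid world (protocol $\pi_{Reg}$ with $\mathcal{F}_{\mathsf{SCS}}$, $\mathcal{F}_{\mathsf{CA}}$, $\mathcal{G}_h$) and in the ideal world (dummy parties with $\mathcal{F}_{\mathsf{Reg}}$) are indistinguishable. The starting point is the trust model of Section~\ref{sec:attackmodel}. The CA, the users and the \provider\ are honest; the \provider\ is only honest-but-curious, and we do not model leakage to it. Corruption is static. Consequently the only entities that can be corrupted in registration are agents, and an agent in $\pi_{Reg}$ is purely a passive recipient of key material from its (honest) user. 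The case analysis therefore reduces to two cases: no agent corrupted, and the registered agent corrupted.

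\textbf{Case 1 (all parties honest).} Here every message of $\pi_{Reg}$ runs either over $\mathcal{F}_{\mathsf{SCS}}$ instances or through $\mathcal{F}_{\mathsf{CA}}$, so the adversary sees only a few things: the $\mathcal{F}_{\mathsf{CA}}$ \emph{Registered}/\emph{Retrieve} notifications (with the public values registered), the $\mathcal{F}_{\mathsf{SCS}}$ \emph{establish-session} notices, and leakages $l(m)$ of each transmitted message.

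$\mathsf{Sim}$ proceeds as follows.
\begin{enumerate}
\item On receiving $(RegisterUser, sid, TA, U)$ from $\mathcal{F}_{\mathsf{Reg}}$, it internally runs $HS.Gen$ and the PQ-TLS key generation for $U$.
\item It emulates $\mathcal{F}_{\mathsf{CA}}$ registering $(uid_U,pk_U)$ and $(uid_U,pk^{tls}_U)$, forwarding the notifications to $\mathcal{A}$.
\item It emulates the $\mathcal{F}_{\mathsf{SCS}}$ session between $U$ and $TA$, sending $(sent, sid, l(m))$ for a dummy message $m$ of the right length.
\end{enumerate}
Agent registration ($RegisterAgent$) and $AgentPolicy$ are handled analogously. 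For agent registration, $\mathsf{Sim}$ generates $(sk_A,pk_A)$ and $(sk^{tls}_A,pk^{tls}_A)$, emulates the CA registration of $Cert^{tls}_A$, and emulates the leakage of the submission and of $\sigma^{TA}_A$. For $AgentPolicy$ it emulates only an $\mathcal{F}_{\mathsf{SCS}}$ leakage, matching the functionality's leak of $(AgentPolicy, sid, U)$.

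$\mathsf{Sim}$ answers \emph{ok} to $\mathcal{F}_{\mathsf{Reg}}$ exactly when $\mathcal{A}$ delivers the corresponding $\mathcal{F}_{\mathsf{CA}}$/$\mathcal{F}_{\mathsf{SCS}}$ messages. This keeps the record-or-not outcome aligned with the real \provider's decision, which checks uniqueness of $uid_U$, of $aid_A$ and $ED_A$, and password validity — exactly the checks $\mathcal{F}_{\mathsf{Reg}}$ performs.

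Since the key material is generated by the same algorithms, and leakages depend only on lengths (which are fixed by the parameters), the views are identically distributed. One point needs checking: message lengths could depend on the password or policy. We handle this by taking $l$ to leak lengths and noting that $\mathcal{F}_{\mathsf{Reg}}$ can be assumed to leak them too, or that fixed-length encodings are used.

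\textbf{Case 2 (agent $A$ corrupted).} When $\mathcal{F}_{\mathsf{Reg}}$ reports $(RegisterAgent, sid, U, A)$, $\mathsf{Sim}$ must additionally hand $\mathcal{A}$ the agent's local state. This state consists of $sk_A$, $sk^{tls}_A$, $M_A$, $CP_A$, $\sigma^U_{ID}$, $\sigma^U_A$ and $\sigma^{TA}_A$. $\mathsf{Sim}$ produces it by holding the honest $U$'s and $TA$'s signing keys internally, which it generated itself in Case 1, and signing honestly.

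A corrupted agent can influence registration only by injecting messages toward the \provider\ as if from $U$. This is prevented in two ways. First, $\mathcal{F}_{\mathsf{SCS}}$ authentication lets injected messages reach only corrupted parties. Second, $TA$'s verification of $\sigma^{U}_A$ and $\sigma^U_{ID}$ rejects forgeries. Any successful registration not issued by $U$ therefore yields a forgery of $U$'s signature under $pk_U$, and a standard reduction then contradicts EUF-CMA against QPT adversaries. The reduction embeds the challenge key as $pk_U$ and uses the signing oracle for the honest signatures.

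\textbf{Main obstacle.} The main obstacle is this last step: making the reduction work when $\mathsf{Sim}$ must itself emulate signatures of honest $U$ and $TA$. XMSS is stateful, so we must also ensure that $\mathsf{Sim}$ maintains the signer state consistently with the real execution. Apart from that, the argument is a routine hybrid: (i) replace real signatures by oracle signatures, and (ii) abort on forgery, which happens with probability at most $\mathsf{Adv}^{\mathrm{euf\text{-}cma}}$ per user.
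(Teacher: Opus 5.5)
Your Case 1 is essentially the paper's entire proof. The paper treats only the setting with an honest user and an honest \provider. Its $\mathsf{Sim}$ generates the provider, user and agent key pairs, emulates $\mathcal{F}_{\mathsf{CA}}$ with a self-generated CA key, and picks a dummy password. It then answers \emph{ok} to $\mathcal{F}_{\mathsf{Reg}}$ and produces $\sigma^{U}_{ID}$, $\sigma^{U}_{A}$, $\sigma^{TA}_{A}$ under the simulated keys. Your explicit emulation of the $\mathcal{F}_{\mathsf{SCS}}$ leakage is more careful than the paper's treatment. However, of your two fixes for length leakage, only fixed-length encoding is compatible with $\mathcal{F}_{\mathsf{Reg}}$ as defined. ``Assuming $\mathcal{F}_{\mathsf{Reg}}$ leaks lengths too'' changes the functionality being realized.

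The genuine gap is in Case 2, which the paper never considers. You have $\mathsf{Sim}$ hand $\mathcal{A}$ the corrupted agent's state $(sk_A, sk^{tls}_A, M_A, CP_A, \sigma^U_{ID}, \sigma^U_A, \sigma^{TA}_A)$ by ``signing honestly'' with the internally held keys. But $CP_A$ is an input that $\mathcal{Z}$ gave to the honest $U$, and the agent receives it in the clear from its user. The endpoint data $ED_A$ inside $M_A$, which is also signed in $\sigma^U_A$ and $\sigma^{TA}_A$, is likewise user-chosen. $\mathcal{F}_{\mathsf{Reg}}$ leaks only $(RegisterAgent, sid, U, A)$ and $(AgentPolicy, sid, U)$, never $attr$ or $CP_A$. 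So $\mathsf{Sim}$ cannot produce these values, and $\mathcal{Z}$ distinguishes trivially by comparing the policy and endpoint held by the corrupted agent with the ones it supplied. The paper's simulator, which sets $CP_A$ to an empty policy, would fail in exactly the same way if the agent were corrupted.

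The EUF-CMA reduction and XMSS state handling you flag as the main obstacle are routine by comparison; the real obstruction is this information gap. You can close it in one of two ways:
\begin{itemize}
\item Strengthen $\mathcal{F}_{\mathsf{Reg}}$ so that, when $A$ is corrupted, $attr$ and $CP_A$ are leaked to the adversary. This is reasonable, since a corrupted agent legitimately learns its own configuration.
\item Restrict the lemma, as the paper implicitly does, to agents that are honest during registration and corrupted only afterwards. This is consistent with the threat model's ``agents may be corrupted before communicating with other agents''. Case 2 then disappears and your Case 1 is the whole proof.
\end{itemize}
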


{\bf Proof.} We analyze the security of the registration protocol considering that the user and provider are honest. Note that we have considered the provider to be semi-honest but we do not model the leakage against the provider since our goal is to only limit the information that provider can see from the agents communication, and we do not provide any cryptographic guarantee against a semi-honest provider. Below, we give the description of simulator $\mathsf{Sim}$.

\textbf{Simulating the provider.} $\mathsf{Sim}$ generates the identity and PQ-TLS key pairs $(sk_{TS}, pk_{TA})$, and $(sk^{tls}_{TA}, pk^{tls}_{TA})$ for the provider, it also simulates $\mathcal{F}_{\mathsf{CA}}$ by storing the identities of the provider $(TA, pk_{TA})$ and $(TA, pk^{tls}_{TA})$. Note that $\mathsf{Sim}$ also simulates the provider's certificates $Cert^{tls}_{TA}$ and $Cert^{ID}_{TA}$ by signing its public keys using a generated key pair for the CA.

\textbf{Simulating the user registration.} Upon receiving $(RegisterUser, sid, TA, U)$, $\mathsf{Sim}$ generates the identity and PQ-TLS key pairs $(sk_U, pk_U)$, and $(sk^{tls}_U, pk^{tls}_U)$ for the user, it also simulates $\mathcal{F}_{\mathsf{CA}}$ by storing the identities of the user $(U, pk_U)$ and $(U, pk^{tls}_U)$. Then, it creates certificates for user's public keys $Cert^{tls}_{U}$, and $Cert^{ID}_{U}$ by signing the user's public keys using a $CA$ private key (generated by $\mathsf{Sim}$). $\mathsf{Sim}$ also chooses a random value as password $Pwd'$ and stores it, sends ok to $\mathcal{F}_{\mathsf{Reg}}$, and sends a confirmation to the user.

\textbf{Simulating the agent registration.} 
\begin{itemize}
    \item Upon receiving $(RegisterAgent, sid, U, A)$ from $\mathcal{F}_{\mathsf{Reg}}$, $\mathsf{Sim}$ generates PQ-TLS key pairs $(sk_{A}, pk_{A})$, and $(sk^{tls}_{A}, pk^{tls}_{A})$ for the agent, and simulates $\mathcal{F}_{\mathsf{CA}}$ by storing the identity of the agent $(A, Pk^{tls}_A)$. Then, it signs the identity public key of the agent using the user's simulated keys, that is, $\sigma^{U}_{ID}$. It also simulates the agent's metadata, and signs the agent's information $\sigma^U_A.$ using the user's key. Then, it sends ok to $\mathcal{F}_{\mathsf{Reg}}$, and sends the user the confirmation signature $\sigma^{TA}_{A}$ which is generated on the agent's info.
    \item Upon receiving $(AgentPolicy, sid, U)$ from $\mathcal{F}_{\mathsf{Reg}}$, $\mathsf{Sim}$ generates a contact policy $CP_A$ which is initially empty and stores it.
\end{itemize}

$\mathsf{Sim}$ fully simulates the user and agent protocol, since in this case, the user and provider are honest and $\mathcal{Z}$ only sees the output of the honest parties. The output of user in this protocol is a confirmation message and a signature $\sigma^{TA}_{A}$ that is simulated by $\mathsf{Sim}$ and it is indistinguishable from the real world values. We give the hybrid games  for the in-distinguishability of the hybrid world from the ideal world in the full version of the paper.  

\subsection{Security Analysis of Agent Discovery}
In this section, we show that the agent discovery protocol described in Section \ref{sec:sagaplus} UC realizes the \ases\ authorization ideal functionality $\mathcal{F}_{\mathsf{A\text{-}auth}}$. 

\begin{lemma}
    The agent discovery protocol of $\pi^{\mathcal{F}_{\mathsf{SCS}}, \mathcal{F}_{\mathsf{CA}}, \mathcal{G}_{\mathsf{clk}},\mathcal{G}_h}_{\sagaplus}$ (described in Section \ref{sec:one-to-one}) UC realizes the \ases\ authorization ideal functionality $\mathcal{F}_{\mathsf{A\text{-}auth}}$ against static adversaries. 
\end{lemma}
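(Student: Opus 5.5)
We will construct a simulator $\mathsf{Sim}$ that runs the registration simulator from the previous lemma internally, so that it holds consistent simulated key material for the \provider, the users, the agents and the CA. It then emulates the \provider's side of agent discovery against the adversary. Since the \provider is honest (honest-but-curious, with its leakage not modeled), the only corruption case to treat is whether the initiating agent \caler\ is honest or corrupted. The responder \caled\ does not take part in discovery, but its registration data must be consistent with $L_{AReg}$.

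\emph{Case \caler\ honest.} When $\mathcal{F}_{A\text{-}\mathsf{auth}}$ sends $(Allowed, sid, A, attr, A', attr')$ to the adversary, $\mathsf{Sim}$ produces the transcript the adversary would see in the hybrid world. Because the request and response travel over $\mathcal{F}_{\mathsf{scs}}$, this transcript is only the leakage $l(\cdot)$ of the messages $(aid_{A_I}, aid_{A_R})$ and $(info_{A_R}, \sigma^{TA}_{ac})$. $\mathsf{Sim}$ can compute both, since it holds the simulated $sk_{TA}$ and the registry, and it hands the leakage to $\mathcal{A}$ in the format of $\mathcal{F}_{\mathsf{scs}}$. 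On a rejection, the functionality outputs nothing and $\mathsf{Sim}$ emits the leakage of the \provider's failure message.

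\emph{Case \caler\ corrupted.} $\mathsf{Sim}$ extracts the request $(aid_{A_I}, aid_{A_R})$ that $\mathcal{A}$ sends to the simulated \provider and forwards $(GetAuthorization, sid, A_I, attr_I, A_R, attr_R, TA)$ to $\mathcal{F}_{A\text{-}\mathsf{auth}}$. If the functionality answers $Allowed$, $\mathsf{Sim}$ returns $(info_{A_R}, \sigma^{TA}_{ac})$ computed honestly with the simulated key and the simulated $\nu$. Otherwise it returns the failure message.

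The central indistinguishability point is that the hybrid \provider's test, ``$CP_{A_I}\cap CP_{A_R}$ permits the pair and $Counter[aid_{A_R}][aid_{A_I}]>0$, then decrement,'' coincides with the functionality's check on $ctr_{A,A'}$, which is initialized to $\min\{Budget_A,Budget_{A'}\}$. We verify this step by step: the initialization value, the $-1$ ``no rule'' case, and the decrement order all match, so both worlds accept exactly the same sequence of requests.

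The hybrid argument then has three games. (1) The real execution. (2) The same execution with $\mathcal{F}_{\mathsf{CA}}$ and all signing keys generated by $\mathsf{Sim}$; by the previous lemma this is identically distributed. (3) The same as (2), except that the authorization decision is taken from $\mathcal{F}_{A\text{-}\mathsf{auth}}$ rather than the local counter; this is identical by the bookkeeping argument above. One event still has to be excluded: a corrupted \caler\ might present a $\sigma^{TA}_{ac}$ on data the \provider never signed, for example by replaying a token for a different $A_R$, or by using more tokens than the budget allows. This would let the adversary hold an authorization that $L_{Auth}$ does not contain. We bound this event by a reduction to EUF-CMA security of $HS$ against QPT adversaries. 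The reduction embeds the challenge $pk_{TA}$, answers the \provider's signing queries through the signing oracle, and outputs any accepted token outside $L_{Auth}$ as a forgery. Freshness of $\nu$ and the inclusion of $aid_{A_I}$ and $pk_{A_I}$ in the signed message rule out replay across pairs.

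I expect this last step to be the main obstacle. Formally, $\mathcal{F}_{A\text{-}\mathsf{auth}}$ only records $L_{Auth}$, and whether a token is ``used'' is decided later by the responder in the \ases. The argument therefore has to establish that every valid token corresponds one-to-one to an $Allowed$ entry, which in particular requires the signing oracle queries to be in bijection with the counter decrements.
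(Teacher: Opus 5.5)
Your simulator has the same skeleton as the paper's proof. The \provider\ is honest, the case split is on whether the initiating agent is corrupted, and $\mathsf{Sim}$ reuses the keys of the registration simulation. It then produces $(info_{A_R},\sigma^{TA}_{ac})$ with the simulated $sk_{TA}$, either on $(Allowed,\dots)$ or on the request $(aid_{A_I},aid_{A_R})$ extracted from a corrupted agent. In places you are more careful than the paper, which has $\mathsf{Sim}$ ``return'' values to an honest agent and draws fresh random data for $A_R$. However, the honest-initiator case does not go through against $\mathcal{F}_{A\text{-}\mathsf{auth}}$ as defined. On rejection the functionality sends nothing to anyone, and on acceptance it outputs $(Allowed,\dots)$ to $A$ at once. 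In the hybrid world, by contrast, $\mathcal{F}_{\mathsf{scs}}$ gives the adversary the establish-session messages and $l(\cdot)$ of both request and reply, whose lengths reveal acceptance or rejection. It also delivers both messages only as private delayed outputs. So your step ``on a rejection $\mathsf{Sim}$ emits the leakage of the failure message'' has no trigger. Worse, if $\mathcal{Z}$ has the adversary drop the request, the hybrid $A_I$ outputs nothing and $Counter[aid_{A_R}][aid_{A_I}]$ is never decremented. In the ideal world, $A$ already holds its output and $ctr_{A,A'}$ is already decremented, so your game (3) claim that both worlds accept the same request sequence is false. Separately, the functionality's literal test $ctr_{A,A'}\le B$ before decrementing never fails, so your match with $Counter>0$ only holds if you read it as $ctr_{A,A'}>0$. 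This cannot be fixed in the simulator. The functionality must be amended to leak each request and its outcome to the adversary, and to let the adversary schedule both the counter update and the output. The paper sidesteps the issue only by asserting that $\mathcal{Z}$ has no access to the channel when both parties are honest, which is not what $\mathcal{F}_{\mathsf{scs}}$ provides.

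Second, the step you flag as the main obstacle does not belong to this lemma; the paper's proof of it uses no EUF-CMA reduction. The only honest check of $\sigma^{TA}_{ac}$ during discovery is an honest $A_I$ verifying the reply of the honest \provider, and $\mathcal{F}_{\mathsf{scs}}$ already protects that reply. A forged or replayed token held by a corrupted $A_I$ is never checked by an honest party here, so it cannot affect $\mathcal{Z}$'s view. It matters only when $A_R$ checks the token in the \ases\ handshake. That is where the paper invokes EUF-CMA, together with the responder tracking $\nu$. Your concern about a one-to-one correspondence between valid tokens and $L_{Auth}$ entries is legitimate, but it is a composition issue. $\mathcal{F}_{A\text{-}\mathsf{auth}}$ yields $(Allowed,\dots)$ and an $L_{Auth}$ entry, not a token. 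So the honest agent's protocol output must be defined accordingly, with the token kept as local state, and the \ases\ proof must bridge token and entry. When you make that argument there, note that your cross-pair replay argument needs $aid_{A_I},pk_{A_I}$ inside the signed message. That holds for the token issued in discovery, but not for the message $A_R$ verifies in Step 2 of the handshake, nor for $\sigma^{TA}_{ac}$ as given in the appendix.
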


{\bf Proof.} The agent discovery protocol  is run between an agent and a provider. The provider is honest but the agent can be corrupted. We  therefore consider two case: (i) the provider and agent are honest, and (ii) the provider is honest but the agent is corrupted. In the following, we give the description of the simulator $\mathsf{Sim}$ in this protocol considering that the registration protocols have been done as described in the previous section:

\textbf{Simulating PQ-TLS connection.} If both entities are honest, this step is not needed since environment does not have access to it. If the agent is corrupted, $\mathsf{Sim}$ simulates $\mathcal{F}_{\mathsf{SCS}}$ itself, and respond to the agent similar to $\mathcal{F}_{\mathsf{SCS}}$.

\textbf{Simulating the agent's authorization for \ases.} We consider the following cases:
\begin{itemize}
    \item {\em Case (i) Agent $A$ is honest.} Upon receiving $(Allowed, sid, A, attr, A', attr')$, $\mathsf{Sim}$ simulates the information of agent $A'$ consisting of metadata, the endpoint descriptor, certificates, and user $U'$ signatures on the identity and information of $A'$ using random values, and generating private and public key pairs for $A'$. Then, it retrieves the agent $A$' identity and PQ-TLS public keys (note $\mathsf{Sim}$ can retrieve correct values since it simulates $\mathcal{F}_{\mathsf{CA}}$ and we assume the registration protocol have been run beforehand) and generates $\sigma^{TA}_{ac}$ on the $A'$ information and public keys of $A$. It then returns the $A'$ information and $\sigma^{TA}_{ac}$ to agent $A$.
    \item {\em Case (ii) Agent $A$ is corrupted.} In this case, $\mathsf{Sim}$ waits to receive $(aid_A, aid_{A'})$ from $A$ and uses $aid_A$ (which incorporates $Aid_U$) and $aid_U$ received from $A$, rather than randomly generated values for user and agent identities. It retrieves the information of $A$ and $A'$ and if they have not registered yet, it aborts. The rest of the simulation is similar to above. 
\end{itemize}

$\mathsf{Sim}$ fully simulates the agent discovery protocol; the reason is that $\mathsf{Sim}$ can simulate the view of the corrupted agent (by simulating $\mathcal{F}_{\mathsf{SCS}}$ and $\mathcal{F}_{\mathsf{CA}}$ internally) and the output of the agent (which is the agent information and provider's signature) correctly using randomly generated values and the information it receives from the corrupted agent. We give the hybrid games  for the in-distinguishability of the hybrid world from the ideal world in the full version of the paper. 

\subsection{Security Analysis of the  \ases~Protocol}
In this section, we show that the \ases protocol described in Section \ref{sec:one-to-one} (Figure \ref{fig:Asession}) between two agents $A_I$ and $A_R$ UC realizes the \ases ideal functionality \Fases. 

\begin{lemma}
    The \ases \ protocol (see Figure \ref{fig:Ases}) of $\pi^{\mathcal{F}_{\mathsf{scs}}, \mathcal{F}_{\mathsf{CA}}, \mathcal{G}_{\mathsf{clk}},\mathcal{G}_h}_{\sagaplus}$ (described in Section \ref{sec:one-to-one}) UC realizes the \ases \ ideal functionality \Fases \ against static adversaries. 
\end{lemma}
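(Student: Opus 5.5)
We construct a simulator $\mathsf{Sim}$ that internally runs the registration and agent-discovery simulators from the two preceding lemmas. This keeps the keys, certificates (via an internally simulated $\mathcal{F}_{\mathsf{CA}}$), policies and $L_{Auth}$ entries consistent. $\mathsf{Sim}$ also emulates $\mathcal{F}_{\mathsf{scs}}$ towards $\mathcal{A}$ and forwards $\mathcal{G}_h$ and $\mathcal{G}_{\mathsf{clk}}$ queries to the global functionalities. We split into three static-corruption cases: (i) both \caler\ and \caled\ are honest; (ii) \caler\ is corrupted and \caled\ is honest; (iii) \caler\ is honest and \caled\ is corrupted. If both are corrupted, \Fases\ gives no guarantee and $\mathsf{Sim}$ just runs $\mathcal{A}$.

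\emph{Case (i).} $\mathcal{Z}$ sees only $\mathcal{F}_{\mathsf{scs}}$ leakage. On each $(Request,\ldots,\ell(\cdot))$ or $(Response,\ldots,\ell(\cdot))$ leak from \Fases, $\mathsf{Sim}$ emits a $(sent, sid, l(m))$ of matching length. It derives that length from $\ell$ plus the fixed sizes of signatures, tokens and tags. Budget and time aborts occur in the same rounds in both worlds, because honest agents in the real protocol decrement the same counters ($Q=\min\{Q_I,Q_R\}$) and compare against $\mathcal{G}_{\mathsf{clk}}$ exactly as \Fases\ does. Whenever $\mathcal{A}$ drops a delivery, $\mathsf{Sim}$ sends abort or terminate.

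\emph{Cases (ii) and (iii).} $\mathsf{Sim}$ plays the honest agent's code toward $\mathcal{A}$. It uses real simulated keys, so it can produce $\sigma^{U}_{ICP}$ or $\sigma^{U_R}_{RCP}$, the PRF-seeded hash chains, $k_{ses}=H(r_1,r_2)$ and HMAC tags. From the leaked $Q,\Delta$ values it learns the budgets it needs. For each honest-side message, $\mathsf{Sim}$ obtains the payload from the leakage. When $A$ is corrupted, $\ell$ reveals the content to the corrupted party, as the functionality intends. $\mathsf{Sim}$ then builds $(m_i,tag_i)$. For each message from the corrupted side, $\mathsf{Sim}$ performs the honest receiver's checks:
\begin{itemize}
\item $\sigma^{\mathsf{TA}}_{ac}$ and the user signature on the chain root;
\item the hash step $s_{n-i+1}=H(s_{n-i},n-i+1,sid,aid)$ via $\mathcal{G}_h$;
\item the HMAC tag;
\item the counters and $\tau$ from $\mathcal{G}_{\mathsf{clk}}$.
\end{itemize}
On success, $\mathsf{Sim}$ extracts $req_i$ or $res_i$ and feeds it to \Fases\ as the corrupted party's input. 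On failure, it sends abort, which makes \Fases\ attribute the abort to the corrupted agent. A corrupted agent's claimed policy is installed via $ChangePolicy$.

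Indistinguishability goes through a hybrid sequence. $H_0$ is the real execution. $H_1$ aborts if $\mathcal{A}$ presents a valid signature under an honest user, agent or provider key on a message never signed. The gap is bounded by EUF-CMA (against QPT adversaries) of $\mathsf{HS}$, via a reduction that guesses the key. $H_2$ aborts if $\mathcal{A}$ outputs a valid chain preimage $s_{j-1}$ for an honest agent's chain before it was revealed, or reuses a chain across $sid$ or $aid$. This is bounded by $\mathcal{G}_h$ preimage/collision probability $q^2/2^{\lambda}$, using the $isPrgrmd$ check to rule out programmed values. This step is what forces message counts to respect $Q$ even against a corrupted peer, since the corrupted side cannot mint tokens beyond the signed root. $H_3$ aborts on HMAC forgeries and replays or reorderings; these are bounded by random-oracle unpredictability of $k_{ses}$, whose half $r_1$ or $r_2$ is honest and hidden by $\mathcal{F}_{\mathsf{scs}}$. $H_4$ replaces honest payloads with leakage-derived encodings, which is identical because $\mathcal{F}_{\mathsf{scs}}$ reveals only $l(m)$. $H_4$ equals the ideal execution with $\mathsf{Sim}$.

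The main obstacle is case (ii)/(iii) round alignment. A corrupted peer may pause, span several $\mathcal{F}_{\mathsf{scs}}$ instances, or abort early, and its counter values may mismatch. $\mathsf{Sim}$ must map every real-world deviation to exactly one ideal abort or terminate at the same round, with the same time check. My approach is to argue per round that the honest party's real accept/reject decision is a deterministic function of the checks above. Under $H_3$, accepting coincides with \Fases\ continuing. The only subtle point is the effective time budget $\Delta_h$ of the honest agent, which I would handle by having $\mathsf{Sim}$ read $\mathcal{G}_{\mathsf{clk}}$ at the same activation points.
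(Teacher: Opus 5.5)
\textbf{Gap: where the simulator gets the honest agent's plaintext.} Much of your proof matches the paper: the case split, the reuse of the registration and discovery simulators, the receiver-side checks you re-run on the corrupted agent's messages, and the use of $\mathit{ChangePolicy}$. The problem is how your simulator obtains the \emph{plaintext} of the honest agent's messages in cases (ii) and (iii).

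You assume that when one agent is corrupted, $\ell$ hands the content to $\mathcal{A}$. But $\mathcal{F}_{\mathsf{Ases}}$ is specified to leak only the budgets in that case, plus metadata; the paper's simulator receives exactly $\ell=[Q^*,\Delta^*]$.

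\begin{itemize}
\item In case (iii), the honest $A_I$'s requests are produced by $\mathsf{ExeRes}$ on the task $tsk$ that $\mathcal{Z}$ chose. The corrupted $A_R$ sees them in the clear in the real world. Your simulator has no way to produce $req_i$, so $\mathcal{Z}$ distinguishes at the first request.
\item In case (ii), you could regenerate $res_i$ by running the honest responder's $\mathsf{ExeReq}$ on the adversary's request, as the paper does, but you do not say so.
\item Also in case (ii), you never say how $\mathcal{F}_{\mathsf{Ases}}$ gets started. It does nothing until $(\mathit{Task},sid,tsk)$ arrives from $A_I$; that is where it determines $A_R$, $Q=\min\{Q_I,Q_R\}$ and $T_{exp}$. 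Feeding it extracted requests one by one does not replace that input.
\end{itemize}

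\textbf{The missing idea.} The functionality is built around the fact that $sid$ contains $\mathcal{G}_h(tsk)$. The paper's simulator:
\begin{itemize}
\item uses the $\mathit{Observe}$ interface of $\mathcal{G}_h$ to recover $tsk$ from $sid$;
\item in case (ii), sends $(\mathit{Task},sid,tsk)$ to $\mathcal{F}_{\mathsf{Ases}}$ on behalf of the corrupted $A_I$;
\item in both cases, runs the honest agent's $\mathsf{ExeRes}$/$\mathsf{ExeReq}$ internally to regenerate exactly the request or response the corrupted party expects, while telling $\mathcal{F}_{\mathsf{Ases}}$ only ok or abort per round.
\end{itemize}
If you want to keep your route, you must instantiate $\ell$ to output the plaintext whenever the recipient is corrupted. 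That is a stronger leakage than the one $\mathcal{F}_{\mathsf{Ases}}$ is specified with, so you would be proving a different statement.

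\textbf{Smaller errors in your hybrids.}
\begin{itemize}
\item Your $H_3$ relies on $k_{ses}$ having an honest, hidden half. In cases (ii) and (iii), the corrupted peer is an endpoint of $\mathcal{F}_{\mathsf{scs}}$ and learns both $r_1$ and $r_2$, so the HMAC gives nothing against it. Replays and reorderings by the peer must instead be excluded by the honest side's counters and hash-chain positions.
\item A corrupted sender's own chain is bounded by the honest user's signature on the root and length (your $H_1$) together with the honest receiver's counter. It is not bounded by one-wayness, which only protects honest chains.
\end{itemize}
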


{\bf Proof.} We consider three cases: (i) both $A_I$ and $A_R$ are honest, (ii) only $A_I$ is corrupted, and (iii) only $A_R$ is corrupted. We give the simulations below:

\textbf{Case (i)} is straightforward since $\mathcal{Z}$ only sees the output of honest agents. Assuming that the registration and agent discovery protocols have been simulated as described in previous subsections, and \Fases \ simulates the request and responses internally, the view of $\mathcal{Z}$ is indistinguishable from the hybrid world. Note that $\mathsf{Sim}$ waits to receive  $(Task, sid, A_I, attr_I)$ from \Fases, and replies $Ok$. Then, whenever \Fases \ aborts or terminates, it also aborts and terminates. 

Below we only focus on the case where one of the agents is corrupted. 
$\mathsf{Sim}$ receives messages from the ideal functionality \Fases \ and the corrupted agent. Based on these messages, it has 
to simulate the output of the honest parties and the full view of the corrupted parties. 
Note that $\mathsf{Sim}$ simulates $\mathcal{F}_{\mathsf{scs}}$ and $\mathcal{F}_{\mathsf{CA}}$ internally, and we assume it simulates the registration and agent discovery protocols as described in previous subsections.

\textbf{Case (ii):  $A_I$ is corrupted.} In this case, $\mathsf{Sim}^{A_I}$ has oracle access to the code of $A_I$, and it has to simulate the view of $A_I$ and the output of the honest $A_R$. For that, $\mathsf{Sim}^{A_I}$ translates the messages it receives from the corrupted $A_I$ for the ideal functionality \Fases and vice versa. It also simulates $A_R$'s output.

\textbf{Handshake.} $\mathsf{Sim}^{A_I}$ waits until agent $A_I$ initiates an \ases.
\begin{itemize}
    \item Upon receiving the message $(m_0, \sigma^{TA}_{ac}, \sigma^{A_I}_{init}, \sigma^{U_I}_{ICP})$ from $A_I$,  $\mathsf{Sim}^{A_I}$ parses $m_0=(r_1, info_{A_I}, Q_{I,R}, \Delta_{I,R}, NextTk^{ICP}_0)$ It sends $(ChangePolicy, sid, APolicy)$ from the user $U$ to \Fases\ where $APolicy$ is $[(A_I, A_R), Q_{I,R}, \Delta_{I,R}$.  Note that $\mathsf{Sim}^{A_I}$ simulates $\mathcal{F}_{\mathsf{SCS}}$.
    It then verifies the validity of $\sigma^{TA}_{ac}$, $\sigma^{A_I}_{init}$, $\sigma^{U_I}_{ICP}$, and  $info_{A_I}$, if they are valid it parses $sid$ and gets $\mathcal{G}_{h}(tsk)$; Then, sends $Observe$ to $\mathcal{G}_h$ and retrieves $tsk$. It then sends $(Task, sid, tsk)$ from $A_I$ to \Fases. Else, it sends $(Task, sid, \perp)$ to \Fases.
\item Upon receiving the message $(Response, sid, A_R, attr_R, \ell=[Q^*_R, \Delta^*_R])$ from \Fases, store the message, and  choose a symmetric key $r_2$, compute $k_{ses}=\mathcal{G}_h(r_1, r_2)$. Also compute the seed $\rho_0$ through $\mathcal{G}_h$ and compute the hash chain based on \tmsg $Q_{R}$. Then, let $NextTok^{RCP}_{0}=(\rho_{Q^*_{R}},\rho_{Q*_{R}-1})$, send $m_1=r_2,Q^*_R, \Delta^*_R,NextTok^{RCP}_1, NextTok^{ICP}_1$. Also, compute a tag $tag_1$ using $\mathcal{G}_h$ on $m_1$ and $k_{ses}$, and send to $A_I$. 
\end{itemize}

\textbf{Message transmission.} 
\begin{itemize}
    \item  Upon receiving $(m_i, tag_i)$ from $A_I$, $\mathsf{Sim}^{A_I}$ parses $m_i=(req_i, NextTok^{ICP}_i, NextTok^{RCP}_i)$, then it checks the validity of $tag_i$ on $m_i$ and $k_{ses}$ by querying $\mathcal{G}_h$. 
    Also, program $\mathcal{G}_h$ by sending $(program, sid, NextTok^{ICP}_i, NextTok^{ICP}_{i-1})$. If $\mathcal{G}_h$ aborts or the checks do not pass, upon receiving  $(Request, sid, A_I, attr_I, \ell)$ abort. Else, send ok to \Fases. Additionally, if \Fases aborts, $\mathsf{Sim}^{A_I}$ also aborts and terminates.
    \item Upon receiving a $(Response, sid, A_R, attr_R, \ell)$, $\mathsf{Sim}^{A_I}$ simulates the response to the request $req_{i+1}$.To simulate the responder, $\mathsf{Sim}^{A_I}$ executes the request internally using $ExeReq(A_R, req_{i+1})$ and gets the response $res_{i+1}$.  
    Then, it generates $m_{i+1}=(res_{i+1}, NextTok^{ICP}_{i+1}, NextTok^{RCP}_{i+1})$, and computes the tag using $\mathcal{G}_h$ and $k_{ses}$. 
    If $res_{i+1} \neq \perp$ send $ok$ to \Fases. Else, aborts.   
\end{itemize}
Note that if during the simulation $\mathcal{F}{scs}$ aborts, $\mathsf{Sim}^{A_I}$ simulates another instance of $\mathcal{F}_{\mathsf{scs}}$. 
Also, whenever \Fases aborts, $\mathsf{Sim}^{A_I}$ aborts too.

\textbf{Case (iii):  $A_R$ is corrupted.} In this case, $\mathsf{Sim}^{A_R}$ has oracle access to the code of $A_R$, and it has to simulate the view of $A_R$ and the output of the honest $A_I$. For that, $\mathsf{Sim}^{A_R}$ translates the messages it receives from the corrupted $A_R$ for the ideal functionality \Fases and vice versa. It also simulates $A_I$ internally.

\textbf{Handshake.} $\mathsf{Sim}^{A_R}$ waits until agent $A_I$ initiates establishing the session with $A_R$.
\begin{itemize}
    \item Upon receiving the message $(Request, sid, A_I, attr_I, \ell=[Q^*_I,\Delta^*_I])$ from \Fases, $\mathsf{Sim}^{A_R}$ simulates the establishment of a secure communication session $\mathcal{F}_{\mathsf{scs}}$ between $A_I$ and $A_R$. 
    It also parses $sid$ and extracts the task $tsk$ by sending $Observe$ to $\mathcal{G}_{h}$, and stores $tsk$.
    It  then simulates $(m_0, \sigma^{TA}_{ac}, \sigma^{A_I}_{init}, \sigma^{U_I}_{ICP})$ and send it to $A_R$, where $m_0=(r_1, info_{A_I}, Q^*_I, \Delta^*_I, NextTk^{ICP}_0)$ and $r_1$ is chosen uniformly at random. Also, $info_{A_I}$ is generated based on the information $\mathsf{Sim}^{A_R}$ has obtained during the previous protocols such as registration and agent discovery, or generated randomly from the identical distribution as the real values. Next, send ok to \Fases. If \Fases \ aborts, $\mathsf{Sim}^{A_R}$ aborts and terminates.  
\item Upon receiving $(m_1, tag_1)$  from $A_R$, $\mathsf{Sim}^{A_R}$ parses $m_1=(r_2, Q_{R,I},\Delta_{R,I}, NextTok^{RCP}_{1}, NextTok^{ICP}_{1})$,  and verifies that $tag_{1}$ and $NextTok^{ICP}_1$ are valid. 
Also, parse $NextTok^{RCP}_{1}=[s_{Q^*_R}, s_{Q^*_R-1}]$ and program $\mathcal{G}_{h}$ to output $s_{Q^*_R}$ when queried by $s_{Q^*_R-1}$. If the checks pass and $\mathcal{G}_h$ does not abort, send ok to \Fases, else aborts and terminates. 
\end{itemize}

\textbf{Message transmission.} 
\begin{itemize}
    \item Upon receiving a $(Request, sid, A_I, attr_I, \ell)$, $\mathsf{Sim}^{A_R}$ simulates the request $req_i$. To simulate the request, $\mathsf{Sim}^{A_I}$ executes the response internally using $ExeRes(A_R,tsk, req_i)$ and gets the response $res_{i}$.  
    Then, it generates $m_{i}=(req_{i}, NextTok^{ICP}_{i}, NextTok^{RCP}_{i})$, and computes the tag using $\mathcal{G}_h$ and $k_{ses}$. 
    If $req_{i} \neq \perp$, it sends $ok$ to \Fases. Else, aborts.
    \item  Upon receiving $(m_{i+1}, tag_{i+1})$ from $A_R$, $\mathsf{Sim}^{A_R}$ parses $m_{i+1}=(req_{i+1}, NextTok^{ICP}_{i+1}, NextTok^{RCP}_{i+1})$, then it checks the validity of $tag_{i+1}$ on $m_i$ and $k_{ses}$ by querying $\mathcal{G}_h$. 
    Also, program $\mathcal{G}_h$ by sending $(program, sid, NextTok^{RCP}_{i+1}, NextTok^{RCP}_{i})$. If $\mathcal{G}_h$ aborts or the checks do not pass, upon receiving  $(Response, sid, A_R, attr_R, \ell)$ abort. Else, send ok to \Fases. Additionally, if \Fases aborts, $\mathsf{Sim}^{A_R}$ also aborts and terminates.
\end{itemize}
Note that if during the simulation $\mathcal{F}{scs}$ aborts, $\mathsf{Sim}^{A_R}$ simulates another instance of $\mathcal{F}_{\mathsf{scs}}$. 
Also, whenever \Fases \ aborts, $\mathsf{Sim}^{A_R}$ aborts too. 

The above simulation completely simulates the hybrid \ases \ protocol since the simulator can extract the task, run the code of the honest agents on the task, and simulate the task messages correctly. Additionally, we assume that the signature scheme is EUF-CMA secure, so the corrupted agent cannot forge them, and  the HMAC, hash function, and PRF are modeled by a random oracle, which means that their output are indistinguishable from random values. We give the hybrid games for the in-distinguishability of the hybrid world from the ideal world in the full version of the paper. 

\subsection{Security Analysis of the  \cses \ Protocol}
In this section, we show that the \cses protocol described in Section \ref{sec:one-to-many} (Figure \ref{fig:Asession}) between the \masterw \ agent \master \ and a set of $A_R$ UC realizes the \cses \ ideal functionality \Fcses. 

\begin{lemma}
    The \cses \ protocol (see Figure \ref{fig:Cses}) of $\pi^{\mathcal{F}_{\mathsf{scs}}, \mathcal{F}_{\mathsf{CA}}, \mathcal{G}_{\mathsf{clk}},\mathcal{G}_h}_{\sagaplus}$ (described in Section \ref{sec:one-to-one}) UC realizes the \cses \ ideal functionality \Fcses \ against static adversaries assuming \ases \ $\pi_{Ases}$ UC realizes \Fases. 
\end{lemma}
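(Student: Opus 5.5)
We work in the \Fases-hybrid model, which the assumption that $\pi_{Ases}$ UC-realizes \Fases\ allows. By the composition theorem (in its global-functionality form, since $\mathcal{G}_{\mathsf{clk}}$ and $\mathcal{G}_h$ are shared), it then suffices to show that the one-to-many protocol $\pi_{Cses}$ UC-realizes \Fcses\ when each component \ases\ is replaced by an instance of \Fases. The only genuinely new cryptographic objects relative to the previous lemma are three: the user-signed pair $(\mathit{Mroot}, n')$, the Merkle proofs, and the orchestrator-side counting of the total budget \Nmaster\ and the time bound \Tmaster\ across instances.

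We construct $\mathsf{Sim}$ by case analysis on corruption, following the proof sketch of the main-text lemma.

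\textbf{Case (i): all agents honest.} The environment sees only outputs. $\mathsf{Sim}$ answers $(Task,\dots)$ with ok. For each leakage message $(Request/Response, sid, A_i, \ell)$ from \Fcses, it emulates the corresponding \Fases\ instance on that leakage. It aborts or terminates exactly when \Fcses\ does.

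\textbf{Case (ii): only \master\ corrupted.} $\mathsf{Sim}$ extracts everything it needs from what the corrupted orchestrator sends. It extracts the subtask and $(APolicy)$ for each component \ases\ from the messages \master\ sends to its emulated \Fases\ instances, and issues the corresponding $(ChangePolicy, \dots)$ and $(Task, \dots)$ inputs to \Fcses; $tsk$ is recovered through $\mathsf{Observe}$ on $\mathcal{G}_h$, as in the \ases\ lemma. It checks that the Merkle proof for the chain root $\rho^i_{n'}$ verifies against a signed $\mathit{Mroot}$, and forwards a request to \Fcses\ only if the check passes. Responses of honest $A_j$ are produced by running $\mathsf{ExeReq}$ internally.

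\textbf{Case (iii): all $A_R$ corrupted.} $\mathsf{Sim}$ receives the task via \Fcses\ leakage and $\mathcal{G}_h$, runs the honest orchestrator's $\mathsf{ExeRes}$ internally to obtain the static schedule and the subtasks, and plays each emulated \Fases\ toward the corrupted responders. Any abort from a corrupted responder is relayed to \Fcses\ together with that responder's identity, which matches the accountability output.

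Indistinguishability goes through a short sequence of hybrid games.
\begin{enumerate}
\item[$H_0$.] The real execution in the \Fases-hybrid model.
\item[$H_1$.] Abort if a corrupted \master\ presents a $(\mathit{Mroot}', n'')$ pair carrying a valid user signature that $U^*$ never signed. This game is indistinguishable from $H_0$ by EUF-CMA of the signature scheme against QPT adversaries; the reduction embeds $pk_{U^*}$.
\item[$H_2$.] Abort if a corrupted \master\ produces a valid Merkle proof for a leaf that is not among the committed $\rho^i_{n'}$. Such a proof yields a collision in $\mathcal{G}_h$, and programmed collisions are excluded by the $\mathsf{isPrgrmd}$ check, so this game is indistinguishable from $H_1$.
\item[$H_3$.] Replace the per-session counters by the global counter $ctr$ of \Fcses. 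In $H_2$, each leaf authorizes at most $n'$ tokens: extending a chain beyond its committed terminal value requires inverting $\mathcal{G}_h$. There are exactly $m$ leaves and $Q^*_{tot} = m n'$, so the sum of the per-\ases\ counters is bounded by $Q^*_{tot}$. Hence enforcement by $ctr$ coincides with the union of the \Fases\ checks except with negligible probability.
\item[$H_4$.] The ideal execution. Time budgets carry over because each $T'_{exp}$ is computed from $\mathcal{G}_{\mathsf{clk}}$, which $\mathsf{Sim}$ cannot forge, and is checked against $T_{exp}$ by the honest party.
\end{enumerate}

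The main obstacle is the step from $H_2$ to $H_3$: showing that the global \tmsg\ budget is enforced even though no single honest responder sees the whole transcript, and that a corrupted \master\ cannot reuse one committed chain with two responders. For this I would rely on the personalization of chains by $\mathit{sid}_j$, $\mathit{aid}_{A_j}$ and the step index. Because of it, a chain valid for $A_j$ is, under $\mathcal{G}_h$, unrelated to any chain for $A_{j'}$, so reuse across responders again reduces to finding a preimage or collision in $\mathcal{G}_h$. A secondary difficulty is matching the round structure in case (ii). I would handle it the same way as in the \ases\ lemma, by letting $\mathsf{Sim}$ drive \Fcses\ request by request using the extracted subtasks.
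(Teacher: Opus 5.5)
Your simulator skeleton is the paper's: the same three corruption cases; one emulated $\mathcal{F}_{A\text{-}\mathsf{auth}}$ and \Fases\ per component session; subtasks taken from the corrupted \master\ in case (ii) and otherwise obtained by running the honest orchestrator's code; aborts relayed. The gap is in the hybrid argument you build on top of it, which mixes two incompatible models.

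Once you use the hypothesis to replace every component \ases\ by \Fases, the objects of $H_1$--$H_3$ are gone. In $\pi_{Cses}$ the chain tokens, the Merkle proof and $\sigma^{U^*}_{ICP}$ travel inside each component handshake as its ICP token. \Fases\ has no interface through which a responder checks membership in a signed $\mathit{Mroot}$. It counts against stored policies, which a corrupted agent may overwrite with $\mathit{ChangePolicy}$. So in the \Fases-hybrid world a corrupted \master\ \emph{can} give its sessions budgets summing to more than \Nmaster, and $H_3$ is false there.

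If you instead keep the concrete handshakes so that $H_1$--$H_3$ make sense, the component sessions are no longer instances of $\pi_{Ases}$: their ICP is authorized by a Merkle proof, not by $\sigma^{U}_{ICP}$ on one chain root. The stated hypothesis then does not apply, and the \ases\ simulation must be redone. The per-leaf bound also does not come from one-wayness. The seed $\mathsf{PRF}(\mathit{sid},\ldots,\mathit{addr})$ is modeled by $\mathcal{G}_h$ and recomputed by the responder, so the whole chain is public. The bound comes from the honest responder's counter, initialized to the signed $n'$, plus responder-side bookkeeping of consumed leaves, which you would have to make explicit.

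$H_4$ fails in case (ii) for a similar reason. $\sigma^{U^*}_{ICP}$ covers only $(\mathit{Mroot}, n')$, and the time budget in $m_0$ is signed by the agent itself. No honest responder knows when the \cses\ started, so nothing in the real protocol stops a corrupted \master\ from opening a component session after $T_{exp}$. Unforgeability of $\mathcal{G}_{\mathsf{clk}}$ is irrelevant when no honest party performs that check.

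What actually makes case (ii) simulatable is that \Fcses\ lets the adversary replace a corrupted agent's policy, including $[(A^*_I,\perp), Q_{tot}, \Delta_{tot}]$. The simulator can make the orchestrator's global budgets permissive via $\mathit{ChangePolicy}$. It then injects aborts exactly when an emulated honest responder's per-session checks fail. In cases (i) and (iii), the global budgets are enforced by the honest orchestrator's own bookkeeping and need no cryptographic argument. This is why the paper's proof stays entirely at the level of emulated \Fases\ and $\mathcal{F}_{A\text{-}\mathsf{auth}}$ instances and never touches the Merkle tree.

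Finally, your simulator cannot issue $(\mathit{Task},\ldots)$ to \Fcses, because there it is an input of the honest user $U^*$. What the simulator receives from a corrupted \master\ are the Task inputs of the emulated \Fases\ instances.
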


\subsubsection{Proof.} We consider three cases: (i) all agents are honest, (ii) only \master \ is corrupted, and (iii) all $A_R$'s are corrupted. We give the simulations below:

{\bf Case (i).}  This case is straightforward; when all agents are honest, $\mathsf{Sim}$ simulates the ideal functionalities \Fases \ and $\mathcal{F}_{\mathsf{A\text{-}auth}}$, and the output of honest agents correctly; it receives the task $tsk$ from the dummy \master \ and it executes it internally and gets the subtasks that should be simulated in each \Fases\ (see case (iii)).

{\bf Case (ii).}   When \master \ is corrupted, $\mathsf{Sim}^{A^*_I}$ simulates the ideal functionalities \Fases \ and $\mathcal{F}_{\mathsf{A\text{-}auth}}$, and the view of the corrupted \masterw \ (that is same as the simulated ideal functionalities) and output of honest agents correctly (see below):

\begin{itemize}
\item {\bf Simulating each $\mathcal{F}_{\mathsf{A\text{-}auth}}$.} $\mathsf{Sim}^{A^*_I}$ waits to receive \\$(GetAuthorization, sid, A^*_I, attr_I, A_i, attr_i)$ for $A_i \in A_R$, and checks whether $A^*_I$ and $A_i$ are registered  and their identities are in $CP_{A^*_I} \cap CP_{A_i}$. If so, it retrieves the counter and checks wether they have enough \ases \ budget. If so, it sends $(Allowed, sid,A^*_I, attr_I, A_i, attr_i)$ to $A^*_I$ and stores it in $L_{Auth}$.

\item {\bf Simulating each \Fases.} For each \ases \, $\mathsf{Sim}^{A^*_I}$ waits to receive $(AgentPolicy, sid, APolicy)$ for $A^*_I$ and $A_i$, and stores $APolicy$'s.  Upon receiving $(Task, sid, subtsk)$ from the corrupted $A^*_I$, it internally runs the task execution on $subtsk$, 
and returns the result and terminates when the task execution completes. Also, note that whenever \master \ aborts, $\mathsf{Sim}^{A^*_I}$ aborts and terminates.

\end{itemize}

{\bf Case (iii).}   When $A_i$ is corrupted, $\mathsf{Sim}^{A_i}$ simulates the ideal functionalities \Fases \ and $\mathcal{F}_{\mathsf{A\text{-}auth}}$, and the view of the corrupted $A_i$ (that is same as the simulated \Fases) and output of \master \ correctly (see below):

\begin{itemize}
\item {\bf Receiving the policy and task from $A^*_I$.} $\mathsf{Sim}^{A_i}$ waits to receive $(AgentPolicy, sid, APolicy)$ for $A^*_I$ and $A_i \in A_R$, and stores $APolicy$'s.   Upon receiving $(Task, sid, tsk)$ from dummy $A^*_I$, it internally runs the task execution on $tsk$ and gets the schedule and the $subtask$'s for each responding agent $A_i$.

\item {\bf Simulating each $\mathcal{F}_{\mathsf{A\text{-}auth}}$.} $\mathsf{Sim}^{A_i}$ simulates the $\mathcal{F}_{\mathsf{A\text{-}auth}}$ for each $A_i$; it checks whether $A^*_I$ and $A_i$ are registered  and their identities are in $CP_{A^*_I} \cap CP_{A_i}$. If so, it retrieves the counter and checks wether they have enough \ases \ budget. If the check passes, it sends $(Allowed, sid,A^*_I, attr_I, A_i, attr_i)$ to $A^*_I$ and stores it in $L_{Auth}$.

\item {\bf Simulating each \Fases.} $\mathsf{Sim}^{A_i}$ simulates \Fases \ for each $subtask$ with $A_i$, and returns the result and terminates when the task execution completes. Whenever $A_i$ aborts, $\mathsf{Sim}^{A_i}$ aborts and terminates.

\end{itemize}

The above simulation completely simulates the hybrid \cses \ protocol. We give the hybrid games  for the in-distinguishability of the hybrid world from the ideal world in the full version of the paper. 

Our analysis can be extended to the dynamic setting as well. We leave the complete proof for this setting to the full version of the paper.


\end{document}